%% file: main.tex
\documentclass[11pt]{article}
\input{command}

\title{Toward Optimal Switching Regret for Multi-Armed Bandits with Oblivious Adversary}
\author{  Mengxiao Zhang \\
  University of Iowa \\  \texttt{mengxiao-zhang@uiowa.edu} \\}
\date{}

\begin{document}
\maketitle
\input{sections/00_abstract}
\input{sections/01_introduction}
\input{sections/02_related_work}
\input{sections/03_preliminaries}
\input{sections/04_algorithm}
\input{sections/06_proof}
\input{sections/07_conclusion}

\paragraph{Acknowledgments.}
The author used GPT-6 for assistance with writing and to explore proof strategies. The author has checked the arguments and takes full responsibility for the content of the paper.

\begingroup
\small
\bibliographystyle{plainnat}
\bibliography{references}
\endgroup

\appendix
\input{appendices/appendix_a_auxiliary}
\end{document}

%% file: command.tex
\usepackage[margin=1in]{geometry}
\usepackage[T1]{fontenc}
\usepackage{times}

\let\switchingTextCedilla\c
\usepackage{microtype}
\usepackage{amsthm}

\newtheorem{theorem}{Theorem}[section]

\newtheorem{lemma}[theorem]{Lemma}

\theoremstyle{remark}

\theoremstyle{plain}

\makeatletter
\AtBeginDocument{%
	\patchcmd{\algocf@caption@algo}{\hyper@refstepcounter{algocf}}{}{}%
	{\PackageWarning{switching-template}{Algorithm anchor patch not applied}}%
}
\makeatother

\AtBeginDocument{%
	\let\switchingMathC\c
	\DeclareRobustCommand{\c}{%
		\ifmmode\expandafter\switchingMathC
		\else\expandafter\switchingTextCedilla\fi}%
}

\newif\ifspacehack
\usepackage{natbib}
\usepackage{hyperref}
\usepackage{url} 
\usepackage{graphicx}
\usepackage{mathtools}
\usepackage{footnote}
\usepackage{float}
\usepackage{xspace}
\usepackage{multirow}
\usepackage{xcolor}
\usepackage{wrapfig}
\usepackage{framed}
\usepackage{bbm}
\usepackage{footnote}
\usepackage{nicefrac}
\usepackage{makecell}
\usepackage[algo2e,vlined]{algorithm2e} 
\usepackage{algorithm}
\usepackage{amssymb}
\usepackage{bm}
\makesavenoteenv{tabular}
\makesavenoteenv{table}

\newcommand{\calD}{{\mathcal{D}}}

\newcommand{\cF}{{\mathcal{F}}}
\newcommand{\cA}{{\mathcal{A}}}
\newcommand{\cX}{{\mathcal{X}}}

\newcommand{\cV}{{\mathcal{V}}}

\newcommand{\Reg}{\mathbf{\mathbf{Reg}}}

\newcommand{\E}{{\mathbb{E}}}

\newcommand{\1}{\mathbb{I}}

\newcommand{\order}{\mathcal{O}}

\newcommand{\Pp}{\mathbb{P}}
\newcommand{\cG}{\mathcal{G}}
\newcommand{\cH}{\mathcal{H}}
\newcommand{\cU}{\mathcal{U}}
\newcommand{\Bern}{\operatorname{Bernoulli}}
\newcommand{\logp}{\log^{+}}
\newcommand{\ip}[2]{\left\langle #1,#2\right\rangle}

\RestyleAlgo{ruled}
\SetAlgoVlined
\DontPrintSemicolon
\SetAlFnt{\small}
\SetAlgoInsideSkip{smallskip}

\newcommand{\Var}{\mathrm{Var}}

\newcommand{\KL}{\text{\rm KL}}

\newcommand{\otil}{\ensuremath{\tilde{\mathcal{O}}}}

\usepackage{lipsum,booktabs}
\usepackage{amsmath,amssymb,amsfonts,bm,enumitem}
\usepackage{rotating}
\usepackage{pdflscape}
\usepackage{hyperref,url}
\hypersetup{
    colorlinks,
    breaklinks,
    linkcolor = blue,
    citecolor = blue,
    urlcolor  = blue,
}
\allowdisplaybreaks
\usepackage{appendix}
\usepackage{multirow,makecell}

\usepackage{algorithmic,algorithm}

\renewcommand{\tilde}{\widetilde}

\def \E {\mathbb{E}}

\def \b {\mathbf{b}}
\def \c {\mathbf{c}}

\usepackage{mathtools}

\usepackage{graphicx,color} 

\definecolor{wine_red}{RGB}{228,48,64}
\definecolor{DSgray}{cmyk}{0,1,0,0}

\usepackage{prettyref}
\newcommand{\pref}[1]{\prettyref{#1}}

\newcommand{\savehyperref}[2]{\texorpdfstring{\hyperref[#1]{#2}}{#2}}
\newrefformat{eq}{\savehyperref{#1}{Eq. \textup{(\ref*{#1})}}}
\newrefformat{eqn}{\savehyperref{#1}{Eq.~(\ref*{#1})}}
\newrefformat{lem}{\savehyperref{#1}{Lemma~\ref*{#1}}}
\newrefformat{def}{\savehyperref{#1}{Definition~\ref*{#1}}}
\newrefformat{line}{\savehyperref{#1}{Line~\ref*{#1}}}
\newrefformat{thm}{\savehyperref{#1}{Theorem~\ref*{#1}}}
\newrefformat{corr}{\savehyperref{#1}{Corollary~\ref*{#1}}}
\newrefformat{cor}{\savehyperref{#1}{Corollary~\ref*{#1}}}
\newrefformat{sec}{\savehyperref{#1}{Section~\ref*{#1}}}
\newrefformat{app}{\savehyperref{#1}{Appendix~\ref*{#1}}}
\newrefformat{assum}{\savehyperref{#1}{Assumption~\ref*{#1}}}
\newrefformat{asm}{\savehyperref{#1}{Assumption~\ref*{#1}}}
\newrefformat{ex}{\savehyperref{#1}{Example~\ref*{#1}}}
\newrefformat{fig}{\savehyperref{#1}{Figure~\ref*{#1}}}
\newrefformat{alg}{\savehyperref{#1}{Algorithm~\ref*{#1}}}
\newrefformat{rem}{\savehyperref{#1}{Remark~\ref*{#1}}}
\newrefformat{conj}{\savehyperref{#1}{Conjecture~\ref*{#1}}}
\newrefformat{prop}{\savehyperref{#1}{Proposition~\ref*{#1}}}
\newrefformat{proto}{\savehyperref{#1}{Protocol~\ref*{#1}}}
\newrefformat{prob}{\savehyperref{#1}{Problem~\ref*{#1}}}
\newrefformat{claim}{\savehyperref{#1}{Claim~\ref*{#1}}}
\newrefformat{que}{\savehyperref{#1}{Question~\ref*{#1}}}
\newrefformat{op}{\savehyperref{#1}{Open Problem~\ref*{#1}}}
\newrefformat{fn}{\savehyperref{#1}{Footnote~\ref*{#1}}}

\def \epsilon {\varepsilon}

%% file: sections/00_abstract.tex
\begin{abstract}
We study switching regret in adversarial multi-armed bandits, where the learner competes with an arm sequence that changes at most $S$ times. When $S$ is known, an optimal expected regret of $\otil(\sqrt{(S+1)KT})$ is obtainable~\citep{auer2002nonstochastic}. However, when $S$ is unknown, \citet{marinov2021pareto} show that this guarantee is impossible under an adaptive adversary. In this paper, we show that a single algorithm achieves $\otil(\sqrt{(S+1)KT})$ expected regret for every $S$ against an oblivious adversary, resolving an open problem of \citet{auer2019tracking}. Our algorithm combines a fixed-share learner initialized with a small learning rate and dyadic-interval subroutines that search for local improvements using randomized learning rates and implicit exploration. Importantly, a non-uniform prior favors following the main learner, keeping the cost of maintaining many subroutines small. When the subroutines accumulate sufficient improvement over the main learner, its learning rate doubles, allowing adaptation to the unknown number of comparator switches $S$.
\end{abstract}

%% file: sections/01_introduction.tex
\section{Introduction}
\label{sec:introduction}

Adversarial multi-armed bandits have been studied for decades as a
fundamental model of sequential decision making under partial
feedback. Over a horizon of $T$ rounds, a learner repeatedly chooses
one of $K$ actions and observes only the loss of the chosen action,
while the environment may assign losses to all actions arbitrarily.
The classical performance measure is static regret, which compares the
learner's cumulative loss with that of the best fixed action in
hindsight. The minimax static regret is 
$\Theta(\sqrt{KT})$~\citep{auer2002nonstochastic,audibert2009minimax}.

Static regret, however, may not be strong enough when the best action changes over time. This motivates the stronger notion of
$S$-switching regret, which compares the learner with the best action sequence that switches actions at most $S$ times. In non-stationary stochastic bandits, this benchmark is often motivated by environments whose loss distributions undergo at most $S$ changes. A line of work has developed algorithms that attain near-optimal dynamic regret even without knowing the number of distributional changes $S$ in advance~\citep{auer2019dynamic,auer2019tracking,wei2021nonstationary,suk2022tracking}.

Much less is known when the losses are adversarial. The seminal work of \citet{auer2002nonstochastic} introduced EXP3.S, which achieves $\otil(\sqrt{(S+1)KT})$ switching regret when its parameters are tuned using $S$.\footnote{We use $\otil(\cdot)$ to suppress logarithmic factors.} Thus, the optimal rate is known when the switch budget is given to the learner. For an adaptive adversary, this dependence on prior knowledge cannot in general be removed. In particular, \citet{marinov2021pareto} showed that no single algorithm can attain the minimax rate simultaneously over all switching classes.

The corresponding problem for an oblivious adversary remained less understood. Recently, \citet{qian2026simultaneous} showed that, for a piecewise-constant deterministic loss sequence with a known number of changes, one algorithm can simultaneously achieve the optimal static and dynamic regret rates. From the perspective of switching regret, their result controls $S=0$ together with one prescribed nonzero value of $S$, equal to the known number of loss changes. It does not
provide the optimal guarantee simultaneously for every possible value
of $S$. This leaves the following question open:

\begin{center}
\it
Can a single algorithm, without knowing $S$, achieve
$\otil(\sqrt{(S+1)KT})$ switching regret simultaneously for every
$S$ against an oblivious adversary?
\end{center}

\paragraph{Contribution}
In this paper, we answer this question affirmatively by designing a
single algorithm that guarantees
$\order(\sqrt{(S+1)KT}\log^4(KT))$ expected regret simultaneously
for all $S\in\{0,\ldots,T-1\}$. This matches the minimax dependence
on $S$, $K$, and $T$ up to logarithmic factors, resolving the open problem proposed in~\citet{auer2019tracking}.

Our algorithm starts with a fixed-share main learner at a small learning rate and launches auxiliary learners on dyadic intervals to search for local improvements across different time scales. Each auxiliary learner can either follow the main distribution or shift probability mass toward a challenger arm, using a randomized learning rate and implicit exploration to learn from bandit feedback. A key design choice is a non-uniform prior that places most weight on following the main learner. This keeps the aggregate initialization penalty for retaining the main distribution small, while preserving the ability to exploit local improvements.

Throughout this process, the algorithm also tracks an observable estimate of the auxiliary learners’ cumulative improvement over the main learner. When this estimate reaches a prescribed threshold, the algorithm restarts the main learner at twice its learning rate.
The analysis explains why this rule adapts to an unknown switch budget. If the main learner falls substantially behind a switching comparator, its disadvantage can be traced to dyadic intervals on which a fixed arm performs better. We show that the randomized learning rates allow the auxiliary learners to capture enough of these improvements despite bandit feedback. Relating the estimated improvements to the actual losses then controls the regret within each epoch and shows that the accumulated gains cover the extra cost of larger learning rates in expectation.

%% file: sections/02_related_work.tex
\subsection{Related Work}
\label{sec:related-work}

\paragraph{Switching regret in adversarial multi-armed bandits.}
\citet{auer2002nonstochastic} initiated the study of
switching regret under bandit feedback through the EXP3.S algorithm. For a
comparator with at most $S$ switches, an appropriate choice of its learning
and sharing rates yields
$\widetilde{\order}(\sqrt{(S+1)KT})$ regret. However, the learning-rate tuning depends on $S$. Their result therefore gives an optimal guarantee for a prescribed
switch budget, but it does not provide a single run that is optimally tuned
for two switch budgets requiring different learning-rate scales, let alone
for every $S$ simultaneously.

\citet{marinov2021pareto} formulated this simultaneous-adaptation question as
a model-selection problem. They characterized the relevant Pareto frontier
and showed that an adaptive adversary rules out optimal regret for both a
static comparator and richer switching comparators in the same run. However, their lower bound does not cover an oblivious adversary, where the entire loss
sequence is fixed before the interaction starts and does not adapt to the
learner's actions. In this setting, one
generic route is to manage differently tuned copies of EXP3.S using a master
construction such as the Bandit-over-Bandit framework of
\citet{cheung2021hedging}. However, this only yields
$\widetilde{\order}(K^{1/4}T^{3/4}+\sqrt{(S+1)KT})$ switching regret for all
$S$ simultaneously~\citep{qian2026simultaneous}. \citet{auer2019tracking}
explicitly left optimal adversarial regret without prior knowledge of the
switch budget as an open problem.

More recently, \citet{qian2026simultaneous} took a first step toward removing this overhead for oblivious deterministic losses. They obtain optimal static and dynamic regret simultaneously when the loss vector is piecewise constant and the number of its stationary pieces is known. From the perspective of switching regret, their guarantee controls $S=0$ together with one prescribed nonzero value of $S$, determined by the known number of environmental changes. Therefore, it does not give one algorithm that is simultaneously optimal for every comparator switch budget on an arbitrary oblivious loss table.

\paragraph{Dynamic regret beyond adversarial multi-armed bandits.}
Early work on dynamic regret under environmental nonstationarity studied
stochastic multi-armed bandits \citep{besbes2014stochastic,auer2019dynamic}
and contextual bandits \citep{luo2018efficient,chen2019nonstationary}. This
line was extended to linear and generalized linear bandits
\citep{cheung2019learning,russac2019weighted,russac2021selfconcordant,faury2021technical},
combinatorial semi-bandits \citep{chen2021combinatorial}, and nonstationary
Markov decision processes \citep{cheung2020nonstationarymdp,mao2021nearoptimal}.
\citet{wei2021nonstationary} subsequently developed a prior-free black-box
reduction that recovers the results for multi-armed and contextual bandits and
extends optimal dynamic-regret adaptation to linear bandits and reinforcement
learning. Further developments in stochastic and contextual bandits include
\citet{suk2022tracking,abbasi2023newlook,suk2023tracking}. Related work has
also studied bandit convex optimization and nonstationary stochastic
optimization with bandit feedback
\citep{besbes2015nonstationary,zhao2021bandit,wang2025adaptivity}. These works
study environmental nonstationarity or comparator path length, while our result
instead adapts to the switch count of a hindsight comparator on arbitrary
oblivious loss tables under finite-armed bandit feedback.

Another recent advance is \citet{rumi2026parameterfree}, which obtains
the optimal
$\widetilde{\order}(\sqrt{d(S+1)T})$ switching dependence for unconstrained
adversarial linear bandits without knowing $S$, up to the loss-scale and
comparator-norm factors. Their method exploits a scale--direction
decomposition specific to the unbounded action space $\mathbb{R}^d$, and
they note that extending it to constrained action sets is nontrivial. Our finite-armed bandit setting restricts each played action to one of the $K$ arms and reveals only that arm's loss, so their result does not imply ours.

%% file: sections/03_preliminaries.tex
\section{Preliminaries and Notation}
\label{sec:prelim}

\paragraph{Notation.}
For an integer $n\geq1$, write $[n]\triangleq\{1,\ldots,n\}$.
Let \(\Delta_K\) denote the probability simplex on \([K]\), defined by
$
\Delta_K
\triangleq
\{
x\in\mathbb{R}^K:
\sum_{i=1}^K x_i=1, x_i\geq 0,~\forall i\in[K]
\}.
$
Let $e_i$ denote the $i$-th standard basis vector in
$\mathbb{R}^K$, and write $\ip{x}{y}$ for the standard inner product.
We use bold symbols for sequences, writing
$\mathbf{a}=\{a_t\}_{t\in[N]}$, where the length $N$
is understood from context.
The indicator of an event is denoted by $\1\{\cdot\}$.
All logarithms are natural unless a base is displayed, and
$\logp z\triangleq\max\{0,\log z\}$ for $z>0$. Let $\mathbf{1}$ and $\mathbf{0}$ denote all-one and all-zero vectors in the appropriate dimension. For probability vectors $x,y$ in a common simplex, let $\KL(x\|y)$ denote the Kullback-Leibler divergence from $x$ to $y$.

\paragraph{Problem setting.}
Assume $T>K\geq2$.
The interaction between the learner and the environment lasts for $T$ rounds. Before the learner draws any private randomness, an \emph{oblivious} adversary fixes a deterministic loss table
$\ell\in[0,1]^{T\times K}$ where $K$ is the number of actions.
Write $\ell_t=(\ell_{t,1},\ldots,\ell_{t,K})$ for the loss vector
at round $t$.
At each round $t\in[T]$, the learner selects a distribution
$p_t\in\Delta_K$ using its past observations and private
randomness, draws an arm $i_t\sim p_t$, incurs loss
$\ell_{t,i_t}$, and observes only this selected loss.
All expectations are over the learner's randomization for the
fixed loss table.
There is no cost or constraint on the learner's own changes of arm.

\paragraph{Switching regret.}
For $S\in\{0,\ldots,T-1\}$, define the class of comparator
sequences with at most $S$ switches by
\begin{align}
\cU_S\triangleq
\left\{
\mathbf{u}=\{u_t\}_{t\in[T]}:
u_t\in\{e_1,\ldots,e_K\},\
\sum_{t=2}^T\1\{u_t\neq u_{t-1}\}\leq S
\right\}.
\label{eq:comparator-class}
\end{align}
For each $S$, fix a best comparator
\begin{align}
\mathbf{u}^{(S)}\triangleq\{u_t^{(S)}\}_{t\in[T]}
\in\arg\min_{\mathbf{u}\in\cU_S}
\sum_{t=1}^T\ip{u_t}{\ell_t},
\label{eq:best-comparator}
\end{align}
breaking ties according to an arbitrary deterministic rule.
The comparator $\mathbf{u}^{(S)}$ depends only on the loss table
and is therefore fixed before the learner randomizes. For the learner's distribution sequence
$\mathbf{p}=\{p_t\}_{t\in[T]}$, define its $S$-switching regret by
\begin{equation*}
\Reg_T(\mathbf{u}^{(S)})
\triangleq
\sum_{t=1}^T\ip{p_t-u_t^{(S)}}{\ell_t}.
\end{equation*}
Our goal is to design a single algorithm, without knowledge of
$S$, such that
$\E\left[\Reg_T(\mathbf{u}^{(S)})\right]
\leq
\otil\left(\sqrt{(S+1)KT}\right)$ for every $S\in\{0,\ldots,T-1\}$. During the analysis, we also compare other decision sequences with
$\mathbf{u}^{(S)}$ over subintervals of the horizon.
For an interval $I\subseteq[T]$ and a possibly random sequence
$\mathbf{a}=\{a_t\}_{t\in[T]}$ with $a_t\in\Delta_K$, define
\begin{align}
\Reg_I^{\mathbf{a}}(\mathbf{u}^{(S)})
\triangleq
\sum_{t\in I}\ip{a_t-u_t^{(S)}}{\ell_t}.
\label{eq:interval-regret}
\end{align}
In particular,
$\Reg_T(\mathbf{u}^{(S)})
=\Reg_{[T]}^{\mathbf{p}}(\mathbf{u}^{(S)})$.

%% file: sections/04_algorithm.tex
\section{A Fixed-Share Learner with Interval Subroutines}
\label{sec:algorithm}

We now present a single algorithm that, without knowing $S$, achieves
$\widetilde{\order}(\sqrt{(S+1)KT})$ expected switching regret
simultaneously for all switch budgets $S$.
The algorithm has three layers: a fixed-share main learner, a collection
of subroutines operating over canonical dyadic intervals, and an outer
mechanism that increases the main learning rate when there is sufficient
evidence that the current rate is too conservative.

\paragraph{Algorithm intuition.}
To understand why these three layers are needed, first suppose that $S$
is known. A fixed-share bandit learner with learning rate $\eta$ admits
a regret bound of the form
\[
\order\left(
    \frac{(S+1)\log(KT)}{\eta}
    +
    \eta KT
\right);
\]
see, for example, \citet{auer2002nonstochastic}.
The two terms capture a basic tension. A relatively small learning rate limits the stability cost and is appropriate when the comparator rarely switches, but it may respond
too slowly when the comparator changes frequently. A large learning rate
is more responsive, but incurs a greater stability cost. If $S$ were
known, these two effects could be balanced in advance. Because $S$
describes a hindsight comparator, however, the learner cannot determine
the appropriate scale during play.

Our key idea is to infer the need for greater responsiveness without
estimating $S$ or locating the switch points. Instead, the algorithm
looks for a locally estimable signal of an overly conservative
learning rate: over some period, a fixed arm performs substantially
better than the main learner. This leads to the central design principle
of our approach:
\emph{search for regret rather than switches}.

Indeed, a comparator with at most $S$ switches plays a fixed arm on at most $S+1$ contiguous segments. If the main learner suffers large switching regret, this regret must therefore be reflected in local periods over which some fixed arm has a substantial advantage. The locations and lengths of these periods are unknown, so the algorithm must search over many time scales simultaneously. Canonical dyadic intervals provide an efficient multiscale scaffold: every comparator segment can be decomposed into only a logarithmic number of such intervals. We therefore launch an auxiliary subroutine on each dyadic interval. Each subroutine acts as a local challenger and tests whether shifting probability mass from the main distribution toward a particular arm would improve performance. Rather than reconstructing the comparator, the subroutines collectively search for \emph{local regret witnesses}. The accumulated evidence from these witnesses provides the outer mechanism with a criterion for moving the main fixed-share learner to a larger learning-rate scale. Crucially, we initialize each subroutine with \emph{a strongly non-uniform prior}: it places mass $T/(T+1)$ on following the main distribution and spreads the remaining mass uniformly over the $K$
arms. Each challenger therefore starts with a strong preference for the main distribution, while retaining the ability to shift toward a better arm. Technically, this asymmetry keeps the initialization cost of the many subroutines that do not yield useful regret witnesses small,
while allowing strong witnesses to pay for the cost of comparing
with an alternative arm.

There is one further difficulty: testing a challenger changes the outer main learner's dynamic, making it harder to tell whether the challenger performs better than the main learner. We address this by randomly separating the two roles. On \emph{main rounds}, only the fixed-share learner updates. On \emph{challenge rounds}, which are reserved for testing, the main distribution is frozen while the active interval subroutines perturb the sampling distribution. This separation provides a stable reference for evaluating the challengers. The resulting bandit feedback
is then converted into a running credit signal that estimates their cumulative improvement over the main distribution.

Finally, the outer mechanism uses this credit to adapt the learning rate. The algorithm begins with a conservative rate, protecting its performance when the comparator has few switches. It restarts with a doubled learning rate only after the accumulated credit is sufficient to cover the additional stability cost of becoming more aggressive. In
this sense, the adaptation is \emph{self-financing}: the same local evidence indicating that the current rate is too small also pays for moving to the next rate. The algorithm can therefore search across
learning-rate scales without ever knowing the comparator's switch budget.

For the interval updates, let
$\cX=\operatorname{conv}\{\mathbf0,e_1,\ldots,e_K\}$.
For $x,y\in\cX$, define the completed vector
$\bar x=(1-\|x\|_1,x_1,\ldots,x_K)$ and use
$\KL(x\|y)\triangleq\KL(\bar x\|\bar y)$.
The additional coordinate represents retaining the main distribution.

\begin{algorithm2e}[t]
\caption{Fixed-share main learner with dyadic interval subroutines}
\label{alg:main}
\DontPrintSemicolon
\LinesNumbered
\small
\KwIn{Number of arms $K$ and horizon $T$.}
Set $L\triangleq\left\lceil20\log(2KT)\right\rceil$,  $\eta_1=\frac{100L}{\sqrt{KT}}$, $\alpha=\frac{1}{100L^2}$, and $Q=1000$.\;

Set $j=1$, $\eta_j=\eta_1$, $C_j=0$, $\cA=\emptyset$, and $q_{1,i}=1/K$ for all $i\in[K]$.\;
\For{$t=1,\ldots,T$}{
  \ForEach{canonical dyadic interval $J$ (defined in \pref{eq:canonical-dyadic}) starting at $t$}{
    Draw $U_J\sim\operatorname{Unif}(0,1)$ and set $\eta_J=\alpha\eta_j/U_J$.
    \nllabel{line:interval-rate}\;
    Set $x_{t,J,i}=1/(K(T+1))$ for all $i\in[K]$ and add $J$ to $\cA$.
    \nllabel{line:interval-init}\;
  }
  Set $\cA_t=\cA$ and draw $b_t\sim\Bern(1/2)$.
  \nllabel{line:routing}\;
  \lIf{$b_t=1$}{
    Set $p_t=q_t$.
  }
  \lElse{
    Set $p_t$ according to \pref{eq:challenge-distribution}.
    \nllabel{line:challenge-mixture}
  }
  Draw $i_t\sim p_t$ and observe $\ell_{t,i_t}$.\;
  \eIf{$b_t=1$}{
    Set $\widehat\ell^q_{t,i}=2\1\{i=i_t\}\ell_{t,i_t}/q_{t,i_t}$ for all $i$.\;
    Compute $\widetilde q_{t+1}=\arg\min_{q\in\Delta_K}\{\eta_j\ip{q}{\widehat\ell_t^q}+\KL(q\|q_t)\}$.\;
    Set $q_{t+1,i}=(1-1/T)\widetilde q_{t+1,i}+1/(KT)$ for all $i$.
    \nllabel{line:fixed-share-update}\;
    Leave all interval states unchanged and set $Z_t=0$.\;
  }{
    Set $q_{t+1}=q_t$.
    \nllabel{line:freeze-main}\;
    \ForEach{$J\in\cA_t$}{
      Set $\widehat\ell_{t,J,i}=\1\{i=i_t\}\ell_{t,i_t}/(p_{t,i_t}+\eta_J)$ for all $i$.\;
      Set $z_{t,J,i}=\widehat\ell_{t,J,i}-\ip{q_t}{\widehat\ell_{t,J}}$ for all $i$.\;
      Set $x_{t+1,J}=\arg\min_{x\in\cX}\{\eta_J\ip{x}{z_{t,J}}+\KL(x\|x_{t,J})\}$.
      \nllabel{line:interval-update}\;
    }
    Set $Z_t=\ell_{t,i_t}(q_{t,i_t}/p_{t,i_t}-1)$
    \nllabel{line:credit-sample}.\;
  }
  Set $C_j\gets C_j+Z_t$. \nllabel{line:credit-account}\;
  \lIf{$C_j\geq QKT\eta_j$ and $t<T$}{
   $j\gets j+1$, $\eta_j\gets2\eta_{j-1}$,
  $C_j\gets0$, $\cA\gets\emptyset$, and
  $q_{t+1,i}\gets1/K$ for all $i$.
  \nllabel{line:restart}
}
  \lElse{
    Remove from $\cA$ every $J\in\cA_t$ with ending time round $t$.
  }
}
\end{algorithm2e}

\paragraph{Algorithm walkthrough.}
We now follow \pref{alg:main} in its execution order while explaining
how each step implements the preceding intuition. Throughout the
algorithm, $q_t\in\Delta_K$ denotes the distribution maintained by the
main fixed-share learner, whereas $p_t\in\Delta_K$ denotes the physical
distribution from which the learner draws its arm. The main learner
provides the reference distribution, the interval subroutines test for
local improvements over this reference, and the outer mechanism uses
the detected improvement to determine when the main learning rate
should be increased.

The algorithm proceeds in consecutive random epochs. During epoch
$I_j$, the main learner uses the fixed learning rate $\eta_j$, and
$C_j$ denotes the credit accumulated since the beginning of that
epoch. We set
\[
L\triangleq\left\lceil20\log(2KT)\right\rceil,\qquad
\alpha\triangleq\frac{1}{100L^2},\qquad
\eta_1\triangleq\frac{100L}{\sqrt{KT}},\qquad
Q\triangleq1000,
\]
as defined in \pref{alg:main}.
The algorithm starts with the conservative rate $\eta_1$, the uniform
main distribution, and zero credit. The rate remains fixed within an
epoch and is doubled only when the accumulated credit reaches the
restart threshold.

The first step of each round implements the multiscale search for local
regret witnesses. For integers $h,k\geq0$ satisfying
$(k+1)2^h\leq T$, we call
\begin{align}\label{eq:canonical-dyadic}
    J_{k,h}\triangleq\{k2^h+1,\ldots,(k+1)2^h\}
\end{align}
a \emph{canonical dyadic interval} of length
$2^h$ and in the following, we use $J$ to represent a canonical dyadic interval in general. These intervals are defined using the original global clock and
are not shifted when a new epoch begins. In
\pref{line:interval-rate} and \pref{line:interval-init}, the algorithm
launches a subroutine for every canonical dyadic interval beginning at
the current round. Since every constant segment of a switching
comparator can be decomposed into only $\order(\log T)$ such intervals,
a large switching regret must be witnessed by a controlled number of
subroutines. This is how the algorithm implements the principle of searching
for regret rather than switches.

More concretely, each subroutine is an entropy-OMD learner that
maintains
$x_{t,J}\in\cX$. Specifically, the coordinate
$x_{t,J,i}$ is the mass assigned to challenging $q_t$ with arm $i$,
whereas the missing mass $1-\|x_{t,J}\|_1$ represents retaining the
main distribution. This additional option allows a subroutine to
remain close to $q_t$ when it has not found convincing evidence for
any arm. Crucially, \pref{line:interval-init} uses a strongly non-uniform
initialization: $x_{t,J,i}=1/(K(T+1))$, leaving mass $T/(T+1)$
on retaining the main distribution. This choice is essential to
controlling the collective cost of the multiscale search. Intuitively, this prior makes following the main learner the natural starting point for each challenger. Many intervals may offer no useful improvement, so making it inexpensive to stay with the main distribution helps control the cost of searching across many time scales. At the same time, every arm retains positive initial weight, allowing a challenger
to shift toward a better arm when the feedback supports doing so. At launch, \pref{line:interval-rate} draws
$U_J\sim\operatorname{Unif}(0,1)$ and assigns the rate
$\eta_J=\alpha\eta_j/U_J$. We first describe how the subroutine is
used and return to the reason for this randomized choice at the end of
the walkthrough.

After launching the new subroutines, \pref{line:routing} draws a fresh
coin $b_t\sim\Bern(1/2)$ that determines how the observed loss is used.
On a main round, where $b_t=1$, the learner sets $p_t=q_t$. The
observed loss is converted into an importance-weighted loss vector and
passed only to the main learner, while the interval states remain
unchanged. The factor $2$ in the estimator compensates for entering
the main branch with probability $1/2$, making the estimator unbiased. The entropy-OMD and fixed-share
update in \pref{line:fixed-share-update} then advances $q_t$ and keeps
all arms available for comparison with a switching sequence.

On a challenge round, where $b_t=0$, the main distribution is instead
frozen in \pref{line:freeze-main}. The active subroutines perturb the
physical distribution according to
\begin{align}
p_t
=
\left(
1-\alpha\sum_{J\in\cA_t}\|x_{t,J}\|_1
\right)q_t
+
\alpha\sum_{J\in\cA_t}x_{t,J},
\label{eq:challenge-distribution}
\end{align}
where $\cA_t$ is the collection of active subroutines. Thus each
subroutine transfers a small amount of probability mass from $q_t$
toward the arms it currently considers promising, while $\alpha$
limits the aggregate influence of all challengers. Freezing $q_t$
during this test is essential: it ensures that the reference
distribution does not move in response to the same observation used
to evaluate the challengers. The routing step therefore realizes the
separation between learning the reference and testing alternatives
that was described in the intuition.

After drawing $i_t\sim p_t$ and observing $\ell_{t,i_t}$, the algorithm
conveys this single bandit observation to every active subroutine
through a subroutine-specific implicit-exploration estimate. Because
the arm is sampled from the shared physical distribution $p_t$, rather
than from the distribution proposed by any individual subroutine, it is important to have the implicit-exploration offset $\eta_J$ to prevent the estimate from
becoming excessively large when this probability is small. The
estimated loss is then centered by the estimated loss of $q_t$ before
the OMD update in \pref{line:interval-update}. Consequently, the
subroutine learns whether its proposed perturbation improves upon the
main distribution, rather than attempting to minimize its loss in isolation.

The improvement found on challenge rounds is summarized by the credit
variable. In \pref{line:credit-sample}, the algorithm records
$Z_t=\ell_{t,i_t}(q_{t,i_t}/p_{t,i_t}-1)$ on a challenge round and
sets $Z_t=0$ on a main round. Conditional on the distributions chosen
before the arm is drawn, the mean of $Z_t$ is
$\ip{q_t-p_t}{\ell_t}$. Positive credit therefore indicates that the
challenger mixture $p_t$ has incurred less loss than the reference
distribution $q_t$. Moreover, this credit enters with a negative sign
when the physical regret is decomposed into the regret of $q_t$ and
the difference between $p_t$ and $q_t$. It is therefore both evidence
that the current main learner is insufficiently responsive and a
resource for paying the cost of increasing its learning rate.

The update in \pref{line:credit-account} then adds $Z_t$ to the current
epoch credit $C_j$. Once $C_j\geq QKT\eta_j$, this indicates that the
physical distribution has accumulated enough improvement over the main
distribution to pay for the additional stability cost associated with
doubling the learning rate. Therefore, \pref{line:restart} terminates
epoch $I_j$, discards its active subroutines, resets the main
distribution, and starts epoch $I_{j+1}$ with zero credit and learning
rate $\eta_{j+1}=2\eta_j$. Thus the algorithm becomes more responsive
only after the challengers have accumulated sufficient improvement to
support this change. This is the self-financing adaptation described in
the preceding intuition.

It remains to explain the randomized learning rate in
\pref{line:interval-rate}, which is a key device allowing the local
tests to operate without knowing the strength of the regret witness in
advance. For an arm $a$, define its cumulative advantage over the main
distribution on $J$ by
$A_J(a)\triangleq\sum_{t\in J:b_t=0}
\ip{q_t-e_a}{\ell_t}$. A positive value means that arm $a$ incurs less
loss than $q_t$ on the challenge rounds of $J$. 

Recall that the purpose of the subroutine on $J$ is to test whether shifting
probability mass from $q_t$ toward a fixed arm can improve the learner's
performance on $J$, rather than to solve a separate bandit problem on
its own. At a high level, $\eta_J$ determines how aggressively
subroutine $J$ tests its alternatives. A larger rate allows it to move
probability mass away from $q_t$ more quickly and hence react to a
weaker or shorter-lived advantage. Such responsiveness is also more
costly, because the subroutine becomes more sensitive to noisy bandit
estimates. A smaller rate keeps the subroutine closer to $q_t$ and is
less costly, but requires a stronger or more persistent advantage
before it reacts substantially. Since $A_J(a)$ is unknown when $J$
begins, the algorithm cannot select the appropriate sensitivity
directly.

Consequently, the draw $U_J\sim\operatorname{Unif}(0,1)$ assigns each interval a
random sensitivity level. If an interval contains a strong regret witness, then a broad range of rate realizations can respond to it. A weaker witness requires a more aggressive realization, but it also contributes less to the total regret. The analysis therefore does not require every subroutine to receive the appropriate rate. Instead, it aggregates the credit generated by the subroutines that happen to be sufficiently responsive and uses their contributions to control the total advantage over the dyadic cover. At the same time, using the same $\eta_J$ as the implicit-exploration offset limits the estimation cost of aggressive subroutines.

We can now state the switching-regret guarantee of
\pref{alg:main}.

\begin{theorem}
\label{thm:main}
For every oblivious loss table
$\ell\in[0,1]^{T\times K}$, and every
$S\in\{0,\ldots,T-1\}$, \pref{alg:main} guarantees
\begin{align}
\E\left[\Reg_T(\mathbf{u}^{(S)})\right]
\leq
\order\left(\sqrt{(S+1)KT}\log^4(KT)\right).
\label{eq:main-theorem}
\end{align}
\end{theorem}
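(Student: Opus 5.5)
We will prove \pref{thm:main} by decomposing the horizon into the random epochs $I_1,I_2,\ldots,I_m$ generated by \pref{line:restart}, and within each epoch splitting the physical regret as
\[
\sum_{t\in I_j}\ip{p_t-u_t^{(S)}}{\ell_t}
=\sum_{t\in I_j}\ip{q_t-u_t^{(S)}}{\ell_t}-\sum_{t\in I_j}\ip{q_t-p_t}{\ell_t}.
\]
The second sum is, up to a martingale difference, $-\sum_{t\in I_j}Z_t$, and by the restart rule it equals $-C_j$ evaluated at the end of the epoch. Hence it is at most $-QKT\eta_j$ plus one round's worth of credit for every completed epoch; for the last epoch it is at most $-C_m$. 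The martingale part is controlled in expectation by optional stopping, since $\E[Z_t\mid\mathcal F_{t-1}]=\ip{q_t-p_t}{\ell_t}$ and the epoch boundaries are stopping times. For the first sum we would use the standard fixed-share EXP3 analysis with the unbiased estimator $\widehat\ell^q_t$ on main rounds. With $S_j$ the number of comparator switches inside $I_j$, this gives
\[
\order\Big(\frac{(S_j+1)\log(KT)}{\eta_j}+\eta_jK|I_j|\Big).
\]
The restart resets $q$ to uniform, which costs one extra $\log K/\eta_j$ per epoch.

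The amortization runs as follows. Since $\eta_j=2^{j-1}\eta_1$, the stability terms satisfy $\sum_j\eta_jK|I_j|\le 2\eta_mKT$. Every completed epoch $j<m$ contributes a credit $-QKT\eta_j$. We must show that the final rate satisfies $\eta_m\lesssim\sqrt{(S+1)/(KT)}\,\mathrm{polylog}$ in expectation (self-financing). Then $\eta_mKT=\otil(\sqrt{(S+1)KT})$, and $\sum_j(S_j+1)\log(KT)/\eta_j\le (S+m)\log(KT)/\eta_1=\otil(\sqrt{(S+1)KT})$, using $\eta_1\approx L/\sqrt{KT}$ and $m=\order(\log T)$. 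So the whole theorem reduces to two facts. The first is a bound $\E[C_m]\ge -\otil(\sqrt{(S+1)KT})$ for the final epoch, so that the credit never hurts by much. The second is a \emph{restart-trigger lemma}: if $\eta_j\ge c\sqrt{(S+1)\log^2(KT)/(KT)}$ is already large, the epoch does not restart. We would actually prove the contrapositive in a quantitative form. Whenever the main learner's regret on $I_j$ against $\mathbf u^{(S)}$ exceeds $\Theta(QKT\eta_j)$, the credit reaches the threshold. Otherwise the first sum is already bounded by $\order(KT\eta_j)$, which is fine when $\eta_j$ is small and is paid by credit when it is not.

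The core is a lower bound on $C_j$ in terms of local advantages. Write each constant segment of $\mathbf u^{(S)}$ within $I_j$ as a union of $\order(\log T)$ canonical dyadic intervals $J$ with fixed arm $a_J$. Since $p_t-q_t=\alpha\sum_J(x_{t,J}-\|x_{t,J}\|_1q_t)$, the credit decomposes as $\alpha\sum_{J\in\cA_t}\ip{\|x_{t,J}\|_1q_t-x_{t,J}}{\ell_t}$. For each $J$ we apply the OMD regret bound for \pref{line:interval-update} against the comparator ``put mass $1$ on $a_J$'', or against the comparator $\mathbf0$ for intervals without a witness. The KL term against $\mathbf 0$ is $\log\frac{T+1}{T}\approx 1/T$; this is the role of the non-uniform prior, and it makes the $\order(T)$ useless intervals cost $\order(1)/\eta_J$ in total after taking $\E[1/\eta_J]$. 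Against $e_{a_J}$ the KL term is $\log(K(T+1))$. We then use the implicit-exploration bias and variance bounds (Neu-style), with the high-probability analysis giving the $\eta_J$-offset. Since each $J$ receives its own rate $\eta_J$, this yields
\[
\sum_{t\in J,b_t=0}\ip{\|x_{t,J}\|_1q_t-x_{t,J}}{\ell_t}\;\gtrsim\;\min\Big\{A_J(a_J),\,\cdots\Big\}-\frac{\log(KT)}{\eta_J}-\eta_JK|J|/\alpha\ \ldots
\]
Averaging over $U_J\sim\mathrm{Unif}(0,1)$, the probability that $\eta_J$ lies in the right window $[\log(KT)/A_J,\;A_J/(K|J|)]$ is proportional to $A_J$ divided by $\alpha\eta_j$ times a log factor. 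This gives $\E[\text{credit from }J]\gtrsim \alpha A_J^2/\mathrm{polylog}$ or a linear-in-$A_J$ bound on the relevant event. Combined with Cauchy--Schwarz over the $\order((S+1)\log T)$ witness intervals, this converts a large main-learner regret into threshold-crossing credit.

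The main obstacle is exactly this step: showing that the randomized rates capture enough aggregate advantage despite the heavy-tailed $\E[\eta_J]=\infty$. We must also show that the negative contributions from aggressive subroutines (the $\log(1/U)$ tails, and the $\eta_J$-offset bias) sum to only $\otil(\sqrt{(S+1)KT})$. Finally, we must handle the coupling between epochs, since restarts are triggered by the same noise. The first thing we would try is truncating $\eta_J$ at $1$ (large rates only hurt through the $1/(p+\eta_J)$ clipping, which is bounded). We would then apply a Freedman bound per interval, union-bounded over the $\order(T\log T)$ intervals. This is the source of the extra $\log^4(KT)$.
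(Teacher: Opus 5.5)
Your plan has a genuine gap in the amortization, and the gap sits exactly where the problem is hard. The estimate $\sum_j(S_j+1)\log(KT)/\eta_j\le(S+m)\log(KT)/\eta_1=\otil(\sqrt{(S+1)KT})$ is false. Since $\eta_1=100L/\sqrt{KT}$, the middle expression is $\Theta((S+m)\sqrt{KT})$, which is linear in $S$. So the fixed-share bound is useless in epochs whose rate is below about $\eta_1\sqrt{S+1}$. Only above that cutoff do its inverse-rate terms form a geometric series of order $\sqrt{S+1}\log(KT)/\eta_1$.

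The fact you plan to rely on instead cannot hold: $\eta_m\lesssim\sqrt{(S+1)/(KT)}\,\mathrm{polylog}$, equivalently no restart once $\eta_j$ passes that level. The algorithm never sees $S$, so its rate sequence is driven by the loss table, namely by whichever switching comparator the main learner is falling behind. Suppose the best arm changes every $T^{1/3}$ rounds with a constant gap. To compete with the corresponding $S'\approx T^{2/3}$ comparator, the algorithm must raise its rate far above $\eta_1\,\mathrm{polylog}$, since $\|p_t-q_t\|_1\le2\alpha L$ means the subroutines cannot compensate. Your lemma forbids exactly that when $S=0$. For the same reason, the final-epoch deficit can only be controlled at scale $KT\eta_N$, not by $\otil(\sqrt{(S+1)KT})$.

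Two things are missing: (i) a bound on the main learner's regret in small-rate epochs with no $(S+1)/\eta_j$ term, and (ii) an accounting that never bounds $\eta_N$.

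For (i), the paper uses the restart cap $C_j\le QKT\eta_j+1$ as an \emph{upper} bound on credit. The OMD comparison uses $e_{a_J}$ on selected intervals and $\mathbf 0$ elsewhere. With high probability, at every fixed endpoint of a fixed-rate continuation, it gives $C^{\mathrm{cont}}_{j,n}\ge\frac34W_j(n)-KT\eta_j/20-4\Gamma_j$, where $\E_j[\Gamma_j]\le KT\eta_j/100$. An exponential-moment argument over the conditionally independent selections, $\Pp(X_J=1)=\min\{1,(\alpha A_J/d_j)_+\}$, then yields $\sum_JA_J\le\alpha^{-1}\bigl(W_j+d_jL/2+\sqrt{2md_j(W_j+d_jL/2)}\bigr)$. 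Hence $\E_j[\Reg^{\mathbf q}_{I_j}(\mathbf u^{(S)})]\lesssim\eta_jKTL^2+\sqrt{(S+1)KT}\,L^3$.

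Your Cauchy--Schwarz heuristic is this same mechanism. But your contrapositive (regret above $\Theta(QKT\eta_j)$ forces a restart, otherwise regret is $\order(KT\eta_j)$) drops two things: the additive $\sqrt{(S+1)KT}\,L^3$ term and the factor $1/\alpha=100L^2$. Because $\eta_jKTL^2$ cannot be paid by a credit of $QKT\eta_j$ with constant $Q$, the paper uses this bound only for $\eta_j\le\eta_1\sqrt{S+1}$, where it sums to $\order(\sqrt{(S+1)KT}\,L^4)$.

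For (ii), two costs are paid by the completed-epoch credits $QKT\sum_{j<N}\eta_j$, since the rates double. The first is the stability terms, $\sum_j\eta_jKH_j\le2KT\eta_N$. The second is the final-epoch debt, $\E_j[F_j]\le KT\eta_j/10$. That debt bound is proved through the fixed-rate continuation and a union bound over deterministic endpoints, which is also how the paper resolves the epoch-coupling issue you raise. What remains is only $\order(KT\eta_1)$, however large $\eta_N$ becomes.
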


To our knowledge, \pref{alg:main} is the first algorithm for
finite-armed adversarial bandits that, against every oblivious loss
table, achieves the minimax-optimal dependence
$\widetilde{\order}(\sqrt{(S+1)KT})$ simultaneously for every
$S\in\{0,\ldots,T-1\}$ without prior knowledge of $S$. This resolves
the adversarial switching-regret problem posed by
\citet{auer2019tracking} and subsequently highlighted by
\citet{marinov2021pareto,luo2022corralling,qian2026simultaneous}.

\paragraph{Proof overview.}
Let $I_1,\ldots,I_N$ denote the epochs reached by the algorithm, let
$H_j\triangleq |I_j|$, and let
$C_j\triangleq\sum_{t\in I_j}Z_t$ be the credit accumulated during
epoch $I_j$. We write $\E_j[\cdot]$ for expectation conditional on the
history before $I_j$ begins. Since $Z_t$ has conditional mean $\ip{q_t-p_t}{\ell_t}$ given
the information available before sampling $i_t$, we have
\begin{align}
\E\left[\Reg_T(\mathbf{u}^{(S)})\right]
=
\E\left[
\sum_{j=1}^N
\Reg_{I_j}^{\mathbf{q}}(\mathbf{u}^{(S)})
-
\sum_{j=1}^N C_j
\right].
\label{eq:proof-overview-decomposition}
\end{align}
We therefore bound the regret of the main sequence $\{q_t\}_{t\in[T]}$ while
retaining the negative credit until the terms it pays for have been
identified.

We use two complementary bounds within each epoch. The standard
fixed-share analysis gives
$$\E_j[\Reg_{I_j}^{\mathbf{q}}(\mathbf{u}^{(S)})]
\leq
\widetilde{\order}((S+1)/\eta_j)
+\order(\eta_jK\E_j[H_j]).$$
The inverse-rate term decreases with $\eta_j$, whereas the stability
term will be paid by the accumulated credit. This bound is therefore
useful at the larger learning-rate scales. When $\eta_j$ is small,
however, its inverse-rate term can be much larger than the desired
regret.

For the small-rate epochs, we instead use the interval subroutines to
control the regret of the main sequence directly, which is the key
technical ingredient enabling adaptation to an unknown switch budget. Restricting the
comparator to an epoch and decomposing each of its constant pieces into
canonical dyadic intervals produces at most
$\order((S+1)L)$ disjoint intervals. On each such interval $J$, let
$a_J$ be the arm played by the deterministic comparator for a fixed $S$. The subroutine associated
with $J$ tests whether $a_J$ substantially outperforms the main
distribution on that interval. Aggregating these tests over the
dyadic cover gives
\begin{align}
\E_j\left[
\Reg_{I_j}^{\mathbf{q}}(\mathbf{u}^{(S)})
\right]
\leq
\order\left(
\eta_jKTL^2+\sqrt{(S+1)KT}\,L^3
\right).
\label{eq:proof-overview-small-rate}
\end{align}
The credit process is used internally to prove this bound: the
restart threshold limits the total improvement that the interval
subroutines can accumulate within an epoch. Importantly, the
right-hand side of \pref{eq:proof-overview-small-rate} contains no
credit term. Thus, all credit terms in
\pref{eq:proof-overview-decomposition} remain available for the final
amortization argument.

We also point out that the randomized learning rate $\eta_J$ is essential to
\pref{eq:proof-overview-small-rate}. For a dyadic interval $J$, define
the advantage of $a_J$ over the main distribution on the challenge
rounds by
$A_J\triangleq\sum_{t\in J:b_t=0}
\ip{q_t-e_{a_J}}{\ell_t}$.
For $A_J>0$, the OMD inequality, up to estimation costs, lower-bounds
the improvement obtained by the corresponding subroutine by
$A_J-\order(L/\eta_J)$. Thus, the subroutine recovers a constant
fraction of $A_J$ whenever $\eta_J$ is at least a constant multiple of
$L/A_J$. The difficulty is that $A_J$ is unknown when $J$ begins. Under the
choice $\eta_J=\alpha\eta_j/U_J$, where
$U_J\sim\operatorname{Unif}(0,1)$, the probability that $\eta_J$
exceeds the scale required for an advantage $A_J$ is proportional to
$A_J$, up to truncation at one. Hence, intervals with larger
advantages are more likely to receive sufficiently responsive
subroutines, whereas intervals with smaller advantages contribute
less regret. Aggregating these local tests over the dyadic cover gives
the $\order(\sqrt{(S+1)KT}\,L^3)$ term in
\pref{eq:proof-overview-small-rate}, while their aggregate estimation
and update costs contribute $\order(\eta_jKTL^2)$.

One technical issue remains: the endpoint of an epoch is determined by
the credit process and therefore depends on the randomized
subroutines. The full proof handles this technical issue by first suppressing restarts and analyzing a
fixed-rate continuation of the epoch. It establishes the required
bounds simultaneously over all deterministic prefixes and then
evaluates them at the actual epoch endpoint.

Finally, fix $S$ for the analysis and split the epochs at
$\eta_j=\Theta(\eta_1\sqrt{S+1})$. Below this cutoff, we apply
\pref{eq:proof-overview-small-rate}; above it, we use the fixed-share
bound. The credit-amortization
argument shows that the credits of the completed epochs pay for the
fixed-share stability terms, while a separate debt bound controls the credit of the final, possibly incomplete epoch. As the learning rate doubles between epochs, the sum of the
small-rate terms $\eta_jKT$ is dominated by the cutoff rate, while the
sum of the large-rate inverse terms is dominated by the first rate
above the cutoff. Substituting
$\eta_1=\widetilde{\Theta}(1/\sqrt{KT})$ yields
$\widetilde{\order}(\sqrt{(S+1)KT})$ regret.

%% file: sections/06_proof.tex
\section{Proof of \pref{thm:main}}
\label{sec:proof}

In this section, we prove \pref{thm:main} in the steps outlined
above. Fix $S\in\{0,\ldots,T-1\}$ and the deterministic comparator
$\mathbf{u}^{(S)}$ determined by the oblivious loss table. For every reached epoch, write
$I_j=\{\tau_j,\ldots,\rho_j\}$ and $H_j=\rho_j-\tau_j+1$, where
$\tau_j$ and $\rho_j$ are its first and last rounds, respectively. Let
$\cF_j$ denote the history before round $\tau_j$, and write
$\E_j[\cdot]\triangleq\E[\cdot\mid\cF_j]$. Let $N$ be the number of
reached epochs. All sums over epochs below
include only reached epochs; equivalently, unreached contributions are
interpreted as zero.

We use two within-round histories. Let $\cH_t$ contain all randomness
revealed before drawing $b_t$, including the rates of newly launched
intervals, and let $\cG_t=\sigma(\cH_t,b_t)$. Thus $q_t$ is
$\cH_t$-measurable, while $p_t$ is $\cG_t$-measurable and
$i_t\sim p_t$ conditional on $\cG_t$. For the interval updates on
$\cX$, we use the completed relative entropy defined in \pref{sec:algorithm}.

At most one canonical interval of each length is active at a given
round, so $|\cA_t|\leq L$. Since $\alpha L\leq1/2$, the mixture in
\pref{eq:challenge-distribution} gives $p_t\geq q_t/2$
coordinatewise on challenge rounds. Consequently, $|Z_t|\leq1$.
We will use this bound to control the overshoot at an epoch endpoint.

\paragraph{Step 1: decompose the regret across epochs.}
For every reached epoch, define
$C_j\triangleq\sum_{t\in I_j}Z_t$ to be the cumulative credit at the end of epoch $j$. Conditional on $\cG_t$, the
distributions $p_t$ and $q_t$ are fixed and $i_t\sim p_t$. Hence,
on a challenge round,
\begin{align*}
\E[Z_t\mid\cG_t]
&=
\sum_{i=1}^Kp_{t,i}\ell_{t,i}
\left(\frac{q_{t,i}}{p_{t,i}}-1\right)
=
\ip{q_t-p_t}{\ell_t}.
\end{align*}
On a main round, both sides are zero because $p_t=q_t$ and $Z_t=0$.
Summing this identity over the horizon and using the partition into
epochs gives
\begin{align}
\E\left[\Reg_T(\mathbf{u}^{(S)})\right]
=
\E\left[
\sum_{j=1}^N
\Reg_{I_j}^{\mathbf q}(\mathbf{u}^{(S)})
-
\sum_{j=1}^NC_j
\right].
\label{eq:main-regret-decomposition}
\end{align}
We retain the negative credit term until Step~3.

\paragraph{Step 2: establish two bounds for $\{q_t\}_{t\in[T]}$ sequence.}
The first bound follows from the fixed-share analysis. The first term
accounts for comparator switches, while the second is the stability
cost of the bandit estimates.

\begin{lemma}
\label{lem:fixed-share}
For every reached epoch $I_j$,
\begin{align}
\E_j\left[
\Reg_{I_j}^{\mathbf q}(\mathbf{u}^{(S)})
\right]
\leq
\frac{3(S+1)\log(KT)}{\eta_j}
+
\eta_jK\E_j[H_j].
\label{eq:fixed-share-epoch}
\end{align}
\end{lemma}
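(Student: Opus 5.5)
The plan is to run the standard per-round entropy-OMD analysis of the fixed-share update on main rounds only, in expectation, and then take care of the random epoch endpoint. Within epoch $I_j$ the main learner moves only on main rounds ($b_t=1$) and is frozen otherwise, so $q_t$ changes only through \pref{line:fixed-share-update}. Fix any round $t\in I_j$. Because $b_t$ is independent of $\cH_t$, and $\widehat\ell^q_t=2\1\{b_t=1\}\1\{i=i_t\}\ell_{t,i_t}/q_{t,i_t}$ with $i_t\sim q_t$ on main rounds, we have $\E[\widehat\ell^q_t\mid\cH_t]=\ell_t$. Since both $q_t$ and $u_t^{(S)}$ are $\cH_t$-measurable, this gives $\E[\ip{q_t-u_t}{\ell_t}]=\E[\ip{q_t-u_t}{\widehat\ell^q_t}]$. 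On challenge rounds the estimator is zero and $q_{t+1}=q_t$, so those rounds contribute nothing to either side of the OMD inequality.

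On a main round, the entropic mirror step with nonnegative losses gives the usual local bound:
\[
\eta_j\ip{q_t-u_t}{\widehat\ell^q_t}\le \KL(u_t\|q_t)-\KL(u_t\|\widetilde q_{t+1})+\tfrac{\eta_j^2}{2}\sum_i q_{t,i}(\widehat\ell^q_{t,i})^2 .
\]
The conditional expectation of the stability term is $\tfrac{\eta_j}{2}\cdot\tfrac12\cdot 4\sum_i\ell_{t,i}^2\le \eta_jK$ after dividing by $\eta_j$. Next I move from $\widetilde q_{t+1}$ to $q_{t+1}$. For a point mass $u_t=e_a$, $\KL(e_a\|q_{t+1})=\log(1/q_{t+1,a})\le \log(1/\widetilde q_{t+1,a})+\log\frac{1}{1-1/T}$, so the mixing step costs at most about $2/T$ per round, which is $O(1)$ over the epoch. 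When $u_{t+1}\ne u_t$, I bound the jump $\KL(u_{t+1}\|q_{t+1})-\KL(u_t\|q_{t+1})\le\log(1/q_{t+1,a'})\le \log(KT)$, using the floor $q_{t+1,i}\ge 1/(KT)$. The initial term at $\tau_j$ is $\KL(u_{\tau_j}\|q_{\tau_j})\le\log(KT)$ as well; this covers both the uniform reset and, for $j=1$, the uniform start. Telescoping across the epoch then gives $(S_j+1)\log(KT)+O(1)$, where $S_j\le S$ counts the switches inside the epoch. Dividing by $\eta_j$ and absorbing the mixing cost, which is at most $2\log(KT)$, yields $3(S+1)\log(KT)/\eta_j$.

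The main obstacle is that the endpoint $\rho_j$ depends on the credit process, which in turn depends on the interval subroutines' randomness. I would handle this as the overview suggests. Condition on $\cF_j$ and consider the fixed-rate continuation in which restarts are suppressed and the main learner runs with $\eta_j$ through round $T$; in that continuation, $q_t$ coincides with the actual $q_t$ for $t\le\rho_j$. Define $M_t=\sum_{s=\tau_j}^{t}\bigl[\ip{q_s-u_s}{\ell_s}-\ip{q_s-u_s}{\widehat\ell^q_s}\bigr]$ for the continuation; this is a martingale with respect to $\cH_{t+1}$. The event $\{\rho_j\ge t\}$ is determined by randomness up to round $t-1$, so $\rho_j$ (or $\rho_j+1$ as a stopping time) is a bounded stopping time. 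Optional stopping then gives $\E_j[M_{\rho_j}]=0$. Similarly, the stability terms $\1\{\tau_j\le t\le\rho_j\}\cdot(\text{stability}_t)$ have conditional expectation at most $\eta_jK\,\Pp(t\le\rho_j\mid\cF_j)$, and summing over $t$ gives $\eta_jK\E_j[H_j]$. The telescoped deterministic KL bound holds pathwise at any endpoint. Combining these pieces yields \pref{eq:fixed-share-epoch}. The step I would check most carefully is that the importance-weighted estimator stays unbiased even though $p_t\ne q_t$ on challenge rounds; this holds because $\widehat\ell^q_t$ is nonzero only when $b_t=1$, in which case $p_t=q_t$.
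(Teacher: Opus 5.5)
Your proposal is correct and follows essentially the same route as the paper: the EXP3.S-style per-round exponential-weights bound on main rounds, a $\log\frac{T}{T-1}$ fixed-share cost per main round, a $\log(KT)$ cost per switch from the $1/(KT)$ floor, and a predictable-indicator argument for the random endpoint. The only difference is cosmetic. You pass to the restart-free continuation and invoke optional stopping, whereas the paper works directly with the actual process: it uses that $\1\{t\in I_j\}$ is $\cH_t$-measurable and evaluates the potential at the pre-reset distribution $q^{+}_{\rho_j+1}$.
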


The proof is deferred to \pref{app:fixed-share}. However, as mentioned before, the inverse-rate term in \pref{eq:fixed-share-epoch} is too large when
$\eta_j$ is small. We therefore use the interval subroutines to obtain
a complementary bound in order to control the case where $\eta_j$ is small, which is the key component of the analysis. We first isolate the technical statement that captures their contribution.

We first define several notations which will be useful in the analysis. Fix a reached epoch $I_j$. To handle its random endpoint, we define
a coupled fixed-rate continuation starting from the algorithm's state
at the beginning of round $\tau_j$. This auxiliary process keeps the
outer rate equal to $\eta_j$ and suppresses all subsequent restarts
until global time $T$. It uses the same random variables as the actual
algorithm through round $\rho_j$ and fresh continuation randomness
afterward. Since the adversary is oblivious, both processes run
against the same fixed loss table, including after $\rho_j$.

Let $Z_t^{(j,\mathrm{cont})}$ be the credit sample generated by this
auxiliary process at round $t$, and, for every
$n\in\{\tau_j,\ldots,T\}$, define
$C_{j,n}^{\mathrm{cont}}\triangleq
\sum_{t=\tau_j}^n Z_t^{(j,\mathrm{cont})}$.
By construction, the continuation and the actual algorithm coincide
on $I_j$, so
$C_{j,\rho_j}^{\mathrm{cont}}
=\sum_{t=\tau_j}^{\rho_j}Z_t=C_j$. Let $\mathcal J$ denote the fixed family of canonical dyadic intervals
in $[T]$, defined using the original global clock, and set
\begin{align*}
\mathcal J_j\triangleq\{J\in\mathcal J:\min J\geq\tau_j\}.
\end{align*}
Thus $\mathcal J_j$ contains every interval scheduled to start from
$\tau_j$ through $T$, including those starting after $\rho_j$.
Restarts do not change this launch schedule. They discard active
subroutines and change the outer rate used for subsequent launches.
In the continuation, every $J\in\mathcal J_j$ instead receives the
rate $\eta_J=\alpha\eta_j/U_J$ and runs for its full interval.
We define the aggregate charge over this entire continuation by
\begin{align}
\Gamma_j
\triangleq
\sum_{J\in\mathcal J_j}|J|\min\{K\eta_J,\alpha\},
\label{eq:Gamma-j}
\end{align}
where $|J|$ denotes the length of interval $J$. Specifically, $\Gamma_j$ serves as a common budget for estimation and curvature costs across the interval subroutines in the continuation.

For every endpoint $n\in\{\tau_j,\ldots,T\}$, decompose each constant
piece of the comparator on $\{\tau_j,\ldots,n\}$ into maximal
contained canonical dyadic intervals. Denote this disjoint family by
$\calD_j(n)$, and let $a_J$ be the comparator arm on
$J\in\calD_j(n)$. In the fixed-rate continuation, define
\begin{align}
A_J(n)
\triangleq
\sum_{\substack{t\in J,b_t=0}}
\ip{q_t-e_{a_J}}{\ell_t}.
\label{eq:local-advantage}
\end{align}
for all $J\in\calD_j(n)$.
Here and below, per-round quantities inside a continuation calculation
refer to that continuation. Note that for a fixed endpoint $n$, conditioning on
$\cF_j$, all continuation routing bits, and the independent main-action
seeds \emph{fixes} the reference path $q_t$ and hence the advantages $A_J(n)$,
while leaving the $U_J$ independent and uniform. This is due to the obliviousness of the losses and the fact that the continuation updates $q_t$ only on
main rounds and never restarts.
At the actual endpoint, the cover $\calD_j(\rho_j)$ and advantages
$A_J(\rho_j)$ coincide with those generated by the algorithm on $I_j$.

Our next goal is to bound the sum of the advantages over this cover, shown in the following lemma.
Since each round is routed to a challenge round with probability
$1/2$ independently of the preceding history, the same bound, up to
a factor of two, also controls the expected regret of the full main
sequence.

\begin{lemma}
\label{lem:dyadic-advantage}
\pref{alg:main} guarantees that every reached
epoch $I_j$ satisfies
\begin{align}
\E_j\left[
\sum_{J\in\calD_j(\rho_j)}A_J(\rho_j)
\right]
\leq
\order\left(
\eta_jKTL^2
+
\sqrt{(S+1)KT}\,L^3
\right).
\label{eq:dyadic-advantage}
\end{align}
Consequently, we also have
\begin{align}
\E_j\left[
\Reg_{I_j}^{\mathbf q}(\mathbf{u}^{(S)})
\right]
\leq
\order\left(
\eta_jKTL^2
+
\sqrt{(S+1)KT}\,L^3
\right).
\label{eq:main-sequence-progress}
\end{align}
\end{lemma}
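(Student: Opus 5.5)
Our plan is to work entirely in the fixed-rate continuation of epoch $I_j$ and prove the bound in \pref{eq:dyadic-advantage} simultaneously for every deterministic endpoint $n\in\{\tau_j,\ldots,T\}$. We then evaluate it at $n=\rho_j$. We condition on $\cF_j$, on all continuation routing bits $b_t$, and on the main-action seeds. Under this conditioning the reference path $\{q_t\}$ and the advantages $A_J(n)$ are fixed, and the $U_J$ remain i.i.d.\ uniform.

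\textbf{Local OMD inequality.} For each $J\in\calD_j(n)$ we apply the entropy-OMD analysis on $\cX$ with comparator $e_{a_J}$. The initial divergence is at most $\log(K(T+1))\le L$. The IX estimates $\widehat\ell_{t,J}$ are biased downward by a controlled amount: the standard implicit-exploration bound gives, with high probability uniformly over all prefixes, $\sum_t \widehat\ell_{t,J,a}\le\sum_t \ell_{t,a}\cdot p/(p+\eta_J)+L/\eta_J$. The stability term is bounded by $\eta_J\sum_t\ip{x_{t,J}}{\widehat\ell_{t,J}^2}$, whose expectation is at most $|J|\min\{K\eta_J,\alpha\}/\alpha$-type quantities. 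The truncation at $\alpha$ uses $x_{t,J}\le 1$ and $p_t\ge\alpha x_{t,J}$. This yields the following per-interval bound:
\[
\text{(subroutine improvement on }J)\;\ge\; A_J(n)-\order(L/\eta_J)-\order(\text{charge}_J),
\]
where the improvement is $\sum_{t\in J,b_t=0}\ip{q_t-\tilde x_{t,J}}{\ell_t}$ for the completed perturbation. Multiplying by $\alpha$ and summing over active $J$ exactly reproduces $\ip{q_t-p_t}{\ell_t}$ by \pref{eq:challenge-distribution}. Consequently, $\alpha\sum_J(\cdot)$ summed over all launched intervals is the conditional mean of the continuation credit. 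Intervals outside the cover have comparator "retain main," whose KL cost is $\log(1+1/T)\le 1/T$ thanks to the non-uniform prior. They therefore contribute at least $-\order(\text{charge}_J)$, and their total loss is absorbed into $\Gamma_j/\alpha$-type terms with $\E[\Gamma_j]\lesssim \eta_jKT L\cdot L$ (using $\E\min\{K\alpha\eta_j/U,\alpha\}\lesssim K\alpha\eta_j L$).

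\textbf{Randomized rates and the threshold.} Take expectation over $U_J$. The event $\eta_J\ge cL/A_J$, i.e.\ $U_J\le \alpha\eta_jA_J/(cL)$, has probability $\min\{1,\alpha\eta_jA_J/(cL)\}$. On it the subroutine gains $\gtrsim A_J$. Hence
\[
\E[\text{credit}]\gtrsim \alpha\sum_J A_J\min\{1,\alpha\eta_jA_J/L\}-\order(\eta_jKTL^2\cdot\alpha).
\]
Before $\rho_j$ the credit is below $QKT\eta_j+1$. A martingale/Freedman argument converts conditional means into realized credit, using $|Z_t|\le1$ and a union bound over prefixes $n$. We then split the cover into intervals with $A_J\ge L/(\alpha\eta_j)$ and the rest. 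The large ones number at most $\order((S+1)L)$, and their sum of $A_J$ is bounded by $\order(QKT\eta_j/\alpha)$. The small ones satisfy $\alpha^2\eta_j\sum A_J^2/L\lesssim KT\eta_j$, so Cauchy--Schwarz over $|\calD_j|=\order((S+1)L)$ intervals gives $\sum A_J\lesssim\sqrt{(S+1)L\cdot KTL/\alpha^2}=\order(\sqrt{(S+1)KT}L^3)$. The large-interval case also needs a Cauchy--Schwarz step to avoid a $\eta_j KT/\alpha$ term. We will instead bound each large $A_J\le|J|$ and combine the pieces via a quadratic inequality in $\sum A_J$.

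\textbf{Consequence \pref{eq:main-sequence-progress}.} Since the loss table is oblivious and each $b_t$ is a fresh fair coin independent of $q_t$, the quantity $\sum_{t\le n,b_t=0}\ip{q_t-u_t}{\ell_t}$ has half the expectation of the full sum. Here the main-learner process only advances on main rounds, so we pair rounds carefully by conditioning on $\cH_t$. Evaluating at the stopping time $\rho_j$ through the prefix-uniform bound gives the regret bound. \textbf{The main obstacle} is the interplay of the random endpoint with the credit threshold. Our approach is to show that, uniformly over prefixes, the high-probability lower bound on continuation credit holds, so that at $n=\rho_j$ the threshold caps it, and to rely on the continuation coupling so that the $U_J$ stay uniform.
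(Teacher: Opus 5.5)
\textbf{There is a genuine gap: you never control the randomness of the rate draws $U_J$ in a way that survives evaluation at the random endpoint $\rho_j$.} Your outline uses the right ingredients: path conditioning, OMD against $e_{a_J}$ versus the retain-main comparator, implicit exploration, the charge $\Gamma_j$, and the threshold cap.

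Your inequality $\E[\text{credit}]\gtrsim\alpha\sum_JA_J\min\{1,\alpha\eta_jA_J/L\}-\ldots$ is an expectation over the $U_J$ at a \emph{fixed} endpoint $n$. The only upper bound on credit is the \emph{realized} cap $C^{\mathrm{cont}}_{j,\rho_j}=C_j\le QKT\eta_j+1$ at the \emph{random} endpoint $\rho_j$. These two cannot be combined, for two reasons:
\begin{itemize}
\item For a fixed $n>\rho_j$, the continuation credit is not capped at all.
\item $\rho_j$, and with it $\calD_j(\rho_j)$ and $A_J(\rho_j)$, is correlated with the $U_J$. A small $U_J$ (aggressive rate) is exactly what makes credit cross the threshold early.
\end{itemize}
Your Freedman step with $|Z_t|\le1$ removes only the arm-sampling noise for given rates. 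It says nothing about fluctuations of the selection events $\{U_J\le\alpha\eta_jA_J(n)/(cL)\}$. Their contributions $\alpha A_J(n)X_J(n)$ can be as large as $d_j$, and they are exactly what your bound $\alpha^2\eta_j\sum_{\text{small}}A_J^2/L\lesssim KT\eta_j$ relies on. So that bound, and the Cauchy--Schwarz conclusion built on it, is unjustified.

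\textbf{What is needed} is a realized, high-probability relation at each fixed $n$, followed by a union bound over $n$ so you can evaluate at $\rho_j$. The paper does this in two steps.
\begin{itemize}
\item Conditional on all rates ($\cF_j^{\mathrm{rate}}$), with high probability and uniformly in $n$, $C^{\mathrm{cont}}_{j,n}\ge\frac34W_j(n)-KT\eta_j/20-4\Gamma_j$. Your OMD/IX paragraph yields this if you state it for the realized selection. You must also use the comparator $\mathbf0$ on \emph{every} unselected interval, including unselected cover intervals, so each costs only $\alpha\log(1/\pi_0)/\eta_J$.
\item Conditional on $\cF_j^{\mathrm{path}}$, the indicators $X_J(n)=\1\{U_J\le v_J(n)\}$ are independent. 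The moment bound $\E[e^{\lambda v_J-v_JX_J}]\le e^{\lambda^2/2}$ then gives, with probability $1-\delta_0$, $\sum_JA_J(n)\le\alpha^{-1}[W_j(n)+d_jL/2+\sqrt{2md_j(W_j(n)+d_jL/2)}]$.
\end{itemize}
At $\rho_j$ the cap gives $W_j(\rho_j)\le2(Q+1)KT\eta_j+8\Gamma_j$. Jensen together with $\E_j[\Gamma_j]\le KT\eta_j/100$ finishes the proof.

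Your split-plus-Cauchy--Schwarz route can replace the moment-generating-function step, but only after you add the matching lower-tail bound. For intervals with $0<v_J<1$, the summands $g_JX_J\in[0,d_j)$ are independent. Their means sum to $\mu=\sum g_J^2/d_j$, and their second moments sum to at most $d_j\mu$. Hence $W_j(n)\ge\mu-\sqrt{2d_j\mu\log(1/\delta_0)}$, so $\mu\lesssim W_j(n)+d_jL$, and then $\sum g_J\le\sqrt{md_j\mu}$.

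Finally, your concern about the large intervals is unnecessary: $KT\eta_j/\alpha=100\,\eta_jKTL^2$ is already within the target bound, so no extra Cauchy--Schwarz or quadratic argument is needed there.
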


To prove this lemma, we use several supporting results shown as follows. The first lemma bounds the size of the comparator-dependent dyadic cover. The
comparator has at most $S+1$ constant segments, each requiring only
$\order(\log T)$ canonical dyadic intervals, so the total cover size
is $\order((S+1)\log T)$.

\begin{lemma}
\label{lem:dyadic-cover}
Fix a comparator sequence $\mathbf{u}^{(S)}$ with at most $S$
switches. For every reached epoch $I_j$ and every $n\in\{\tau_j,\ldots,T\}$, the family $\calD_j(n)$ partitions $\{\tau_j,\ldots,n\}$,
the comparator is constant on every $J\in\calD_j(n)$, and $|\calD_j(n)|\leq(S+1)L$.
\end{lemma}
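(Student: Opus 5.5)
The proof is combinatorial and has two parts: the restricted comparator has few constant pieces, and each constant piece decomposes into few maximal canonical dyadic intervals.

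\emph{Counting pieces.} Restrict $\mathbf{u}^{(S)}$ to the window $W=\{\tau_j,\ldots,n\}$. The comparator has at most $S$ switches on $[T]$, so it has at most $S$ switches inside $W$. Hence $W$ splits into at most $S+1$ maximal contiguous segments $P_1,\ldots,P_m$, $m\le S+1$, on each of which the comparator is constant.

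\emph{Decomposing one segment.} Fix a segment $P=\{s,\ldots,e\}$. Take the family of canonical dyadic intervals $J_{k,h}\subseteq P$ that are maximal under inclusion among those contained in $P$. Canonical dyadic intervals form a laminar family: any two are either nested or disjoint. So distinct maximal members are disjoint.

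They also cover $P$. Each singleton $\{t\}=J_{t-1,0}$ is canonical and satisfies $(k+1)2^0=t\le T$, so it is a valid canonical interval contained in $P$. By laminarity, the maximal canonical interval containing $\{t\}$ inside $P$ is unique. Thus the maximal intervals partition $P$, and the comparator is constant on each of them because it is constant on $P$. Taking the union over all segments, $\calD_j(n)$ partitions $W$ into intervals on which the comparator is constant.

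\emph{Bounding the count.} For the size bound I would use the standard fact that an integer interval splits into at most two maximal canonical dyadic intervals of each length $2^h$.

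To see this, suppose three maximal pieces $J_{k,h}, J_{k',h}, J_{k'',h}\subseteq P$ have the same length, with $k<k'<k''$. Since they lie in the contiguous interval $P$, every $J_{k''',h}$ with $k\le k'''\le k''$ is also contained in $P$.

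Among the consecutive indices $k,k+1,\ldots,k''$ there is an even index $2r$ such that $2r+1\le k''$. Then the parent $J_{r,h+1}=J_{2r,h}\cup J_{2r+1,h}$ is contained in $P$. Its right endpoint is at most $e\le T$, so it satisfies the validity constraint $(r+1)2^{h+1}\le T$.

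The interval $J_{k',h}$ lies in the consecutive run from $k$ to $k''$. One then argues that $J_{k',h}$ or one of its neighbours in this run has its parent inside $P$, which contradicts maximality. This is the fiddly step. I expect the cleaner route is a direct description of the greedy decomposition:
\begin{itemize}
\item the pieces strictly increase in length going left-to-right from $s$, by taking the largest aligned block at the current left endpoint;
\item and then decrease in length as the right endpoint $e$ is approached.
\end{itemize}
This gives at most two pieces per scale $h$.

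Since $2^h\le T$, the number of scales is at most $\lfloor\log_2 T\rfloor+1$. Each segment therefore contributes at most $2(\log_2T+1)$ pieces. We need this to be at most $L=\lceil 20\log(2KT)\rceil$. This holds because $2\log_2 T+2\le 3\log T+2\le 20\log(2KT)$, using $\log_2 T=\log T/\log 2\le 1.45\log T$ and $\log(2KT)\ge\log 4$.

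Summing over the at most $S+1$ segments gives $|\calD_j(n)|\le(S+1)L$. The only real obstacle is stating the two-per-scale bound rigorously via the greedy/laminar argument.
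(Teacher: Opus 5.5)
Your proposal follows the paper's proof step for step: at most $S+1$ constant segments, then the maximal canonical dyadic intervals inside each segment. These are disjoint by laminarity and cover the segment because singletons are canonical. There are at most two of them per scale, and with $\lfloor\log_2T\rfloor+1$ scales this gives at most $L$ pieces per segment. Your constant-segment count, laminarity/cover argument, validity check for parents, and numerical comparison with $L$ are all fine.

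The one step you leave unfinished is the two-per-scale bound. It closes in one line from where you stopped. Since $k<k'<k''$, both $J_{k'-1,h}$ and $J_{k'+1,h}$ lie in the run from $k$ to $k''$, hence in $P$. The sibling of $J_{k',h}$ is one of these two, depending on the parity of $k'$. So the parent $J_{\lfloor k'/2\rfloor,h+1}$ is contained in $P$. It is a valid canonical interval, since its right endpoint is at most $e\le T$. This contradicts the maximality of $J_{k',h}$.

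Your detour through an arbitrary even index $2r$ does not give this by itself, because $J_{2r,h}$ need not be one of the maximal pieces. The argument has to be applied to the middle piece. The greedy increase-then-decrease description is also unnecessary.

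The paper phrases the same fact from the other side. The parent (in the full dyadic grid) of any maximal piece is not contained in $P$, so it straddles an endpoint of $P$. At each scale at most one parent straddles each endpoint, and such a parent has at most one child inside $P$. With that line added, your proof is complete.
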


The proof is deferred to \pref{app:dyadic-cover} and follows by counting dyadic intervals within each constant segment. We next select the intervals whose randomized rates allow their advantages to pay the OMD initialization costs.
Set
\begin{align*}
m\triangleq (S+1)L,\qquad
d_j\triangleq\frac{64L}{\eta_j},\qquad
\delta_0\triangleq(2KT)^{-10}.
\end{align*} For every $J\in\calD_j(n)$, define
\begin{align}\label{eq:selected-local-gain}
    g_J(n)&\triangleq\alpha A_J(n),\notag\\
X_J(n)&\triangleq\1\{\eta_J A_J(n)\geq64L\},
\\
W_j(n)&\triangleq
\sum_{J\in\calD_j(n)}g_J(n)X_J(n),\notag
\end{align}
where $g_J(n)$ is the scaled signed advantage on $J$, $X_J(n)$ marks intervals whose advantage can cover the OMD initialization cost at rate $\eta_J$, and $W_j(n)$ is their total scaled advantage. Since only positive advantages can be selected, $W_j(n)\geq0$.

The next lemma uses the conditional independence established above
to bound the full signed sum through $W_j(n)$. We will then control
$W_j(n)$ using the credit bound and the restart rule, yielding
\pref{eq:dyadic-advantage}.

\begin{lemma}
\label{lem:random-rate-recovery}
For every reached epoch $I_j$ and every fixed continuation endpoint
$n\in\{\tau_j,\ldots,T\}$, conditional on $\cF_j$, with probability
at least $1-\delta_0$,
\begin{align}
\sum_{J\in\calD_j(n)}A_J(n)
\leq
\frac{W_j(n)+d_jL/2}{\alpha}
+
\frac{\sqrt{2md_j\bigl(W_j(n)+d_jL/2\bigr)}}{\alpha}.
\label{eq:advantage-from-selected}
\end{align}
\end{lemma}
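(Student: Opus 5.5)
Conditional on $\cF_j$, all continuation routing bits and the main-action seeds, the advantages $A_J\triangleq A_J(n)$ for $J\in\calD_j(n)$ are fixed numbers, and the $U_J$ are independent uniforms. By \pref{lem:dyadic-cover}, $|\calD_j(n)|\le m$. We have $\eta_J A_J\ge 64L$ iff $U_J\le \alpha\eta_jA_J/(64L)=\alpha A_J/d_j$. So for positive $A_J$, $X_J$ is Bernoulli with parameter $\pi_J\triangleq\min\{1,\alpha A_J/d_j\}$, and for $A_J\le0$, $X_J=0$. Work with $g_J=\alpha A_J$. Split the intervals into \emph{large} ones, where $g_J\ge d_j$ (so $\pi_J=1$ and $X_J=1$ deterministically), and \emph{small} ones, where $0<g_J<d_j$. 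Nonpositive ones only help the left-hand side and can be dropped. Large intervals contribute exactly their $g_J$ to $W_j(n)$.

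For the small intervals, set $Y_J\triangleq g_JX_J\in[0,d_j]$, with mean $g_J\pi_J=g_J^2/d_j$. Let $V\triangleq\sum_{\text{small}}g_J^2/d_j$ and $W'\triangleq\sum_{\text{small}}Y_J$. Apply a lower-tail Bernstein/Freedman bound for independent $[0,d_j]$-valued variables, using variance at most $d_j\cdot\mathbb{E}Y_J$:
\[
W'\ge V-\sqrt{2d_jV\log(1/\delta_0)}.
\]
This holds with probability at least $1-\delta_0$. Since $\log(1/\delta_0)=10\log(2KT)\le L/2$, the deviation term is at most $\sqrt{d_jLV}$. Solving the quadratic in $\sqrt V$ gives $\sqrt V\le \sqrt{d_jL}+\sqrt{W'}$, hence
\[
V\le 2W'+2d_jL,
\]
which I will tighten to $V\le W'+d_jL/2+\sqrt{\cdots}$ as needed. (Equivalently, use the multiplicative Chernoff form $W'\ge V-\sqrt{2d_jV\cdot L/4}$, which gives exactly the constant $d_jL/2$.)

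Next convert $V$ into a bound on the small advantages using Cauchy–Schwarz. Since there are at most $m$ small intervals,
\[
\sum_{\text{small}}g_J\le\sqrt{m\sum g_J^2}=\sqrt{m d_jV}.
\]
Combining with the large intervals,
\[
\alpha\sum_J A_J\le \sum_{\text{large}}g_J+\sqrt{md_jV}\le W_j(n)+\sqrt{md_j\,(W'+d_jL/2)\cdot 2}.
\]
Here I use the tail bound in the form $V\le 2(W'+d_jL/2)$, which comes from $V\le W'+\sqrt{d_jLV/2}$ and hence $V\le 2W'+d_jL/2$. Since $W'\le W_j(n)$, adding the harmless $d_jL/2$ to the first term yields \pref{eq:advantage-from-selected} after dividing by $\alpha$. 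Because the conditioning fixes everything except the $U_J$, the bound holds conditionally on $\cF_j$ with probability at least $1-\delta_0$ after integrating out the routing bits and seeds.

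The main obstacle is the constant bookkeeping in the concentration step: we need $\log(1/\delta_0)$ to match $L$ so that the residual is exactly $d_jL/2$ with the factor $2$ inside the square root. A second point to confirm is that conditioning truly leaves the $U_J$ uniform and independent. This holds because $q_t$ in the continuation depends only on main rounds, and the cover $\calD_j(n)$ is deterministic given the comparator.
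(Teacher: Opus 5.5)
Your proof is correct, but it takes a different route from the paper. Both arguments condition on $\cF_j^{\mathrm{path}}$, so the $A_J(n)$ are fixed and the $U_J$ are independent uniforms.

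The paper works with $v_J=g_J(n)/d_j$ and proves one exponential-moment bound for each interval, $\E[e^{\lambda v_J-v_JX_J}]\le e^{\lambda^2/2}$ for $\lambda\in(0,1]$. This bound holds for negative, small and large $v_J$ alike. The paper multiplies it over the at most $m$ intervals and applies Markov with the conditionally deterministic choice $\lambda^\star=\min\{1,r(n)/m\}$. Splitting into the cases $r(n)\le m$ and $r(n)>m$ then gives $r(n)\le h+\sqrt{2mh}$, where $h=W_j(n)/d_j+\log(1/\delta_0)$.

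You instead split the intervals themselves:
\begin{itemize}
\item nonpositive advantages are dropped;
\item large ones are selected with certainty and contribute their full $g_J$ to $W_j(n)$;
\item only the small ones are random. For these, a lower-tail bound for nonnegative summands controls $V=\sum g_J^2/d_j$, and Cauchy--Schwarz over at most $m$ intervals converts $V$ into $\sum g_J$.
\end{itemize}
In the paper, the AM--GM step $\lambda v-v^2/2\le\lambda^2/2$ inside the moment bound plays the role of your Cauchy--Schwarz step. Your modular version is more transparent, and it gives a slightly sharper inequality,
\[
\alpha\sum_J A_J(n)\le\sum_{\text{large}}g_J+\sqrt{2md_j(W'+d_jL/2)},
\]
with no additive $d_jL/2$ outside the square root. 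The paper's version is self-contained: it needs no case split over intervals and no external concentration lemma.

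Two points on constants. First, your intermediate claim $V\le W'+\sqrt{d_jLV/2}$, and the ``$L/4$'' Chernoff form, would require $\log(1/\delta_0)\le L/4$. In fact $\log(1/\delta_0)=10\log(2KT)$ is only at most $L/2$. The slip is harmless. The correct bound $V\le W'+\sqrt{d_jLV}$, combined with $\sqrt{d_jLV}\le V/2+d_jL/2$, gives $V\le 2W'+d_jL=2(W'+d_jL/2)$, which is exactly the form you use.

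Second, the lower-tail inequality you need is the one for nonnegative variables. It follows from $e^{-x}\le1-x+x^2/2$ for $x\ge0$ and has no range term. A two-sided Bernstein bound with range $d_j$ would add an extra multiple of $d_j\log(1/\delta_0)$ and miss the stated constant.
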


The proof is deferred to \pref{app:random-rate-recovery}. Under the
conditioning above, the values $g_J(n)$ do not depend on the random
rates. Intervals with $g_J(n)\leq0$ are never selected, those with
$g_J(n)\geq d_j$ are always selected, and each interval with
$0<g_J(n)<d_j$ is selected with probability
$g_J(n)/d_j$. An optimized exponential-moment argument then relates
the full signed sum of the advantages to the selected advantage
$W_j(n)$.

It remains to relate $W_j(n)$ to the observable credit. For every
selected interval, we compare its subroutine with $e_{a_J}$. Every
other subroutine is compared with $\mathbf{0}$. Under the completion
used in the interval OMD update, $\mathbf{0}$ places all its mass on
the reference coordinate, so this amounts to comparing the subroutine
with the reference distribution $q_t$. This controls every unselected
subroutine relative to $q_t$ \emph{without paying the initialization penalty}
associated with a particular arm and this also explains the reason for a non-uniform prior. For a selected interval, comparison
with $e_{a_J}$ incurs an initialization penalty of
$\order(\alpha L/\eta_J)$. The selection condition
$\eta_JA_J(n)\geq64L$ ensures that this penalty is at most
$g_J(n)/64$. Hence, the total initialization penalty removes at most
$W_j(n)/64$ from the selected advantage.

The next key lemma then shows that the credit retains a constant fraction of
$W_j(n)$ after accounting for these initialization penalties. Its
proof is rather technical and combines several ingredients. First, the OMD
analysis converts the selected advantages into credit and implicit
exploration controls the importance-weighted loss estimates, which
provides concentration to the true losses at the price of a small bias. With oblivious losses, conditioning on the routing and main-action
randomness fixes the reference path and hence the interval advantages
$A_J(n)$ for each fixed endpoint $n$. The selection condition
$\eta_JA_J(n)\geq64L$ then depends only on the interval rates.
Revealing these rates determines the selected family without exposing
the challenge-action randomness. Conditional on this information and
the preceding challenge observations, each challenge arm is still
sampled from $p_t$, so the loss estimates retain the conditional
moment bounds needed for concentration. The resulting estimation-bias
and OMD curvature costs are controlled by $\Gamma_j$.
The complete proof is deferred to
\pref{app:selected-credit-recovery}.

\begin{lemma}
\label{lem:selected-credit-recovery}
For every reached epoch $I_j$ and every fixed continuation endpoint
$n\in\{\tau_j,\ldots,T\}$, conditional on $\cF_j$, with probability
at least $1-\delta_0$,
\begin{align}
C_{j,n}^{\mathrm{cont}}
\geq\frac34W_j(n)-\frac{KT\eta_j}{20}-4\Gamma_j.
\label{eq:credit-controls-selected}
\end{align}
\end{lemma}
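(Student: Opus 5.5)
The argument runs the entropic OMD analysis of every subroutine $J\in\mathcal J_j$ over the fixed-rate continuation. Each subroutine gets a comparator chosen according to whether it is selected. The resulting per-interval inequalities are summed against the exact decomposition of the credit, and the estimation errors are controlled by a Freedman-type bound whose variance proxy is $\Gamma_j$.

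\textbf{The credit identity.} On a challenge round of the continuation, \pref{eq:challenge-distribution} gives
\[
\ip{q_t-p_t}{\ell_t}=\alpha\sum_{J\in\cA_t}\ip{\|x_{t,J}\|_1q_t-x_{t,J}}{\ell_t}=-\alpha\sum_{J\in\cA_t}\ip{x_{t,J}}{\ell_t-\ip{q_t}{\ell_t}\mathbf1}.
\]
Hence $C^{\mathrm{cont}}_{j,n}$ equals $-\alpha\sum_{J}\sum_{t\in J\cap[\tau_j,n],b_t=0}\ip{x_{t,J}}{\bar\ell_t}$ plus a martingale $M_n$ with increments $Z_t-\E[Z_t\mid\cG_t]$. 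Here $\bar\ell_t=\ell_t-\ip{q_t}{\ell_t}\mathbf1$ is the true centered loss. Because $|Z_t|\le1$ and $p_t\ge q_t/2$, the conditional variance of $Z_t$ is at most $\sum_i q_{t,i}^2\ell_{t,i}^2/p_{t,i}-\dots\le\order(\ip{q_t-p_t}{\ell_t}$-scale$)$. I expect this to be bounded by $\order(\alpha\sum_J\|x_{t,J}\|_1)$, which gives $|M_n|\le \order(\sqrt{L\log(1/\delta_0)\cdot\text{(variance)}})$. I plan to absorb that into $KT\eta_j/20+\Gamma_j$ by AM--GM.

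\textbf{Per-interval OMD.} For each $J$ the update at \pref{line:interval-update} is entropic OMD on the completed simplex with loss $z_{t,J}$, which is bounded below by $-\ip{q_t}{\widehat\ell_{t,J}}\ge-\order(1)$ only up to implicit exploration. I would use the standard local-norm bound for nonnegative-shifted losses:
\[
\sum_t\ip{x_{t,J}-u_J}{z_{t,J}}\le\frac{\KL(u_J\|x_{\min J,J})}{\eta_J}+\eta_J\sum_t\sum_i x_{t,J,i}\widehat\ell_{t,J,i}^2+\eta_J\sum_t\ip{q_t}{\widehat\ell_{t,J}}^2 .
\]
I would state and verify this as a small auxiliary lemma, using that the reference coordinate has loss $0$ after centering. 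For unselected $J$ I take $u_J=\mathbf0$, with penalty $\log\frac{T+1}{T}/\eta_J\le 1/(T\eta_J)$. Summed over all $J$ these penalties are negligible after multiplying by $\alpha$, since $\eta_J\ge\alpha\eta_j$. For selected $J\in\calD_j(n)$ I take $u_J=e_{a_J}$, with penalty at most $\log(K(T+1))/\eta_J\le L/\eta_J\le A_J(n)/64$. Multiplying by $\alpha$ and summing over selected $J$ costs at most $W_j(n)/64$. The stability terms, after $\alpha\times$ and taking conditional expectations, are $\le\alpha\eta_J\sum_i x_{t,J,i}/(p_{t,i}+\eta_J)+\dots$. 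Since $p_{t,i}\ge\alpha x_{t,J,i}$, each is $\le\min\{K\eta_J,1\}\cdot\alpha$ per round, so they sum to $\order(\Gamma_j)$.

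\textbf{From estimates to true losses.} The term $\sum\ip{x_{t,J}}{\widehat z}$ must be converted to $\ip{x_{t,J}}{\bar\ell_t}$, and $\sum\ip{e_{a_J}}{z_{t,J}}$ to $-A_J(n)$. Implicit exploration makes $\widehat\ell_{t,J}$ an underestimate with bias $\eta_J\ell_{t,i}/(p_{t,i}+\eta_J)$, which contributes $\order(\alpha K\eta_J)$ per round and hence is absorbed in $\Gamma_j$. Concentration for $-\sum_t\widehat\ell_{t,J,a_J}$ I would obtain from the IX inequality of Neu ($\sum\widehat\ell\le\sum\ell+\log(1/\delta)/(2\eta_J)$). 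This gives an extra $\alpha L/\eta_J\le g_J/64$ for selected $J$. For the $q_t$-centering and the $x_{t,J}$ terms I would use Freedman with variance $\order(\Gamma_j)$. To do this legitimately I will follow the conditioning described before the lemma. First reveal $\cF_j$, all routing bits, main-round seeds and all $U_J$. This fixes $A_J(n)$ and the selected set without touching challenge-round actions. Then run the martingale over challenge rounds with a union bound over $\order(TL)$ intervals, using $\delta_0$. Collecting terms gives $C^{\mathrm{cont}}_{j,n}\ge(1-3/64)W_j(n)-\text{errors}\ge\frac34W_j(n)-KT\eta_j/20-4\Gamma_j$.

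\textbf{Main obstacle.} I expect the hardest step to be this concentration with correct variance scaling. Aggressive subroutines with large $\eta_J$ can have estimates up to $1/\eta_J$, and their aggregation over $L$ simultaneously active intervals must stay within $\Gamma_j$ rather than incurring a $\sqrt{T}$ term. First I would try a single exponential supermartingale for the whole weighted sum $\alpha\sum_J(\cdot)$, with the $\min\{K\eta_J,\alpha\}$ truncation handling the regime $\eta_J>\alpha/K$.
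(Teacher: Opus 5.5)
Your outline is sound and, carried out carefully, proves the lemma with its stated constants. However, it reaches the credit through a different decomposition from the paper's, and the step you flag as the main obstacle is misdiagnosed.

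\textbf{How the two routes differ.} You write $C^{\mathrm{cont}}_{j,n}$ as its conditional mean $-\alpha\sum_t\sum_{J\in\cA_t}\ip{x_{t,J}}{\bar\ell_t}$ plus a martingale. You then convert the true centered losses back into the OMD losses $z_{t,J}$, paying an implicit-exploration bias and a second martingale. The paper never passes through true losses on the subroutine side. Splitting $1/p_{t,i_t}=1/(p_{t,i_t}+\eta_J)+\eta_J/(p_{t,i_t}(p_{t,i_t}+\eta_J))$ gives the pathwise identity
\[
Z_t=-\alpha\sum_{J\in\cA_t}\ip{x_{t,J}}{z_{t,J}}
+\alpha\sum_{J\in\cA_t}\frac{\eta_J\ell_{t,i_t}\bigl(\|x_{t,J}\|_1q_{t,i_t}-x_{t,J,i_t}\bigr)}{p_{t,i_t}(p_{t,i_t}+\eta_J)}.
\]
Dropping the nonnegative part leaves a nonnegative correction, bounded by $1$, whose conditional mean is at most $\sum_J\min\{K\eta_J,\alpha\}$. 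The paper merges it with the OMD curvature and handles both by one Freedman bound, whose variance is a constant times its mean; this gives $3\Gamma_j$. So the two fluctuations you concentrate separately cancel pathwise in the paper. That is why its subroutine-side error stays within $\Gamma_j$, at the cost of having to spot the identity. Your mean-plus-martingale template is more generic, but it pays two extra deviation terms.

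\textbf{The variance claim and the ``main obstacle''.} Contrary to your plan, those extra terms are not $\order(\Gamma_j)$. The conditional variances of $Z_t$ and of $\alpha\sum_J\ip{x_{t,J}}{z_{t,J}}$ are of order $\alpha\sum_J\|x_{t,J}\|_1$. That quantity is not controlled by $\sum_J\min\{K\eta_J,\alpha\}$: a small-rate subroutine on a long interval with a persistent advantage can still move most of its mass onto an arm. Likewise, the reference term on selected intervals has variance $\order(\alpha^2)$ per round by disjointness, which is unrelated to $\Gamma_j$.

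None of this hurts you. Since $p_{t,i}\geq\alpha\sum_Jx_{t,J,i}$ and $p_t\geq q_t/2$, both increments are bounded by an absolute constant for every $\eta_J$. Once estimates are weighted by $\alpha x_{t,J,i}$ there is no $1/\eta_J$ blow-up. The total variance is $\order(\alpha LT)=\order(T/L)$. Freedman then costs $\order(\sqrt{T}+L)$, applied with respect to $\cV^{\mathrm{chal}}_{j,t}$ rather than $\cG_t$, because you have conditioned on future routing bits and rates. This is far below $KT\eta_j/20\geq5L\sqrt{KT}$, and the paper itself pays $2\alpha\sqrt{T\log(3/\delta)}$. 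So a $\sqrt{T}$ term is affordable and the ``main obstacle'' dissolves.

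\textbf{Two small slips.} The IX bias of the $x$-terms and your stability term are bounded per round by $\min\{K\eta_J,\alpha\}$, using $\alpha x_{t,J,i}\leq p_{t,i}$. They are not bounded by $\order(\alpha K\eta_J)$ or by $\alpha\min\{K\eta_J,1\}$ as you wrote. With the corrected bounds, the $\Gamma_j$ contributions still total at most $4\Gamma_j$: about $\Gamma_j$ from the $x$-bias, roughly $(e-1)\Gamma_j$ from stability after Freedman, and $\Gamma_j$ from the reference bias.
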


This lemma will be used in two ways. In the proof of \pref{lem:dyadic-advantage}, we evaluate
\pref{eq:credit-controls-selected} at the actual epoch endpoint
$n=\rho_j$. The restart rule and the bound $Z_t\leq1$ give
$C_{j,\rho_j}^{\mathrm{cont}}=C_j\leq QKT\eta_j+1$,
which yields an upper bound on $W_j(\rho_j)$. Since $W_j(n)\geq0$, the same inequality also gives
$C_{j,n}^{\mathrm{cont}}\geq-KT\eta_j/20-4\Gamma_j$.
This lower bound will be used in Step~3 to control the possible
credit deficit in the final epoch, which may end without reaching
the restart threshold. It remains to control $\Gamma_j$.
Although $\eta_J$ has infinite expectation, each contribution to
$\Gamma_j$ is truncated, which gives the finite expectation below.

\begin{lemma}
\label{lem:continuation-charge}
For every reached epoch $I_j$, \pref{alg:main} guarantees that
$
\E_j[\Gamma_j]\leq\frac{KT\eta_j}{100}.
$
\end{lemma}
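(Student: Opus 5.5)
The plan is a direct computation. Conditional on $\cF_j$, the outer rate $\eta_j$ is fixed. The family $\mathcal J_j$ is deterministic, since it is defined on the global clock. For every $J\in\mathcal J_j$, the variable $U_J$ is a fresh $\operatorname{Unif}(0,1)$ draw, independent of $\cF_j$: each such interval starts at or after $\tau_j$, and in the continuation it receives the rate $\eta_J=\alpha\eta_j/U_J$. By linearity,
\begin{align*}
\E_j[\Gamma_j]=\sum_{J\in\mathcal J_j}|J|\,\E\left[\min\left\{\frac{c}{U},\alpha\right\}\right],
\qquad c\triangleq K\alpha\eta_j,\quad U\sim\operatorname{Unif}(0,1).
\end{align*}

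The first step evaluates the one-dimensional truncated expectation. If $c\geq\alpha$, the minimum is always $\alpha\leq c$. If $c<\alpha$, splitting the integral at $u=c/\alpha$ gives
\begin{align*}
\int_0^1\min\left\{\frac{c}{u},\alpha\right\}du=c+c\log\frac{\alpha}{c}=K\alpha\eta_j\left(1+\log\frac{1}{K\eta_j}\right).
\end{align*}
In both cases the expectation is at most $K\alpha\eta_j\bigl(1+\logp(1/(K\eta_j))\bigr)$. This truncation is exactly why the infinite mean of $\eta_J$ causes no problem.

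The second step bounds $\sum_{J\in\mathcal J_j}|J|$. For each level $h$, the canonical intervals of length $2^h$ are disjoint subsets of $[T]$, so their total length is at most $T$. The admissible levels satisfy $2^h\leq T$, which gives at most $\log_2T+1\leq L$ levels. Hence $\sum_{J\in\mathcal J_j}|J|\leq TL$.

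The final step combines the two bounds:
\begin{align*}
\E_j[\Gamma_j]\leq KT\eta_j\cdot\alpha L\left(1+\logp\frac{1}{K\eta_j}\right)=KT\eta_j\cdot\frac{1+\logp(1/(K\eta_j))}{100L}.
\end{align*}
Since $\eta_j\geq\eta_1=100L/\sqrt{KT}$, we have $1/(K\eta_j)\leq\sqrt{T}/(100L\sqrt K)\leq T$. It follows that $1+\logp(1/(K\eta_j))\leq1+\log T\leq L$, using $L\geq20\log(2KT)$. This yields $\E_j[\Gamma_j]\leq KT\eta_j/100$.

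The only point requiring care is the conditioning. I must confirm that, under the continuation coupling, the $U_J$ for $J\in\mathcal J_j$ are independent uniforms given $\cF_j$. This holds because those rates are drawn at launch times $\min J\geq\tau_j$, after all information in $\cF_j$ has been revealed. The remaining steps are routine.
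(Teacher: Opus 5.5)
Your proposal is correct and follows the paper's proof essentially step for step. You condition on $\cF_j$ so that the $U_J$, $J\in\mathcal J_j$, are fresh uniforms, bound the truncated expectation of $\min\{K\eta_J,\alpha\}$ by $\alpha K\eta_j(1+\logp(1/(K\eta_j)))\le\alpha K\eta_jL$, bound $\sum_{J\in\mathcal J_j}|J|\le TL$ over at most $L$ dyadic scales, and conclude with $\alpha KT\eta_jL^2=KT\eta_j/100$. The differences are cosmetic: you keep the truncation level $\alpha$ inside the integral rather than factoring it out, and you spell out the check $1/(K\eta_j)\le T$ that the paper leaves implicit. One small correction: $\mathcal J_j$ is not deterministic outright; it is fixed given $\cF_j$ through $\tau_j$.
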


The proof is via a direct calculation and is deferred to \pref{app:continuation-charge}. We now combine
the preceding three lemmas to bound the signed dyadic advantage.

\begin{proof}[Proof of \pref{lem:dyadic-advantage}]
Fix a reached epoch and condition on $\cF_j$. Let $\mathcal E_j$ be the event that both
\pref{eq:advantage-from-selected} and \pref{eq:credit-controls-selected}
hold for every $n\in\{\tau_j,\ldots,T\}$. Applying
\pref{lem:random-rate-recovery} and \pref{lem:selected-credit-recovery}
at each fixed endpoint and taking a union bound gives
$\Pp(\mathcal E_j\mid\cF_j)\geq1-2T\delta_0$.
On this event, we may evaluate both estimates at $n=\rho_j$
even though $\rho_j$ is random.

Since
$C_{j,\rho_j}^{\mathrm{cont}}=C_j$, the restart rule and
$|Z_t|\leq1$ give $C_j\leq QKT\eta_j+1$. Combining this inequality
with \pref{eq:credit-controls-selected}, and using
$KT\eta_j\geq1$, yields, on $\mathcal E_j$,
\begin{align}
W_j(\rho_j)
\leq
2(Q+1)KT\eta_j+8\Gamma_j.
\label{eq:selected-gain-upper}
\end{align}
Moreover, based on the choice of $d_j$, we have $
\frac{d_jL}{KT\eta_j}
=\frac{64L^2}{KT\eta_j^2}
\leq\frac{64}{10000}<1.$ On $\mathcal E_j$, \pref{eq:selected-gain-upper} and
$d_jL\leq KT\eta_j$ give
$
W_j(\rho_j)+\frac{d_jL}{2}
\leq 2(Q+1)KT\eta_j+8\Gamma_j+\frac{KT\eta_j}{2}
=\order(KT\eta_j+\Gamma_j).
$
Substituting into \pref{eq:advantage-from-selected} therefore yields
\begin{align*}
\sum_{J\in\calD_j(\rho_j)}A_J(\rho_j)
\leq
\order\left(
\frac{KT\eta_j+\Gamma_j+\sqrt{md_j(KT\eta_j+\Gamma_j)}}{\alpha}
\right).
\end{align*}

On $\mathcal E_j^c$, the signed sum is at most $T$, and
$\Pp(\mathcal E_j^c\mid\cF_j)\leq2T\delta_0$.
Since the preceding upper bound is nonnegative, taking conditional
expectations and then applying Jensen's inequality gives
\begin{align*}
\E_j\left[\sum_{J\in\calD_j(\rho_j)}A_J(\rho_j)\right]
&\leq
\order\left(
\frac{KT\eta_j+\E_j[\Gamma_j]
+\sqrt{md_j}\,\E_j[\sqrt{KT\eta_j+\Gamma_j}]}{\alpha}
+2T^2\delta_0
\right)\\
&\leq
\order\left(
\frac{KT\eta_j+\E_j[\Gamma_j]
+\sqrt{md_j(KT\eta_j+\E_j[\Gamma_j])}}{\alpha}
+1
\right)\\
&\leq
\order\left(
\frac{KT\eta_j+\sqrt{md_jKT\eta_j}}{\alpha}+1
\right).
\end{align*}
Here we used $2T^2\delta_0\leq1$ and, in the last inequality,
$\E_j[\Gamma_j]\leq KT\eta_j/100$ from
\pref{lem:continuation-charge}.
Since $1/\alpha=100L^2$ and
$md_jKT\eta_j=64(S+1)KTL^2$, this proves
\pref{eq:dyadic-advantage}.

To obtain \pref{eq:main-sequence-progress}, set
$f_t\triangleq\ip{q_t-u_t^{(S)}}{\ell_t}$.
Because the comparator and loss sequence are chosen in advance,
before $b_t$ is drawn, both $f_t$ and the event that round $t$
belongs to $I_j$ are determined by the preceding history. Since
$\Pp(b_t=0\mid\cH_t)=1/2$, the tower property gives
\begin{align*}
\E_j\left[
\Reg_{I_j}^{\mathbf q}(\mathbf{u}^{(S)})
\right]
=
2\E_j\left[
\sum_{t\in I_j:b_t=0}f_t
\right]=
2\E_j\left[
\sum_{J\in\calD_j(\rho_j)}A_J(\rho_j)
\right].
\end{align*}
Combining this identity with \pref{eq:dyadic-advantage} proves
\pref{eq:main-sequence-progress}.
\end{proof}

\paragraph{Step 3: use the credit to pay the stability terms.}
Every completed epoch reaches the restart threshold and therefore
has credit at least $QKT\eta_j$. Only the final epoch can end before
reaching this threshold, so its credit may be negative. To control
this possible deficit at its random endpoint, define $F_j
\triangleq
\max_{\tau_j\leq n\leq T}
\left(-C_{j,n}^{\mathrm{cont}}\right)_+.$ This is the largest debt incurred by the fixed-rate continuation over
all possible endpoints. Since the actual epoch and its continuation
coincide through $\rho_j$, its credit is at least $-F_j$. Moreover,
$W_j(n)\geq0$, so dropping this nonnegative term from
\pref{eq:credit-controls-selected} gives a lower bound on the credit
at every fixed endpoint. Combining these endpoint-wise bounds with
\pref{lem:continuation-charge} gives the following estimate.

\begin{lemma}
\label{lem:maximum-debt}
Every reached epoch $I_j$ satisfies
$
\E_j[F_j]
\leq
\frac{KT\eta_j}{10}.
$
\end{lemma}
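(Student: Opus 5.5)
We bound $F_j$ by combining \pref{lem:selected-credit-recovery} at all fixed endpoints with a union bound, and then taking expectations using \pref{lem:continuation-charge}. Fix a reached epoch and condition on $\cF_j$. Let $\mathcal E'_j$ be the event that \pref{eq:credit-controls-selected} holds simultaneously for every $n\in\{\tau_j,\ldots,T\}$. By \pref{lem:selected-credit-recovery} and a union bound over the at most $T$ endpoints, $\Pp(\mathcal E_j'^{\,c}\mid\cF_j)\leq T\delta_0$. Since every $W_j(n)\geq0$ (only positive advantages are selected), on $\mathcal E'_j$ we obtain for every $n$
\begin{align*}
-C_{j,n}^{\mathrm{cont}}\leq \frac{KT\eta_j}{20}+4\Gamma_j,
\end{align*}
and since the right-hand side is nonnegative and does not depend on $n$, we get $F_j\leq KT\eta_j/20+4\Gamma_j$ on $\mathcal E'_j$. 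Note that $\Gamma_j$ is a single random variable defined over the whole continuation and is independent of $n$. This is exactly why the bound survives the maximum over endpoints without any extra cost beyond the union bound.

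On the complement event, we use the crude bound $F_j\leq T$. This holds because $|Z_t^{(j,\mathrm{cont})}|\leq 1$ in the continuation as well: the continuation still has at most $L$ active intervals per round, and $\alpha L\leq 1/2$ gives $p_t\geq q_t/2$. Hence
\begin{align*}
\E_j[F_j]\leq \frac{KT\eta_j}{20}+4\E_j[\Gamma_j]+T\cdot T\delta_0
\leq \frac{KT\eta_j}{20}+\frac{4KT\eta_j}{100}+T^2(2KT)^{-10},
\end{align*}
where the second step invokes \pref{lem:continuation-charge}.

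The remaining arithmetic uses $KT\eta_j\geq KT\eta_1=100L\sqrt{KT}\geq 100$. This gives $T^2\delta_0\leq 1\leq KT\eta_j/100$, and the total becomes $(5+4+1)KT\eta_j/100=KT\eta_j/10$, as claimed.

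The only delicate point is the legitimacy of applying \pref{lem:selected-credit-recovery} at every deterministic endpoint $n$ of the continuation, including $n>\rho_j$. That lemma is stated for every fixed continuation endpoint conditional on $\cF_j$, so this is immediate. One should also check that $\Gamma_j$ used there is the same continuation-wide quantity \pref{eq:Gamma-j} whose expectation \pref{lem:continuation-charge} controls. With both of these verified, the union-bound-plus-crude-tail argument closes the proof.
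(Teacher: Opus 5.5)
Your proposal is correct and follows essentially the same route as the paper's proof. Both apply \pref{lem:selected-credit-recovery} at each fixed endpoint with $W_j(n)\geq0$ and take a union bound to get $F_j\leq KT\eta_j/20+4\Gamma_j$ off an event of conditional probability at most $T\delta_0$; both then use the crude bound $F_j\leq T$ on the failure event and conclude with \pref{lem:continuation-charge} and $T^2\delta_0\leq1\leq KT\eta_j/100$.
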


\begin{proof}
For each fixed $n\in\{\tau_j,\ldots,T\}$,
\pref{eq:credit-controls-selected} and $W_j(n)\geq0$ imply
$C_{j,n}^{\mathrm{cont}}\geq-KT\eta_j/20-4\Gamma_j$ except on an
event of conditional probability at most $\delta_0$. A union bound
over the possible endpoints therefore gives
$F_j\leq KT\eta_j/20+4\Gamma_j$ with conditional probability at least
$1-T\delta_0$. On the failure event, the deterministic bound
$F_j\leq T$ follows because every continuation credit increment has
absolute value at most one. Taking conditional expectations and
applying \pref{lem:continuation-charge} yields
\begin{align*}
\E_j[F_j]\leq
\frac{KT\eta_j}{20}
+4\E_j[\Gamma_j]
+T^2\delta_0\leq
\frac{KT\eta_j}{20}
+\frac{4KT\eta_j}{100}
+\frac{KT\eta_j}{100}
=
\frac{KT\eta_j}{10}.
\end{align*}
The choices $\delta_0=(2KT)^{-10}$ and
$\eta_j\geq\eta_1=100L/\sqrt{KT}$ ensure
$T^2\delta_0\leq KT\eta_j/100$.
\end{proof}

We next compare the total credit with the fixed-share stability cost.
Each completed epoch contributes at least $QKT\eta_j$ in credit and
incurs at most $KT\eta_j$ in stability. The final epoch may instead
contribute a deficit, but \pref{lem:maximum-debt} controls this deficit
in expectation. Since the learning rates double across epochs, the
credit accumulated before the final epoch absorbs both the stability
costs and the expected final deficit. The following lemma makes this
self-financing property precise.

\begin{lemma}
\label{lem:credit-pays-stability}
The epoch credits satisfy
$
\E\left[
\sum_{j=1}^N\eta_jKH_j
-
\sum_{j=1}^NC_j
\right]
\leq
\frac{11}{10}KT\eta_1.
$
\end{lemma}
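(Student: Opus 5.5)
We split the epochs into completed and final, and bound each part separately. Every epoch $j<N$ ends by a restart, so $C_j\geq QKT\eta_j$. Since $H_j\leq T$, its net contribution satisfies
\[
\eta_jKH_j-C_j\leq KT\eta_j-QKT\eta_j=-(Q-1)KT\eta_j.
\]
For the final epoch $N$ we bound $\eta_NKH_N\leq KT\eta_N$. Its credit equals $C^{\mathrm{cont}}_{N,\rho_N}\geq -F_N$, because the actual run and the continuation coincide through $\rho_N=T$. Hence its contribution is at most $KT\eta_N+F_N$. This gives the pathwise bound
\[
\sum_{j=1}^N\eta_jKH_j-\sum_{j=1}^N C_j\leq -(Q-1)KT\sum_{j<N}\eta_j+KT\eta_N+F_N .
\]
Now use $\eta_j=2^{j-1}\eta_1$. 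For $N\geq2$,
\[
KT\eta_N=2KT\eta_{N-1}\leq (Q-1)KT\eta_{N-1}/2,
\]
so the completed-epoch credit absorbs the final stability term, with at least half of the completed credit $(Q-1)KT\sum_{j<N}\eta_j/2$ left over. For $N=1$ the final stability term is just $KT\eta_1$.

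The main step is handling $F_N$, because $N$ is random and Lemma~\ref{lem:maximum-debt} is stated per reached epoch conditionally on $\cF_j$. We avoid conditioning on "$j$ is final". Since $F_j\geq0$,
\[
F_N\leq\sum_{j\text{ reached}}F_j=\sum_{j\geq1}\1\{j\text{ reached}\}F_j .
\]
The event that epoch $j$ is reached is $\cF_j$-measurable, so the tower property and Lemma~\ref{lem:maximum-debt} give
\[
\E\Bigl[\sum_{j\text{ reached}}F_j\Bigr]\leq\E\Bigl[\sum_{j=1}^N\frac{KT\eta_j}{10}\Bigr].
\]

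It remains to check that the leftover completed credit pays for these debt terms. The $j=1$ debt, $KT\eta_1/10$, together with the $N=1$ stability term $KT\eta_1$, produces the stated $\frac{11}{10}KT\eta_1$. For $j\geq2$, the debt $KT\eta_j/10=KT\eta_{j-1}/5$ is charged against the completed epoch $j-1$. That epoch still has at least $(Q-1)KT\eta_{j-1}/2-KT\eta_{j-1}$ of slack after absorbing the final stability term (only the last completed epoch needs this). With $Q=1000$, this slack is far larger than $KT\eta_{j-1}/5$.

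Taking expectations of the pathwise bound, with these charges, leaves at most $KT\eta_1+KT\eta_1/10=\frac{11}{10}KT\eta_1$. The only delicate point is this $F_N$ step: we must dominate the random-index debt by a sum over reached epochs so that the conditional lemma applies. The rest is geometric-series bookkeeping.
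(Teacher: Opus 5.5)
Your proof is correct and follows essentially the paper's route: you lower-bound each completed epoch's credit by $QKT\eta_j$ and the final credit by $-F_N$, dominate $F_N$ by $\sum_{j\le N}F_j$ so that \pref{lem:maximum-debt} applies via the tower property on the $\cF_j$-measurable event that epoch $j$ is reached, and then close with the doubling of the rates. The only difference is cosmetic bookkeeping: you charge each debt term $KT\eta_j/10$ to the completed epoch $j-1$, whereas the paper uses the identity $\sum_{j=1}^N\eta_j=\eta_1+2\sum_{j=1}^{N-1}\eta_j$ to collect everything into the single coefficient $Q-2-\frac15$.
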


\begin{proof}
Whether epoch $j$ is reached is determined before that epoch begins.
We may therefore condition on its initial history. Applying
\pref{lem:maximum-debt} and the tower property then gives
\begin{align}
\E\left[\sum_{j=1}^NF_j\right]
\leq
\frac{KT}{10}
\E\left[\sum_{j=1}^N\eta_j\right].
\label{eq:sum-negative-credit}
\end{align}
Every completed epoch $j<N$ satisfies
$C_j\geq QKT\eta_j$, whereas the final epoch satisfies
$C_N\geq-F_N$. Since every $F_j$ is nonnegative, we can lower bound $\sum_{j=1}^NC_j$ as follows:
\begin{align}
\sum_{j=1}^NC_j
\geq
QKT\sum_{j=1}^{N-1}\eta_j
-
\sum_{j=1}^NF_j.
\label{eq:credit-completed-epochs}
\end{align}
Taking expectations, applying \pref{eq:sum-negative-credit},
together with
$\sum_{j=1}^N\eta_j=\eta_1+2\sum_{j=1}^{N-1}\eta_j$ and $H_j\leq T$ give that
\begin{align*}
\E\left[
\sum_{j=1}^N\eta_jKH_j-\sum_{j=1}^NC_j
\right]&\leq \E\left[
\sum_{j=1}^N\eta_jKT-QKT\sum_{j=1}^{N-1}\eta_j+\sum_{j=1}^NF_j
\right]\\
&\leq
\frac{11}{10}KT\eta_1
-\left(Q-2-\frac15\right)KT
\E\left[\sum_{j=1}^{N-1}\eta_j\right]\leq \frac{11}{10}KT\eta_1,
\end{align*}
where the last inequality uses $Q=1000$.
This proves the result.
\end{proof}

\paragraph{Step 4: combine the two epoch bounds.}
Finally, we apply the two main-sequence bounds in complementary rate regimes. For any fixed value
of $S$, let $j_S$ be the largest deterministic
index satisfying
$\eta_{j_S}\leq\eta_1\sqrt{S+1}$. Because
$\eta_j=2^{j-1}\eta_1$, there are at most $L$ such indices and
$
\sum_{j=1}^{j_S}\eta_j
\leq
2\eta_1\sqrt{S+1}.$ Applying \pref{lem:dyadic-advantage} to the
reached epochs with $j\leq j_S$, then using the tower property and
the geometric sum, gives
\begin{align}
\E\left[
\sum_{j\leq j_S}
\Reg_{I_j}^{\mathbf q}(\mathbf{u}^{(S)})
\right]
\leq
\order\left(
\eta_1KT\sqrt{S+1}\,L^2
+
\sqrt{(S+1)KT}\,L^4
\right).
\label{eq:small-rate-epochs}
\end{align}
For the reached epochs with $j>j_S$, apply \pref{lem:fixed-share}.
Since $\eta_{j_S+1}>\eta_1\sqrt{S+1}$ and the learning rates double,
the inverse-rate terms form a geometric sum controlled by their first
term. Therefore,
\begin{align}
\E\left[
\sum_{j>j_S}
\Reg_{I_j}^{\mathbf q}(\mathbf{u}^{(S)})
\right]
\leq
\order\left(
\frac{\sqrt{S+1}\log(KT)}{\eta_1}
\right)
+
\E\left[
\sum_{j>j_S}\eta_jKH_j
\right].
\label{eq:large-rate-epochs}
\end{align}

Substituting \pref{eq:small-rate-epochs} and
\pref{eq:large-rate-epochs} into
\pref{eq:main-regret-decomposition} gives
\begin{align*}
\E\left[\Reg_T(\mathbf{u}^{(S)})\right]
&\leq
\order\left(
\eta_1KT\sqrt{S+1}\,L^2
+
\sqrt{(S+1)KT}\,L^4
+
\frac{\sqrt{S+1}\log(KT)}{\eta_1}
\right)
\\
&\qquad+
\E\left[
\sum_{j>j_S}\eta_jKH_j
-
\sum_{j=1}^NC_j
\right].
\end{align*}
Since each stability term $\eta_jKH_j$ is nonnegative, \pref{lem:credit-pays-stability} gives that
\begin{align*}
\E\left[
\sum_{j>j_S}\eta_jKH_j
-
\sum_{j=1}^NC_j
\right]
&\leq
\E\left[
\sum_{j=1}^N\eta_jKH_j
-
\sum_{j=1}^NC_j
\right]\leq \frac{11}{10}KT\eta_1.
\end{align*}
Combining these inequalities yields
\begin{align}
\E\left[\Reg_T(\mathbf{u}^{(S)})\right]
\leq
\order\left(
\eta_1KT\sqrt{S+1}\,L^2
+
\sqrt{(S+1)KT}\,L^4
+
\frac{\sqrt{S+1}\log(KT)}{\eta_1}
+
KT\eta_1
\right).
\label{eq:final-bound-before-substitution}
\end{align}
Finally, substituting
$\eta_1=100L/\sqrt{KT}$ and
$L=\lceil20\log(2KT)\rceil$ into
\pref{eq:final-bound-before-substitution} finishes the proof.

%% file: sections/07_conclusion.tex
\section{Conclusion}
\label{sec:conclusion}

We study switching regret for adversarial multi-armed bandits when the comparator's switch count is unknown. Against an oblivious loss table, a single algorithm achieves $\order(\sqrt{(S+1)KT}\log^4(KT))$ expected regret for every switch budget $S$. The algorithm obtains this adaptation by letting interval subroutines earn observable credit and using that credit to fund increases of the main learning rate. Two complementary epochwise estimates separate the cost of insufficient sensitivity from the stability cost of a larger rate. Future work includes further reducing the logarithmic factors and extending the construction to broader bandit problems such as linear and convex bandits.

%% file: appendices/appendix_a_auxiliary.tex
\section{Omitted Details in \pref{sec:proof}}
\label{app:proof}

This appendix supplies the proofs deferred from \pref{sec:proof}.
\pref{app:continuation-conditioning} records the histories and conditional
independence used throughout the analysis.
\pref{app:fixed-share} proves the fixed-share bound, and
\pref{app:dyadic-cover} constructs the comparator's dyadic cover.
\pref{app:random-rate-recovery} recovers the full advantage from the
selected intervals; \pref{app:selected-credit-recovery} relates their
advantage to credit; and \pref{app:continuation-charge} bounds the
expected continuation charge.
Finally, \pref{app:auxiliary-estimates} collects the OMD analysis and proves the technical lemma on the shared credit comparison.

\subsection{Histories and conditioning}
\label{app:continuation-conditioning}

We retain $\cF_j$, $\cH_t$, and $\cG_t$ with their meanings in
\pref{sec:proof}. Let $\cH_t^{\mathrm{end}}$ denote the history of
observations and random variables revealed through the update and
restart decision at round $t$, with $\cH_0^{\mathrm{end}}$ containing
only the fixed input. Then
\begin{align*}
\cH_t
&=\sigma\bigl(\cH_{t-1}^{\mathrm{end}},
                  (U_J)_{J\text{ launched at }t}\bigr),
&\cG_t&=\sigma(\cH_t,b_t),\\
\cH_{t-1}^{\mathrm{end}}
&\subseteq\cH_t\subseteq\cG_t\subseteq\cH_t^{\mathrm{end}}.
\end{align*}
In particular, the epoch-start history is
$\cF_j=\cH_{\tau_j-1}^{\mathrm{end}}$, before the interval launches
at $\tau_j$. Here the right-hand side denotes the stopped
sigma-algebra, since $\tau_j-1$ is a stopping time for
$(\cH_t^{\mathrm{end}})_{t=0}^T$.
For an unreached epoch, set $\tau_j=T+1$; then
$\{j\leq N\}\in\cF_j$.
Moreover, $\1\{t\in I_j\}$ is $\cH_t$-measurable, because the
restart decision is made after round $t$ and the threshold-crossing
round remains in $I_j$.

To couple each epoch with its fixed-rate continuation, assign to every
deterministic epoch index $j$ a collection of random variables:
routing bits $b_{j,t}\sim\Bern(1/2)$, interval-rate variables
$U_{j,J}^{\mathrm{rate}}\sim\operatorname{Unif}(0,1)$, and
action-sampling variables
$U_{j,t}^{\mathrm{main}},U_{j,t}^{\mathrm{chal}}
\sim\operatorname{Unif}(0,1)$.
These variables are mutually independent across all epochs, rounds,
intervals, and types.
The routing bit determines whether round $t$ uses the main or challenge
branch. The variable $U_{j,J}^{\mathrm{rate}}$ determines the learning
rate assigned to interval $J$. The variables
$U_{j,t}^{\mathrm{main}}$ and $U_{j,t}^{\mathrm{chal}}$ are used to
sample the arm on the main and challenge branches, respectively.
Concretely, a uniform variable $u$ samples from a distribution
$r\in\Delta_K$ by choosing
\[
i=\min\left\{a\in[K]:u\leq\sum_{k=1}^a r_k\right\}.
\]
Thus the main branch uses $r=q_t$ and $u=U_{j,t}^{\mathrm{main}}$,
whereas the challenge branch uses $r=p_t$ and
$u=U_{j,t}^{\mathrm{chal}}$.
Independence concerns these underlying random variables; the sampled
arms may depend on earlier observations through their sampling
distributions.

The actual algorithm reveals only the random variables needed for its
current launches and chosen branch. Before epoch $j$ starts, none of
its assigned variables has been used, so conditional on $\cF_j$ they
remain independent and retain their specified distributions.
The fixed-rate continuation from \pref{sec:proof} uses the same
variables as the actual epoch but suppresses subsequent restarts.
Within this continuation, write
$U_J=U_{j,J}^{\mathrm{rate}}$ and
$\eta_J=\alpha\eta_j/U_J$ for $J\in\mathcal J_j$.
Hence $U_J$ is exactly the uniform variable used at launch in
\pref{alg:main}, with the epoch index omitted.

The actual epoch and its continuation consequently have identical
states, actions, and credit through $\rho_j$, including the
threshold-crossing round. Afterward, actual play uses the random
variables assigned to subsequent epochs, while the continuation
continues using those assigned to epoch $j$. Therefore
$C_{j,\rho_j}^{\mathrm{cont}}=C_j$.
As in \pref{sec:proof}, per-round symbols in continuation calculations
refer to this continued process. In particular, rates assigned after
$\rho_j$ need not equal those assigned at the corresponding actual
future launches.

For the continuation analysis, we reveal information in three stages.
First, $\cF_j^{\mathrm{path}}$ includes all continuation routing bits
and main-branch action-sampling variables.
Because the losses are fixed and the reference distribution changes
only on main rounds, this information determines the entire reference
path $(q_t)_{t=\tau_j}^T$. For each fixed endpoint
$n\in\{\tau_j,\ldots,T\}$, it also determines the cover
$\calD_j(n)$ and the advantages
$(A_J(n))_{J\in\calD_j(n)}$, while leaving the interval-rate variables
independent and uniform.
Second, $\cF_j^{\mathrm{rate}}$ additionally reveals all continuation
interval rates, thereby determining $\Gamma_j$ and the intervals
selected from $\calD_j(n)$ at each fixed endpoint $n$.
Finally, $\cV_{j,t}^{\mathrm{chal}}$ additionally reveals the
challenge-branch action-sampling variables before round $t$.
This determines the current interval states and $p_t$, while leaving
$U_{j,t}^{\mathrm{chal}}$ independent and uniform.
Formally,
\begin{align*}
\cF_j^{\mathrm{path}}
&=\sigma\bigl(\cF_j,
             (b_{j,t},U_{j,t}^{\mathrm{main}})_{t=\tau_j}^T\bigr),\\
\cF_j^{\mathrm{rate}}
&=\sigma\bigl(\cF_j^{\mathrm{path}},
                    (U_J)_{J\in\mathcal J_j}\bigr),\\
\cV_{j,t}^{\mathrm{chal}}
&=\sigma\bigl(\cF_j^{\mathrm{rate}},
                    (U_{j,s}^{\mathrm{chal}})_{\tau_j\leq s<t}\bigr).
\end{align*}
These histories satisfy
\[
\cF_j\subseteq\cF_j^{\mathrm{path}}
\subseteq\cF_j^{\mathrm{rate}}
=\cV_{j,\tau_j}^{\mathrm{chal}}
\subseteq\cV_{j,t}^{\mathrm{chal}}
\subseteq\cV_{j,t+1}^{\mathrm{chal}}.
\]
Table~\ref{tab:filtrations} summarizes the information determined by
each history and the random variables used in the corresponding
conditional calculation.

\begin{table}[!htbp]
\centering
\small
\renewcommand{\arraystretch}{1.16}
\begin{tabular}{@{}>{\raggedright\arraybackslash}p{0.20\linewidth}>{\raggedright\arraybackslash}p{0.39\linewidth}>{\raggedright\arraybackslash}p{0.35\linewidth}@{}}
\hline
History & Information already determined & Remaining randomness used in the analysis \\
\hline
$\cH_t^{\mathrm{end}}$
& Actual play through round $t$, including its restart decision
& Future launch, routing, and action draws \\
$\cF_j$
& History before epoch $j$ starts; $\tau_j$ and the reset state
& All random variables assigned to epoch $j$ \\
$\cH_t$
& Past play, current interval rates and states, and $q_t$
& Routing bit $b_t$, followed by the branch action \\
$\cG_t$
& $\cH_t$ plus $b_t$ and $p_t$
& Arm $i_t\sim p_t$ \\
\hline
$\cF_j^{\mathrm{path}}$
& Continuation routing bits and reference path; $\calD_j(n)$ and
  $(A_J(n))_{J\in\calD_j(n)}$ for fixed $n$
& Interval-rate and challenge-branch action-sampling variables \\
$\cF_j^{\mathrm{rate}}$
& $\cF_j^{\mathrm{path}}$ plus all continuation rates, fixed-endpoint selections,
  and $\Gamma_j$
& Challenge-branch action-sampling variables \\
$\cV_{j,t}^{\mathrm{chal}}$
& $\cF_j^{\mathrm{rate}}$ plus challenge-branch variables before $t$; current
  $p_t$ and interval states
& The independent uniform variable $U_{j,t}^{\mathrm{chal}}$ \\
\hline
\end{tabular}
\caption{Actual histories and the enlarged histories used to analyze
a fixed-rate continuation. The lower three rows describe the successive
conditioning steps in the proof.}
\label{tab:filtrations}
\end{table}

In particular, the independence of $U_{j,t}^{\mathrm{chal}}$ from
$\cV_{j,t}^{\mathrm{chal}}$ gives, on a challenge round,
\[
\Pp(i_t=i\mid\cV_{j,t}^{\mathrm{chal}})=p_{t,i}.
\]
We use this conditional law for challenge-round moments.
For routing unbiasedness, we instead condition on $\cH_t$, before
$b_t$ is revealed.
The tower property transfers bounds obtained under the enlarged
histories back to $\cF_j$.
Although selection at a fixed endpoint is
$\cF_j^{\mathrm{rate}}$-measurable, selection at the random endpoint
$\rho_j$ need not be. We therefore take a union over fixed endpoints
before substituting $\rho_j$, as in the proof of
\pref{lem:dyadic-advantage}.

Finally, we retain the completed point $\bar x$ and completed relative
entropy from \pref{sec:algorithm}. Each interval is initialized with
$x_{t,J,i}=1/(K(T+1))$ for $i\in[K]$.
We denote the corresponding initial distribution on the completed
simplex by
\[
\pi\triangleq
\left(\frac{T}{T+1},
      \frac{1}{K(T+1)},\ldots,\frac{1}{K(T+1)}\right)
\in\Delta_{K+1}.
\]
Its coordinate $0$ is the initial reference mass
$1-\sum_{i=1}^K x_{t,J,i}=T/(T+1)$, and its remaining coordinates
are the initial arm masses.

\subsection{Proof of \pref{lem:fixed-share}}
\label{app:fixed-share}

We prove \pref{lem:fixed-share} by adapting the standard fixed-share
analysis of EXP3.S~\citep{auer2002nonstochastic} to the random routing
bit $b_t$. We first establish a pathwise bound for every epoch prefix,
keeping the update before a possible reset separate from the reset
itself. We then use predictable epoch membership to take expectations
at the random endpoint.

\begin{proof}[Proof of \pref{lem:fixed-share}]
Fix a reached epoch $I_j=\{\tau_j,\ldots,\rho_j\}$ and condition on
$\cF_j$. The epoch starts with $q_{\tau_j,i}=1/K$ and uses the fixed
learning rate $\eta_j$.
Let $\widetilde q_{t+1}$ denote the intermediate distribution produced
by the entropy-OMD step, before fixed share, and let $q_{t+1}^{+}$
denote the distribution after the within-epoch update but before a
possible reset. Thus $q_{t+1}=q_{t+1}^{+}$ for
$\tau_j\leq t<\rho_j$. At the final round $\rho_j$, a reset may replace
$q_{\rho_j+1}^{+}$ by the uniform distribution for the next epoch;
the potential calculation below uses $q_{\rho_j+1}^{+}$, so this reset
does not enter the current epoch's bound.

Define the main learner's loss estimate on every round by
\(
\widehat\ell^q_{t,i}
\triangleq\frac{2\1\{b_t=1,i_t=i\}\ell_{t,i_t}}{q_{t,i}}.
\)
In particular, $\widehat\ell_t^q=0$ on challenge rounds.
Conditional on $\cH_t$, the routing bit is fair and a main round
samples from $p_t=q_t$, so
$\Pp(b_t=1,i_t=i\mid\cH_t)=q_{t,i}/2$. Consequently,
\begin{align}
\E\left[\widehat\ell^q_{t,i}\mid\cH_t\right]
&=\frac{q_{t,i}}{2}\frac{2\ell_{t,i}}{q_{t,i}}
=\ell_{t,i},
\label{eq:main-unbiased}\\
\E\left[\ip{q_t}{(\widehat\ell_t^q)^2}\mid\cH_t\right]
&=\sum_{i=1}^K\frac{q_{t,i}}{2}\,q_{t,i}
  \frac{4\ell_{t,i}^2}{q_{t,i}^2}
=2\sum_{i=1}^K\ell_{t,i}^2
\leq2K.
\label{eq:main-second-moment}
\end{align}

On a main round, the entropy-OMD and fixed-share updates are
\[
\widetilde q_{t+1,i}
=\frac{q_{t,i}e^{-\eta_j\widehat\ell^q_{t,i}}}{\mathsf Z_t^q},
\qquad
\mathsf Z_t^q
\triangleq\sum_{k=1}^Kq_{t,k}e^{-\eta_j\widehat\ell^q_{t,k}},
\qquad
q_{t+1,i}^{+}
=(1-1/T)\widetilde q_{t+1,i}+\frac{1}{KT}.
\]
On challenge rounds, $\widehat\ell_t^q=0$ and we set
$\widetilde q_{t+1}=q_{t+1}^{+}=q_t$.
The exponential-weights identity then remains valid, with
$\mathsf Z_t^q=1$.
For every round, using $e^{-z}\leq1-z+z^2/2$ for $z\geq0$ and
$\log z\leq z-1$ for $z>0$ gives
\[
\log\mathsf Z_t^q
\leq\mathsf Z_t^q-1
\leq-\eta_j\ip{q_t}{\widehat\ell_t^q}
+\frac{\eta_j^2}{2}\ip{q_t}{(\widehat\ell_t^q)^2}.
\]
Combining this bound with the exponential-weights identity yields,
for every $i\in[K]$,
\begin{align}
\eta_j\left(
\ip{q_t}{\widehat\ell_t^q}-\widehat\ell^q_{t,i}
\right)
&=\log\frac{\widetilde q_{t+1,i}}{q_{t,i}}
+\log\mathsf Z_t^q+\eta_j\ip{q_t}{\widehat\ell_t^q}
\notag\\
&\leq\log\frac{\widetilde q_{t+1,i}}{q_{t,i}}
+\frac{\eta_j^2}{2}\ip{q_t}{(\widehat\ell_t^q)^2}
\notag\\
&\leq\log\frac{q_{t+1,i}^{+}}{q_{t,i}}
+\1\{b_t=1\}\log\frac{T}{T-1}
+\frac{\eta_j^2}{2}\ip{q_t}{(\widehat\ell_t^q)^2}.
\label{eq:one-step-fixed-share}
\end{align}
The last step uses
$q_{t+1,i}^{+}\geq(1-1/T)\widetilde q_{t+1,i}$ on main rounds;
on challenge rounds, every term is zero. To further analyze the right hand side of \pref{eq:one-step-fixed-share}, for any prefix $\{\tau_j,\ldots,n\}\subseteq I_j$, write
$u_t^{(S)}=e_{a_t}$.
Since there is no reset inside this prefix, we have
$q_t=q_t^{+}$ for $\tau_j<t\leq n$. Hence, we have
\begin{align*}
\sum_{t=\tau_j}^n
\log\frac{q_{t+1,a_t}^{+}}{q_{t,a_t}}
=\log q_{n+1,a_n}^{+}-\log q_{\tau_j,a_{\tau_j}}
+\sum_{t=\tau_j+1}^n
  \log\frac{q_{t,a_{t-1}}}{q_{t,a_t}}\leq\log K+S\log(KT),
\end{align*}
where we use $q_{n+1,a_n}^{+}\leq1$ and
$q_{\tau_j,a_{\tau_j}}=1/K$. Moreover, only comparator switches
contribute to the last sum, so at most $S$ terms are nonzero.
Each term is at most $\log(KT)$ because every coordinate of $q_t$
is at least $1/(KT)$, a lower bound preserved by fixed share on
main rounds and by the unchanged reference distribution on
challenge rounds.
In addition, we have
\[
\sum_{t=\tau_j}^n\1\{b_t=1\}\log\frac{T}{T-1}
\leq T\log\frac{T}{T-1}
\leq2.
\]
Summing \pref{eq:one-step-fixed-share} with $i=a_t$ therefore gives
the pathwise prefix bound
\begin{align}
\sum_{t=\tau_j}^n
\ip{q_t-u_t^{(S)}}{\widehat\ell_t^q}
\leq
\frac{\log K+S\log(KT)+2}{\eta_j}
+\frac{\eta_j}{2}
\sum_{t=\tau_j}^n\ip{q_t}{(\widehat\ell_t^q)^2}.
\label{eq:estimated-fixed-share}
\end{align}
The numerator of the first term is at most
$3(S+1)\log(KT)$. Since this bound holds for every realized epoch
prefix, it also holds at $n=\rho_j$.

It remains to take expectations without conditioning on the realized
endpoint. For each deterministic round $t$, the indicator
$\1\{t\in I_j\}$ is $\cH_t$-measurable: whether round $t$ belongs to
epoch $j$ is decided before the current routing bit and action.
Moreover, whenever $t\in I_j$, the epoch-start history $\cF_j$ has
already been revealed. Thus, for any bounded $\cF_j$-measurable
random variable $X$, the product $X\1\{t\in I_j\}$ is
$\cH_t$-measurable.
Applying the tower property to
\pref{eq:main-unbiased} and \pref{eq:main-second-moment} yields
\begin{align}
\E_j\left[
\sum_{t=1}^T\1\{t\in I_j\}
\ip{q_t-u_t^{(S)}}{\widehat\ell_t^q-\ell_t}
\right]
&=0,\label{eq:T-epoch-zero}\\
\E_j\left[
\sum_{t=1}^T\1\{t\in I_j\}
\ip{q_t}{(\widehat\ell_t^q)^2}
\right]
&\leq2K\sum_{t=1}^T\E_j[\1\{t\in I_j\}]
=2K\E_j[H_j].\label{eq:T-epoch-stab}
\end{align}
The first equality follows from \pref{eq:main-unbiased} and the
$\cH_t$-measurability of $q_t$ and $u_t^{(S)}$: the former is
determined before the current routing bit and action, and the
latter is fixed by the oblivious loss table. Hence both are
$\cH_t$-measurable. Setting $n=\rho_j$ in \pref{eq:estimated-fixed-share} and taking
$\E_j$, we have
\begin{align*}
\E_j\left[
\sum_{t\in I_j}\ip{q_t-u_t^{(S)}}{\ell_t}
\right]
&=\E_j\left[
\sum_{t\in I_j}\ip{q_t-u_t^{(S)}}{\widehat\ell_t^q}
\right]\tag{since \pref{eq:T-epoch-zero}}\\
&\leq
\frac{\log K+S\log(KT)+2}{\eta_j}
+\frac{\eta_j}{2}\E_j\left[
\sum_{t\in I_j}\ip{q_t}{(\widehat\ell_t^q)^2}
\right]\tag{since \pref{eq:estimated-fixed-share}}\\
&\leq
\frac{3(S+1)\log(KT)}{\eta_j}
+\eta_jK\E_j[H_j]. \tag{since \pref{eq:T-epoch-stab}}
\end{align*}
This finishes the proof.
\end{proof}

\subsection{Proof of \pref{lem:dyadic-cover}}
\label{app:dyadic-cover}

We now prove \pref{lem:dyadic-cover}, showing that each epoch prefix
can be partitioned into at most $(S+1)L$ canonical dyadic intervals,
on each of which the comparator is constant. We construct this
partition by taking the maximal canonical dyadic intervals contained
in each constant comparator segment. At most two such intervals occur
at each scale, giving the required logarithmic bound on the cover size.

\begin{proof}[Proof of \pref{lem:dyadic-cover}]
Take all canonical dyadic intervals contained in an integer interval
$I\subseteq[T]$ that are maximal under inclusion. Singletons are
canonical, so these intervals cover $I$. Canonical dyadic intervals
are either disjoint or nested, and maximality therefore makes the
chosen intervals disjoint. At any fixed dyadic length, there can be at
most two maximal intervals. The dyadic parent of a maximal interval, taken in the full dyadic
grid, is not contained in $I$, so it crosses an endpoint of $I$. At a fixed
scale, at most one parent crosses each endpoint, and each such parent
has at most one child contained in $I$; otherwise the parent itself
would be contained. This gives at most two maximal children. As there are $\lfloor\log_2 T\rfloor+1$ possible dyadic lengths and at most two maximal intervals occur at each length, the
construction uses at most $2(\lfloor\log_2 T\rfloor+1)\leq L$ intervals. Figure~\ref{fig:dyadic-cover} illustrates this construction.

\begin{figure}[t]
\centering
\includegraphics[width=\linewidth]{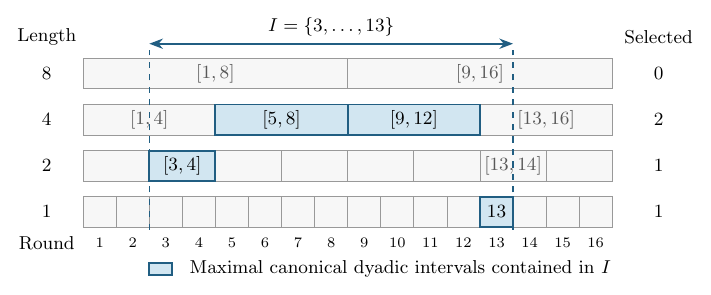}
\caption{Maximal canonical dyadic intervals contained in
$I=\{3,\ldots,13\}$, with $T=16$. The highlighted intervals form
the disjoint partition $[3,4]\cup[5,8]\cup[9,12]\cup\{13\}$,
where brackets denote integer intervals. Each highlighted interval's
parent crosses a dashed boundary of $I$, and at most two highlighted
intervals occur at each length.}
\label{fig:dyadic-cover}
\end{figure}

Apply this construction separately to every constant segment of
$\mathbf u^{(S)}$ restricted to $\{\tau_j,\ldots,n\}$. There are at
most $S+1$ such segments. Consequently, the resulting family is
precisely $\calD_j(n)$ and
has cardinality at most $(S+1)L$.
\end{proof}

\subsection{Proof of \pref{lem:random-rate-recovery}}
\label{app:random-rate-recovery}

We next prove \pref{lem:random-rate-recovery}, which bounds the full
signed advantage $\sum_{J\in\calD_j(n)}A_J(n)$ in terms of the
randomly selected contribution $W_j(n)$. Conditioning on $\cF_j^{\mathrm{path}}$ fixes the cover
$\calD_j(n)$ and these advantages while leaving the interval-rate
draws independent. The resulting selection indicators are therefore
independent, allowing us to apply an exponential-moment bound.

\begin{proof}[Proof of \pref{lem:random-rate-recovery}]
Fix $n\in\{\tau_j,\ldots,T\}$ and condition on
$\cF_j^{\mathrm{path}}$. This fixes $\calD_j(n)$ and
$(A_J(n))_{J\in\calD_j(n)}$, while the $U_J$ remain independent
uniform variables. For $J\in\calD_j(n)$, define
$v_J(n)\triangleq g_J(n)/d_j=\alpha A_J(n)/d_j$.
Since $\eta_J=\alpha\eta_j/U_J$ and $d_j=64L/\eta_j$, the selection
event is $X_J(n)=\1\{U_J\leq v_J(n)\}$. Hence
\begin{align}
\Pp(X_J(n)=1\mid\cF_j^{\mathrm{path}})
=\min\{1,(v_J(n))_+\}.
\label{eq:useful-probability}
\end{align}

For every fixed $\lambda\in(0,1]$ and $J\in\calD_j(n)$, we claim that
\begin{align}
\E\left[
e^{\lambda v_J(n)-v_J(n)X_J(n)}
\mid\cF_j^{\mathrm{path}}
\right]
\leq e^{\lambda^2/2}.
\label{eq:signed-recovery-mgf}
\end{align}
If $v_J(n)\leq0$ or $v_J(n)\geq1$, then $X_J(n)=0$ or
$X_J(n)=1$, respectively, and the exponent is nonpositive.
For $0<v_J(n)<1$, \pref{eq:useful-probability} gives
\begin{align*}
\E\left[
e^{\lambda v_J(n)-v_J(n)X_J(n)}
\mid\cF_j^{\mathrm{path}}
\right]
=e^{\lambda v_J(n)}
  \left[1-v_J(n)(1-e^{-v_J(n)})\right]\leq e^{\lambda v_J(n)-v_J(n)^2/2}
\leq e^{\lambda^2/2}.
\end{align*}
The first inequality uses $1-x\leq e^{-x}$ and
$1-e^{-v_J(n)}\geq v_J(n)-v_J(n)^2/2\geq v_J(n)/2$.
The second follows from the AM--GM inequality
$\lambda v_J(n)-\frac{v_J(n)^2}{2}\leq\frac{\lambda^2}{2}$.

Recall that $m=(S+1)L$ and set
$r(n)\triangleq\sum_{J\in\calD_j(n)}v_J(n)
=(\alpha/d_j)\sum_{J\in\calD_j(n)}A_J(n)$.
Since
$W_j(n)/d_j=\sum_{J\in\calD_j(n)}v_J(n)X_J(n)$,
multiplying \pref{eq:signed-recovery-mgf} over the conditionally
independent selections and using $|\calD_j(n)|\leq m$ from
\pref{lem:dyadic-cover} yields
\begin{align}
\E\left[
\exp\left\{\lambda r(n)-\frac{W_j(n)}{d_j}\right\}
\mid\cF_j^{\mathrm{path}}
\right]
\leq e^{m\lambda^2/2}.
\label{eq:recovery-product-mgf}
\end{align}

If $r(n)\leq0$, then \pref{eq:advantage-from-selected} follows
immediately from $W_j(n)\geq0$.
Otherwise, choose $\lambda^\star(n)=\min\{1,r(n)/m\}$.
Because $r(n)$ is fixed under the conditioning on
$\cF_j^{\mathrm{path}}$, this choice is also fixed.
Applying Markov's inequality to \pref{eq:recovery-product-mgf}
with $\lambda=\lambda^\star(n)$ therefore gives, with conditional
probability at least $1-\delta_0$,
\begin{align}
\lambda^\star(n)r(n)-\frac{m(\lambda^\star(n))^2}{2}
\leq\frac{W_j(n)}{d_j}+\log\frac1{\delta_0}
\triangleq h(n).
\label{eq:optimized-recovery-event}
\end{align}
For $0<r(n)\leq m$, substituting $\lambda^\star(n)=r(n)/m$
gives $r(n)^2/(2m)\leq h(n)$, hence
$r(n)\leq\sqrt{2mh(n)}$.
For $r(n)>m$, substituting $\lambda^\star(n)=1$ gives
$r(n)-m/2\leq h(n)$.
Thus $h(n)>m/2$ and
$r(n)\leq h(n)+m/2\leq h(n)+\sqrt{2mh(n)}$.
Consequently, in both cases,
\[
r(n)\leq h(n)+\sqrt{2mh(n)}.
\]
Using $\log(1/\delta_0)=10\log(2KT)\leq L/2$ and multiplying
by $d_j/\alpha$, we obtain
\[
\sum_{J\in\calD_j(n)}A_J(n)
\leq\frac1\alpha\left[
W_j(n)+\frac{d_jL}{2}
+\sqrt{2md_j\left(W_j(n)+\frac{d_jL}{2}\right)}
\right],
\]
which is \pref{eq:advantage-from-selected}.
This bound holds with conditional probability at least
$1-\delta_0$ given $\cF_j^{\mathrm{path}}$.
Since $\cF_j\subseteq\cF_j^{\mathrm{path}}$, the tower property
gives the same guarantee conditional on $\cF_j$.
\end{proof}

\subsection{Proof of \pref{lem:selected-credit-recovery}}
\label{app:selected-credit-recovery}

In this section, we prove \pref{lem:selected-credit-recovery}, which shows that continuation
credit retains a constant fraction of the selected advantage $W_j(n)$.
We apply the shared credit comparison in \pref{lem:shared-credit} to the
selected family and use the selection threshold to pay its arm-comparison
penalties. The remaining deterministic costs are absorbed into
$KT\eta_j/20$.

\begin{proof}[Proof of \pref{lem:selected-credit-recovery}]
Fix a reached epoch $I_j$ and an endpoint
$n\in\{\tau_j,\ldots,T\}$, and condition on
$\cF_j^{\mathrm{rate}}$.
We use the shared-credit comparison in \pref{lem:shared-credit},
which lower-bounds continuation credit by the total scaled
advantage of a disjoint interval family, minus the comparison
penalties and error terms in \pref{eq:selected-credit-prefix}.
Specifically, we apply the lemma with
\[
\mathcal S_j=\mathcal S_j(n)
\triangleq\{J\in\calD_j(n):X_J(n)=1\},
\qquad
\delta=\delta_0,
\]
using the comparator arm $a_J$ on each selected interval.

We first verify that $\mathcal S_j(n)$ satisfies the hypotheses
of \pref{lem:shared-credit}. By construction,
$\calD_j(n)\subseteq\mathcal J_j$ is a disjoint family and
$u_t^{(S)}=e_{a_J}$ for every $t\in J$, so these properties also
hold for $\mathcal S_j(n)$. Moreover,
$\cF_j^{\mathrm{path}}$ determines $\calD_j(n)$ and $A_J(n)$,
while $\cF_j^{\mathrm{rate}}$ additionally reveals $\eta_J$.
Consequently, $X_J(n)=\1\{\eta_JA_J(n)\geq64L\}$ and
$\mathcal S_j(n)$ are $\cF_j^{\mathrm{rate}}$-measurable.
By \pref{app:continuation-conditioning}, the challenge-action
seeds remain independent and uniform under this conditioning,
preserving the conditional sampling law used in
\pref{lem:shared-credit}.

Applying \pref{eq:selected-credit-prefix} with the above
parameters and evaluating its prefix bound at $n$ therefore
gives, with conditional probability at least $1-\delta_0$,
\begin{align}\label{eq:credit}
C_{j,n}^{\mathrm{cont}}
&\geq
\alpha\sum_{J\in\mathcal S_j(n)}A_J(n)
-\sum_{J\in\mathcal S_j(n)}
 \frac{\alpha}{\eta_J}
 \log\frac{6TK}{\delta_0\pi_{a_J}}
-\frac2{\eta_j}-4\Gamma_j
-2\log\frac3{\delta_0}
-2\alpha\sqrt{T\log\frac3{\delta_0}},
\end{align}
where $\pi$ is the initial distribution of each interval
subroutine over the reference coordinate $0$ and the $K$ arms:
\(
\pi_0=\frac{T}{T+1},
\pi_i=\frac1{K(T+1)}
\) for $i\in[K]$. These weights follow from the subroutine initialization in
\pref{alg:main}.

It remains to bound the arm-comparison penalties and the three
scalar error terms. Using the value of $\pi_{a_J}$ and the
definitions of $\delta_0$ and $L$, we obtain that
\begin{align*}
\frac{\alpha}{\eta_J}\log\frac{6TK}{\delta_0\pi_{a_J}}
&=\frac{\alpha}{\eta_J}\left(10\log(2KT)+\log\bigl(6TK^2(T+1)\bigr)\right)
\leq \frac{\alpha L}{\eta_J}\leq \frac{\alpha A_J(n)}{64},
\end{align*}
where the first equality uses the definition of $\delta_0$ and $\pi_{a_J}$ and the first inequality uses
$L\geq20\log(2KT)$, and the last inequality is because for every $J\in\mathcal S_j(n)$, $\eta_JA_J(n)\geq64L$ as $X_J(n)=1$. Summing this inequality over the selected intervals and
substituting into the \pref{eq:credit} yields
\begin{align}
C_{j,n}^{\mathrm{cont}}
&\geq
\frac{63\alpha}{64}
\sum_{J\in\mathcal S_j(n)}A_J(n)
-4\Gamma_j-\frac2{\eta_j}
-2\log\frac3{\delta_0}
-2\alpha\sqrt{T\log\frac3{\delta_0}}.
\label{eq:selected-credit-before-constants}
\end{align}
The parameter choices give
$\eta_j\geq100L/\sqrt{KT}$,
$\alpha=1/(100L^2)$, and $\log(3/\delta_0)\leq L$.
Consequently, we have
\begin{align*}
\frac2{\eta_j}
&\leq\frac{KT\eta_j}{5000L^2}
\leq\frac{KT\eta_j}{100},\\
2\log\frac3{\delta_0}
&\leq2L
\leq\frac{KT\eta_j}{100},\\
2\alpha\sqrt{T\log\frac3{\delta_0}}
&\leq\frac{2\sqrt{TL}}{100L^2}
\leq\frac{2\sqrt{1/K}}{10000L^{5/2}}KT\eta_j
\leq\frac{KT\eta_j}{100}.
\end{align*}
Thus these three costs sum to at most $3KT\eta_j/100$.
By the definitions of the selected family and $g_J(n)$,
\[
\alpha\sum_{J\in\mathcal S_j(n)}A_J(n)
=
\sum_{J\in\calD_j(n)}g_J(n)X_J(n)
=
W_j(n)\geq0,
\]
where nonnegativity follows from $A_J(n)>0$ on every selected
interval. Substituting into
\pref{eq:selected-credit-before-constants} gives
\[
C_{j,n}^{\mathrm{cont}}
\geq
\frac{63}{64}W_j(n)-\frac{3KT\eta_j}{100}-4\Gamma_j
\geq
\frac34W_j(n)-\frac{KT\eta_j}{20}-4\Gamma_j,
\]
where the last inequality uses $W_j(n)\geq0$. This proves \pref{eq:credit-controls-selected} with conditional
probability at least $1-\delta_0$ given
$\cF_j^{\mathrm{rate}}$. Since
$\cF_j\subseteq\cF_j^{\mathrm{rate}}$, the tower property yields the same guarantee for $\cF_j$.
\end{proof}

\subsection{Proof of \pref{lem:continuation-charge}}
\label{app:continuation-charge}

We next prove \pref{lem:continuation-charge}, which bounds the expected
aggregate charge $\Gamma_j$. 

\begin{proof}[Proof of \pref{lem:continuation-charge}]
Condition on $\cF_j$, which fixes $\mathcal J_j$ and $\eta_j$
while leaving the $U_J$ independent and uniform.
For $U\sim\operatorname{Unif}(0,1)$ and $c>0$, direct integration gives
\begin{align}
\E\left[\min\left\{\frac cU,1\right\}\right]
&=
\begin{cases}
c+\displaystyle\int_c^1\frac c u\,du
=c(1+\log(1/c)),&0<c<1,\\
1,&c\geq1,
\end{cases}
\notag\\
&\leq c\left(1+\logp\frac1c\right),
\label{eq:uniform-min}
\end{align}
where we recall that $\log^+(z)=\max\{0,\log z\}$. Applying \pref{eq:uniform-min} with $c=K\eta_j$ and using
$\eta_J=\alpha\eta_j/U_J$, we obtain
\begin{align*}
\E_j[\min\{K\eta_J,\alpha\}]
=\alpha\E_j\left[
\min\left\{\frac{K\eta_j}{U_J},1\right\}\right]\leq\alpha K\eta_j
\left(1+\logp\frac1{K\eta_j}\right)
\leq\alpha K\eta_jL.
\end{align*}
The last inequality uses $\eta_j\geq\eta_1$ and the definition of $L$. At each dyadic scale, the canonical intervals have total length
at most $T$, and there are at most $L$ scales. Therefore, we have
\begin{align*}
\E_j[\Gamma_j]
=\sum_{J\in\mathcal J_j}|J|\,
  \E_j[\min\{K\eta_J,\alpha\}]\leq\alpha K\eta_jL\sum_{J\in\mathcal J_j}|J|
\leq\alpha KT\eta_jL^2
=\frac{KT\eta_j}{100},
\end{align*}
which finishes the proof.
\end{proof}

\subsection{Auxiliary estimates}
\label{app:auxiliary-estimates}
\subsubsection{OMD Analysis}

The first lemma bounds an interval subroutine's regret against
any $u\in\cX$ by the standard entropy-OMD argument. The completion
from \pref{sec:algorithm} expresses the update on $\Delta_{K+1}$,
allowing us to telescope the KL divergence to the comparator.

\begin{lemma}
\label{lem:subsimplex-omd}
Fix an interval subroutine $J$ in a fixed-rate continuation and an endpoint $n\in[T]$.
On a challenge round, define
$y_{t,J,0}\triangleq\ip{q_t}{\widehat\ell_{t,J}}$ and
$y_{t,J,i}\triangleq\widehat\ell_{t,J,i}$ for $i\in[K]$.
Then, for every $u\in\cX$,
\begin{align}
\sum_{\substack{t\in J,\ t\leq n\\b_t=0}}
\ip{x_{t,J}-u}{z_{t,J}}
\leq
\frac{\KL(\bar u\|\pi)}{\eta_J}
+\frac{\eta_J}{2}
\sum_{\substack{t\in J,\ t\leq n\\b_t=0}}
\ip{\bar x_{t,J}}{y_{t,J}^2},
\label{eq:subsimplex-omd-bound}
\end{align}
where $y_{t,J}^2$ denotes coordinatewise squaring.
Here $\bar x=(1-\|x\|_1,x_1,\ldots,x_K)$ is the completion of
$x\in\cX$, and $\pi$ is the completed initial distribution,
with $\pi_0=T/(T+1)$ and $\pi_i=1/(K(T+1))$ for $i\in[K]$.
\end{lemma}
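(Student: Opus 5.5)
The plan is to rewrite the interval update on $\cX$ as an ordinary exponential-weights step on the completed simplex $\Delta_{K+1}$, and then run the standard entropy-OMD potential argument with the telescoping KL.

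\textbf{Step 1: reduce to $\Delta_{K+1}$.} For $x\in\cX$ with completion $\bar x$, we have
\[
\ip{\bar x}{y_{t,J}}=(1-\|x\|_1)\ip{q_t}{\widehat\ell_{t,J}}+\sum_i x_i\widehat\ell_{t,J,i}=\ip{q_t}{\widehat\ell_{t,J}}+\ip{x}{z_{t,J}}.
\]
Hence $\ip{x-u}{z_{t,J}}=\ip{\bar x-\bar u}{y_{t,J}}$ for all $x,u\in\cX$. The map $x\mapsto\bar x$ is a bijection from $\cX$ onto $\Delta_{K+1}$. So the update in \pref{line:interval-update} is
\[
\bar x_{t+1,J}=\arg\min_{w\in\Delta_{K+1}}\{\eta_J\ip{w}{y_{t,J}}+\KL(w\|\bar x_{t,J})\}.
\]
This has the closed form $\bar x_{t+1,J,k}\propto\bar x_{t,J,k}e^{-\eta_Jy_{t,J,k}}$.

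\textbf{Step 2: handle rounds where $x_{t,J}$ does not move.} On main rounds the interval state is unchanged, and these rounds do not appear in the sum. The state at the first round of $J$ is $\pi$ by \pref{line:interval-init}. In the fixed-rate continuation there are no restarts, so the chain over the challenge rounds in $J\cap[1,n]$ starts at $\pi$ and evolves only by the multiplicative updates.

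\textbf{Step 3: one-step inequality.} Every coordinate satisfies $y_{t,J,k}\geq0$. Indeed, $\widehat\ell_{t,J}\geq0$ and $\ip{q_t}{\widehat\ell_{t,J}}\geq0$. We therefore apply $e^{-z}\leq1-z+z^2/2$ and $\log z\leq z-1$ to the normalizer $\mathsf Z=\sum_k\bar x_{t,J,k}e^{-\eta_Jy_{t,J,k}}$, exactly as in the proof of \pref{lem:fixed-share}. This gives
\[
\eta_J\ip{\bar x_{t,J}-\bar u}{y_{t,J}}\leq\KL(\bar u\|\bar x_{t,J})-\KL(\bar u\|\bar x_{t+1,J})+\frac{\eta_J^2}{2}\ip{\bar x_{t,J}}{y_{t,J}^2}.
\]
This is the usual identity $\sum_k\bar u_k\log(\bar x_{t+1,k}/\bar x_{t,k})=-\eta_J\ip{\bar u}{y}-\log\mathsf Z$.

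\textbf{Step 4: sum and telescope.} Summing over the challenge rounds $t\in J$, $t\leq n$, the KL terms telescope from $\KL(\bar u\|\pi)$, and we drop the nonnegative final KL. Dividing by $\eta_J$ and applying Step 1 gives \pref{eq:subsimplex-omd-bound}.

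The main point requiring care is Step 1 together with nonnegativity. The shift by $\ip{q_t}{\widehat\ell_{t,J}}$ makes $z_{t,J}$ possibly negative, so the second-order bound would fail if applied to $z$ directly. Working with $y_{t,J}\geq0$ on the completed simplex avoids this issue. If $\bar u$ has zero coordinates, the convention $0\log0=0$ handles them.
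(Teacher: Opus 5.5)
Your proposal is correct and follows essentially the same route as the paper: the paper's proof also rewrites the update as exponential weights on the completed simplex via $\ip{\bar x}{y_{t,J}}=y_{t,J,0}+\ip{x}{z_{t,J}}$, uses $y_{t,J}\geq0$ with $e^{-v}\leq1-v+v^2/2$ and $\log z\leq z-1$ to bound the normalizer, and telescopes the KL potential from $\pi$. Your explicit remarks that main rounds leave the interval state unchanged and that the continuation has no restarts fill in points the paper leaves implicit, without changing the argument.
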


\begin{proof}
By the definitions of the completion and $z_{t,J}$, for every
$x\in\cX$,
\[
\ip{\bar x}{y_{t,J}}
=(1-\|x\|_1)y_{t,J,0}
+\sum_{i=1}^Kx_i\widehat\ell_{t,J,i}
=y_{t,J,0}+\ip{x}{z_{t,J}}.
\]
The first term is independent of $x$, and the completed
regularizer is negative entropy on $\Delta_{K+1}$.
Thus, on each challenge round, the OMD update becomes
\(
\bar x_{t+1,J,k}
=
\frac{\bar x_{t,J,k}e^{-\eta_Jy_{t,J,k}}}{\mathsf Z_{t,J}},
\)
where
$
\mathsf Z_{t,J}
\triangleq
\sum_{k=0}^K
\bar x_{t,J,k}e^{-\eta_Jy_{t,J,k}}.
$
Since $y_{t,J,k}\geq0$, applying
$e^{-v}\leq1-v+v^2/2$ for $v\geq0$ and
$\log z\leq z-1$ for $z>0$ gives
\[
\log\mathsf Z_{t,J}
\leq
-\eta_J\ip{\bar x_{t,J}}{y_{t,J}}
+\frac{\eta_J^2}{2}
\ip{\bar x_{t,J}}{y_{t,J}^2}.
\]
The exponential-weights update therefore yields the standard
KL-potential bound
\begin{align*}
\KL(\bar u\|\bar x_{t+1,J})
-\KL(\bar u\|\bar x_{t,J})
=
\eta_J\ip{\bar u}{y_{t,J}}
+\log\mathsf Z_{t,J}\leq
-\eta_J\ip{\bar x_{t,J}-\bar u}{y_{t,J}}
+\frac{\eta_J^2}{2}
\ip{\bar x_{t,J}}{y_{t,J}^2}.
\end{align*}
Rearranging and summing over
$t\in J$ with $t\leq n$ and $b_t=0$ telescopes the KL terms.
The initial completed distribution of the subroutine is $\pi$,
and the terminal KL divergence is nonnegative. Hence
\begin{align*}
\sum_{\substack{t\in J,\ t\leq n\\b_t=0}}
\ip{\bar x_{t,J}-\bar u}{y_{t,J}}
\leq
\frac{\KL(\bar u\|\pi)}{\eta_J}
+\frac{\eta_J}{2}
\sum_{\substack{t\in J,\ t\leq n\\b_t=0}}
\ip{\bar x_{t,J}}{y_{t,J}^2}.
\end{align*}
Finally, using
$
\ip{\bar x_{t,J}-\bar u}{y_{t,J}}
=
\ip{x_{t,J}-u}{z_{t,J}}
$
gives \pref{eq:subsimplex-omd-bound}.
The argument holds for every $\eta_J>0$.
\end{proof}

\subsubsection{A shared credit comparison}
\label{app:shared-credit}

The next technical lemma relates continuation credit to the advantages of
selected, disjoint intervals. We apply the OMD bound with the
comparator arm on selected intervals and with $\mathbf0$ elsewhere.
We then control the combined correction and curvature cost by a
martingale bound, the arm estimates by the implicit-exploration
inequality of \citet[Corollary~1]{neu2015explore}, and the reference
estimates by Freedman's inequality.

\begin{lemma}
\label{lem:shared-credit}
Fix a reached epoch $I_j$, and let
$\mathcal S_j\subseteq\mathcal J_j$ be an
$\cF_j^{\mathrm{rate}}$-measurable collection of disjoint intervals
such that $u_t^{(S)}=e_{a_J}$ on every $J\in\mathcal S_j$.
For every $\delta\in(0,1)$, conditional on
$\cF_j^{\mathrm{rate}}$, with probability at least $1-\delta$,
simultaneously for all $n\in\{\tau_j,\ldots,T\}$,
\begin{align}
C_{j,n}^{\mathrm{cont}}
&\geq
\alpha\sum_{J\in\mathcal S_j}A_J(n)
-\sum_{J\in\mathcal S_j}\frac{\alpha}{\eta_J}
 \log\frac{6TK}{\delta\pi_{a_J}}
-\frac2{\eta_j}-4\Gamma_j-2\log\frac3\delta
-2\alpha\sqrt{T\log\frac3\delta}.
\label{eq:selected-credit-prefix}
\end{align}
Here $\pi$ is the completed initial distribution from
\pref{app:continuation-conditioning}. For an interval that may
extend beyond the current prefix, we use
\[
A_J(n)\triangleq
\sum_{\substack{t\in J,\ t\leq n\\b_t=0}}
\ip{q_t-e_{a_J}}{\ell_t},
\]
which agrees with \pref{eq:local-advantage} whenever
$J\subseteq\{\tau_j,\ldots,n\}$.
\end{lemma}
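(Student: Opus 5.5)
Our plan is to write the continuation credit exactly as a sum of interval contributions, apply \pref{lem:subsimplex-omd} to each active subroutine with comparator $u=e_{a_J}$ when $J\in\mathcal S_j$ and $u=\mathbf0$ otherwise, and then remove the estimation noise with concentration inequalities.

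\textbf{Step 1: decompose the credit.} On a challenge round, \pref{eq:challenge-distribution} gives
\[
\ip{q_t-p_t}{\ell_t}=\alpha\sum_{J\in\cA_t}\ip{q_t\|x_{t,J}\|_1-x_{t,J}}{\ell_t}=-\alpha\sum_{J\in\cA_t}\ip{x_{t,J}}{\ell_t-\ip{q_t}{\ell_t}\mathbf1}.
\]
We write $C_{j,n}^{\mathrm{cont}}=\sum_t\E[Z_t\mid\cV_{j,t}^{\mathrm{chal}}]+M_n$, where $M_n$ is a martingale with increments bounded by $1$. The conditional variance of $Z_t$ is at most $\E[(q_{t,i_t}/p_{t,i_t}-1)^2]\le\sum_i(q_{t,i}-p_{t,i})^2/p_{t,i}\le 2\|q_t-p_t\|_1\le 4\alpha\sum_J\|x_{t,J}\|_1$. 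Freedman's inequality (uniform in $n$) then bounds $|M_n|$ by a curvature-type term plus $2\log(3/\delta)$ and $2\alpha\sqrt{T\log(3/\delta)}$.

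\textbf{Step 2: apply the OMD bound per interval.} We have $-\alpha\ip{x_{t,J}}{\tilde z_{t,J}}$, where $\tilde z_{t,J}$ is the true centered loss. By \pref{lem:subsimplex-omd},
\[
-\sum_t\ip{x_{t,J}}{z_{t,J}}\ge -\ip{u}{z}\text{-sum}-\KL(\bar u\|\pi)/\eta_J-\text{curvature}.
\]
For $u=\mathbf0$ we have $\KL(\bar{\mathbf0}\|\pi)=\log(1+1/T)\le 1/T$. Summing $\alpha/(\eta_J T)$ over all $J$ gives at most $\sum_J\alpha/(\eta_J T)\le\sum_J U_J/(\eta_jT)\le 2/\eta_j$, which explains the $2/\eta_j$ term (there are at most $2T$ intervals). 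For $u=e_{a_J}$ we have $\KL\le\log(1/\pi_{a_J})$, and the extra $\log(6TK/\delta)$ absorbs the union bound over arms and intervals. Because $\ip{q_t}{\ell_t}-\ell_{t,a_J}$ is summed over challenge rounds of $J$ up to $n$, this produces $\alpha A_J(n)$ after replacing estimates by true losses.

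\textbf{Step 3: replace estimates by true losses.} This step is the main obstacle. We must convert $\sum_t\ip{x_{t,J}}{z_{t,J}}$ into its true-loss counterpart and $\sum_t(\widehat\ell_{t,J,a_J}-\ip{q_t}{\widehat\ell_{t,J}})$ into $-A_J(n)$, with the right signs.
\begin{itemize}
\item For the arm term, the implicit-exploration estimate is optimistic. By \citet[Corollary~1]{neu2015explore}, $\sum_t\widehat\ell_{t,J,a}\le\sum_t\ell_{t,a}+\log(\cdot/\delta)/(2\eta_J)$, so the cost is absorbed into the $\frac{\alpha}{\eta_J}\log\frac{6TK}{\delta\pi_{a_J}}$ term. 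Only intervals in $\mathcal S_j$ need this, and they are fixed given $\cF_j^{\mathrm{rate}}$.
\item For the reference and $x$-weighted terms, the estimation bias $\ell_{t,i}\eta_J/(p_{t,i}+\eta_J)$ weighted by $p$-mass gives $K\eta_J$ per round. Since $\alpha x_{t,J}\le p_t$, the weighting by $\alpha x$ caps this at $\min\{K\eta_J,\alpha\}$.
\item The curvature term $\frac{\alpha\eta_J}{2}\ip{\bar x}{y^2}$ has conditional mean at most $\alpha\eta_J(K+1/\min p)$-type quantities. We would bound it by $\min\{K\eta_J,\alpha\}$ per round using $\alpha x_{t,J}\le p_t$ and $\eta_J y\le 1$.
\end{itemize}
Summing over $t\in J$ yields $|J|\min\{K\eta_J,\alpha\}$, that is, $\Gamma_j$. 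The remaining fluctuations of these sums and of the centered $\ip{q_t}{\widehat\ell}$ terms are handled by one Freedman bound on the aggregated martingale across all $J$ (not per interval), whose variance is again at most $\Gamma_j$. Through $\sqrt{V\log}\le V+\log$, this yields the factor $4\Gamma_j$ and the $\log(3/\delta)$ terms. A union bound over the three concentration events with $\delta/3$ each, and over endpoints via maximal (Freedman/Ville) inequalities, gives simultaneity in $n$.
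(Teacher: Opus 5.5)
Your overall architecture matches the paper: comparators $e_{a_J}$ on $\mathcal S_j$ and $\mathbf 0$ elsewhere, the $2/\eta_j$ computation, Neu's IX bound for $\widehat\ell_{t,J,a_J}$, and bias and curvature bounded by $\min\{K\eta_J,\alpha\}$ per round. There is, however, a genuine gap in the concentration bookkeeping of Steps~1 and~3.

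You first replace $Z_t$ by its conditional mean $-\alpha\sum_{J\in\cA_t}\ip{x_{t,J}}{\tilde z_{t,J}}$. You then replace $\tilde z_{t,J}$ by the IX estimates $z_{t,J}$ that \pref{lem:subsimplex-omd} needs. Each replacement creates a martingale whose conditional variance per challenge round is of order $\alpha\sum_{J\in\cA_t}\|x_{t,J}\|_1$, and this is not a curvature-type quantity. For example, take a subroutine with $x_{t,J}\approx e_a$ while $q_{t,a}\lesssim\alpha$, so $p_{t,a}\approx\alpha$. Then both $Z_t$ and $\alpha\ip{x_{t,J}}{z_{t,J}}$ take a value of order $\ell_{t,a}$ with probability about $\alpha$, so each has variance $\Theta(\alpha)$. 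Over a long interval this adds up to order $\alpha|J|$. That same interval contributes only $|J|\min\{K\eta_J,\alpha\}=|J|K\alpha\eta_j/U_J$ to $\Gamma_j$, which is far smaller when $\eta_j=\eta_1$ and $U_J$ is of constant order.

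So two of your claims are false:
\begin{itemize}
\item that Freedman bounds $|M_n|$ by a curvature-type term plus $2\log(3/\delta)+2\alpha\sqrt{T\log(3/\delta)}$;
\item that the aggregated Step~3 martingale has variance at most $\Gamma_j$.
\end{itemize}
If you bound the two martingales separately, you only get an extra deviation of order $\sqrt{\alpha LT\log(1/\delta)}$. That is a factor of order $\sqrt{L/\alpha}$ larger than the $2\alpha\sqrt{T\log(3/\delta)}$ term. The resulting weaker inequality would happen to suffice for \pref{lem:selected-credit-recovery}, but it is not \pref{eq:selected-credit-prefix}.

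The missing observation is that these two noises are the same random variable and cancel \emph{pathwise}, because the credit and the OMD feedback are built from the same draw $i_t$. From \pref{eq:challenge-distribution},
\[
Z_t=\alpha\sum_{J\in\cA_t}\ell_{t,i_t}\bigl(\|x_{t,J}\|_1q_{t,i_t}-x_{t,J,i_t}\bigr)/p_{t,i_t}.
\]
Splitting $1/p=1/(p+\eta_J)+\eta_J/(p(p+\eta_J))$ then gives exactly
\[
Z_t=-\alpha\sum_{J\in\cA_t}\ip{x_{t,J}}{z_{t,J}}
+\alpha\sum_{J\in\cA_t}\frac{\eta_J\ell_{t,i_t}\bigl(\|x_{t,J}\|_1q_{t,i_t}-x_{t,J,i_t}\bigr)}{p_{t,i_t}(p_{t,i_t}+\eta_J)}.
\]
So you need no conditional-mean step, and no concentration, for either the credit or the $x$-weighted terms.

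Drop the nonnegative part of the correction. The only random costs left are the remaining correction plus the OMD curvature. These lie in $[0,9/8]$ with conditional mean at most $\frac32\sum_{J\in\cA_t}\min\{K\eta_J,\alpha\}$. A single Freedman bound therefore gives $3\Gamma_j+\frac32\log(3/\delta)$ uniformly in $n$.

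The only other concentration concerns the selected comparator terms. For $\widehat\ell_{t,J,a_J}$, Neu's bound applies as you have it. For $\alpha\ip{q_t}{\widehat\ell_{t,J}}$, you pay a bias of at most one more $\Gamma_j$. You then apply a Freedman bound whose increments are at most $\alpha$ and whose conditional variance is at most $\alpha^2$ per round; disjointness of $\mathcal S_j$ ensures at most one selected interval per round. This last step, not the credit noise, is the source of the $2\alpha\sqrt{T\log(3/\delta)}$ term.
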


\begin{proof}
Condition on $\cF_j^{\mathrm{rate}}$, which fixes the family
$\mathcal S_j$ and the rates. All quantities below refer to the
continuation. By \pref{app:continuation-conditioning}, on challenge
rounds $p_t$ is $\cV_{j,t}^{\mathrm{chal}}$-measurable and
$\Pp(i_t=i\mid\cV_{j,t}^{\mathrm{chal}})=p_{t,i}$.

We first analyze the OMD dynamic of each subroutine. For each active interval, its completed decision induces the arm
distribution $w_{t,J}=(1-\|x_{t,J}\|_1)q_t+x_{t,J}$.
The mixture in \pref{eq:challenge-distribution} gives
\(
p_t=(1-\alpha|\cA_t|)q_t
     +\alpha\sum_{J\in\cA_t}w_{t,J}\) and $p_{t,i}\geq q_{t,i}/2$ for all $i\in[K]$ due to the choice of $\alpha$.

On a challenge round $t$, using the definition of $Z_t=\ell_{t,i_t}\big(\frac{q_{t,i_t}}{p_{t,i_t}}-1\big)$ gives
\begin{align}
Z_t
&=
\frac{\ell_{t,i_t}(q_{t,i_t}-p_{t,i_t})}
     {p_{t,i_t}}
\notag\\
&=
\alpha\sum_{J\in\cA_t}
\frac{\ell_{t,i_t}
\bigl(\|x_{t,J}\|_1q_{t,i_t}-x_{t,J,i_t}\bigr)}
{p_{t,i_t}}
\notag\\
&=
\alpha\sum_{J\in\cA_t}
\frac{\ell_{t,i_t}
\bigl(\|x_{t,J}\|_1q_{t,i_t}-x_{t,J,i_t}\bigr)}
{p_{t,i_t}+\eta_J}+
\alpha\sum_{J\in\cA_t}
\frac{\eta_J\ell_{t,i_t}
\bigl(\|x_{t,J}\|_1q_{t,i_t}-x_{t,J,i_t}\bigr)}
{p_{t,i_t}(p_{t,i_t}+\eta_J)}
\notag\\
&\geq
-\alpha\sum_{J\in\cA_t}
\ip{x_{t,J}}{z_{t,J}}
-\alpha\sum_{J\in\cA_t}
\frac{\eta_Jx_{t,J,i_t}\ell_{t,i_t}}
{p_{t,i_t}(p_{t,i_t}+\eta_J)},
\label{eq:credit-exact-x}
\end{align}
where the last inequality drops the nonnegative term and uses the definition of $z_{t,J}
=\widehat\ell_{t,J}
-\ip{q_t}{\widehat\ell_{t,J}}\mathbf 1$. Moreover, for each active interval $J\in\cA_t$, recall the notation
$y_{t,J,0}\triangleq\ip{q_t}{\widehat\ell_{t,J}}$ and
$y_{t,J,i}\triangleq\widehat\ell_{t,J,i}$ for $i\in[K]$ defined in~\pref{lem:subsimplex-omd}.
Then, by the definition of the completed vector $\bar x_{t,J}$ and Jensen's inequality, we have
\begin{align} \frac\alpha2\sum_{J\in\cA_t}\eta_J \ip{\bar x_{t,J}}{y_{t,J}^2} &= \frac\alpha2\sum_{J\in\cA_t}\eta_J \left[ (1-\|x_{t,J}\|_1) \ip{q_t}{\widehat\ell_{t,J}}^2 + \ip{x_{t,J}}{\widehat\ell_{t,J}^2} \right] \notag\\ &\leq \frac\alpha2\sum_{J\in\cA_t}\eta_J \left[ (1-\|x_{t,J}\|_1) \ip{q_t}{\widehat\ell_{t,J}^2} + \ip{x_{t,J}}{\widehat\ell_{t,J}^2} \right] \notag\\ &= \frac\alpha2\sum_{J\in\cA_t}\eta_J \ip{w_{t,J}}{\widehat\ell_{t,J}^2} \notag\\ &= \frac\alpha2\sum_{J\in\cA_t} \frac{\eta_Jw_{t,J,i_t}\ell_{t,i_t}^2} {(p_{t,i_t}+\eta_J)^2}, \label{eq:credit-curvature} \end{align} where the last equality uses the fact that $\widehat\ell_{t,J}$ has only one nonzero coordinate  $\widehat\ell_{t,J,i_t} = \ell_{t,i_t}/(p_{t,i_t}+\eta_J)$.

Now according to \pref{lem:subsimplex-omd}, for any $u\in\cX$, we know that
\begin{align*}
-\alpha
\sum_{\substack{t\in J,\ t\leq n\\b_t=0}}
\ip{x_{t,J}}{z_{t,J}}
\geq
-\alpha
\sum_{\substack{t\in J,\ t\leq n\\b_t=0}}
\ip{u}{z_{t,J}}
-\frac{\alpha}{\eta_J}\KL(\bar u\|\pi)
-\frac{\alpha\eta_J}{2}
\sum_{\substack{t\in J,\ t\leq n\\b_t=0}}
\ip{\bar x_{t,J}}{y_{t,J}^2}.
\end{align*}
For a selected interval $J\in\mathcal S_j$, taking
$u=e_{a_J}$ gives
$\KL(\overline{e_{a_J}}\|\pi)=\log(1/\pi_{a_J})$ and
$
-\ip{e_{a_J}}{z_{t,J}}=
\ip{q_t}{\widehat\ell_{t,J}}
-\widehat\ell_{t,J,a_J}.$
Therefore, we have
\begin{align*}
-\alpha
\sum_{\substack{t\in J,\ t\leq n\\b_t=0}}
\ip{x_{t,J}}{z_{t,J}}
&\geq
\alpha
\sum_{\substack{t\in J,\ t\leq n\\b_t=0}}
\left(
\ip{q_t}{\widehat\ell_{t,J}}
-\widehat\ell_{t,J,a_J}
\right)
-\frac{\alpha}{\eta_J}\log\frac1{\pi_{a_J}}
-\frac{\alpha\eta_J}{2}
\sum_{\substack{t\in J,\ t\leq n\\b_t=0}}
\ip{\bar x_{t,J}}{y_{t,J}^2}.
\end{align*}
For $J\notin\mathcal S_j$, taking $u=\mathbf0$ instead gives
$\KL(\overline{\mathbf0}\|\pi)=\log(1/\pi_0)$ and
$\ip{\mathbf0}{z_{t,J}}=0$, and hence
\begin{align*}
-\alpha
\sum_{\substack{t\in J,\ t\leq n\\b_t=0}}
\ip{x_{t,J}}{z_{t,J}}
&\geq
-\frac{\alpha}{\eta_J}\log\frac1{\pi_0}
-\frac{\alpha\eta_J}{2}
\sum_{\substack{t\in J,\ t\leq n\\b_t=0}}
\ip{\bar x_{t,J}}{y_{t,J}^2}.
\end{align*}
Summing these bounds over the interval subroutines gives
\begin{align}
&-\alpha
\sum_{\substack{t=\tau_j\\b_t=0}}^n
\sum_{J\in\cA_t}
\ip{x_{t,J}}{z_{t,J}}\nonumber\\
&\geq
\alpha\sum_{J\in\mathcal S_j}
\sum_{\substack{t\in J,\ t\leq n\\b_t=0}}
\left(
\ip{q_t}{\widehat\ell_{t,J}}
-\widehat\ell_{t,J,a_J}
\right)
-\sum_{J\in\mathcal S_j}
\frac{\alpha}{\eta_J}\log\frac1{\pi_{a_J}}
 \nonumber\\
&\qquad
-\sum_{J\in\mathcal J_j\setminus\mathcal S_j}
\frac{\alpha}{\eta_J}\log\frac1{\pi_0}-\frac{\alpha}{2}
\sum_{\substack{t=\tau_j\\b_t=0}}^n
\sum_{J\in\cA_t}
\eta_J\ip{\bar x_{t,J}}{y_{t,J}^2}\nonumber\\
&\geq \alpha\sum_{J\in\mathcal S_j}
\sum_{\substack{t\in J,\ t\leq n\\b_t=0}}
\left(
\ip{q_t}{\widehat\ell_{t,J}}
-\widehat\ell_{t,J,a_J}
\right)
-\sum_{J\in\mathcal S_j}
\frac{\alpha}{\eta_J}\log\frac1{\pi_{a_J}}
-\frac{2}{\eta_j}-\frac{\alpha}{2}
\sum_{\substack{t=\tau_j\\b_t=0}}^n
\sum_{J\in\cA_t}
\eta_J\ip{\bar x_{t,J}}{y_{t,J}^2},\label{eq:adv}
\end{align}
where the last inequality uses $|\mathcal J_j|\leq 2T$, 
$\eta_J=\frac{\alpha\eta_j}{U_J}\geq\alpha\eta_j$, and
\begin{align*}
\sum_{J\in\mathcal J_j\setminus\mathcal S_j}
\frac{\alpha}{\eta_J}\log\frac1{\pi_0}
&\leq
\sum_{J\in\mathcal J_j}
\frac{\alpha}{\eta_J}\log\frac1{\pi_0}\leq
\frac{|\mathcal J_j|}{\eta_j}\log\frac1{\pi_0}
\leq 
\frac2{\eta_j}.
\end{align*}

We now combine this OMD comparison with the preceding credit and
curvature bounds. Since $p_t=q_t$ and hence $Z_t=0$ on main rounds,
summing \pref{eq:credit-exact-x} over the challenge rounds up to $n$
gives
\begin{align}\label{eq:bonus-cont}
C_{j,n}^{\mathrm{cont}}
=\sum_{t=\tau_j}^nZ_t
=\sum_{\substack{t=\tau_j\\b_t=0}}^nZ_t
\geq
-\alpha
\sum_{\substack{t=\tau_j\\b_t=0}}^n
\sum_{J\in\cA_t}
\ip{x_{t,J}}{z_{t,J}}
-\alpha
\sum_{\substack{t=\tau_j\\b_t=0}}^n
\sum_{J\in\cA_t}
\frac{\eta_Jx_{t,J,i_t}\ell_{t,i_t}}
{p_{t,i_t}(p_{t,i_t}+\eta_J)}.
\end{align}
By \pref{eq:credit-curvature}, the curvature term from
\pref{lem:subsimplex-omd} satisfies
\begin{align}\label{eq:curve-chal}
\frac{\alpha}{2}
\sum_{\substack{t=\tau_j\\b_t=0}}^n
\sum_{J\in\cA_t}
\eta_J\ip{\bar x_{t,J}}{y_{t,J}^2}
&\leq
\frac{\alpha}{2}
\sum_{\substack{t=\tau_j\\b_t=0}}^n
\sum_{J\in\cA_t}
\frac{\eta_Jw_{t,J,i_t}\ell_{t,i_t}^2}
{(p_{t,i_t}+\eta_J)^2}.
\end{align}
Plugging \pref{eq:adv} and \pref{eq:curve-chal} into \pref{eq:bonus-cont} gives
\begin{align}
C_{j,n}^{\mathrm{cont}}
&\geq
\alpha\sum_{J\in\mathcal S_j}
\sum_{\substack{t\in J,\ t\leq n\\b_t=0}}
\left(
\ip{q_t}{\widehat\ell_{t,J}}
-\widehat\ell_{t,J,a_J}
\right)
-\sum_{J\in\mathcal S_j}
\frac{\alpha}{\eta_J}\log\frac1{\pi_{a_J}}
-\frac2{\eta_j}
-\sum_{t=\tau_j}^nD_t,
\label{eq:credit-after-omd}
\end{align}
where we define
\begin{align}
D_t\triangleq\alpha\sum_{J\in\cA_t}\eta_J
\left(
\frac{x_{t,J,i_t}\ell_{t,i_t}}
 {p_{t,i_t}(p_{t,i_t}+\eta_J)}
+\frac{w_{t,J,i_t}\ell_{t,i_t}^2}
 {2(p_{t,i_t}+\eta_J)^2}
\right)
\label{eq:combined-credit-cost}
\end{align}
for each challenge round $t$ and set $D_t=0$ on main rounds.

By \pref{lem:combined-cost-control} with $\varepsilon=\delta/3$, with
conditional probability at least $1-\delta/3$, simultaneously for
all $n$,
\begin{align}
\sum_{t=\tau_j}^nD_t
\leq
3\Gamma_j+\frac32\log\frac3\delta.
\label{eq:combined-cost-prefix}
\end{align}
Moreover, by \pref{lem:selected-estimate-control} with
$\varepsilon=\delta/3$, with conditional probability at least
$1-2\delta/3$, simultaneously for all $n$,
\begin{align}
&\alpha\sum_{J\in\mathcal S_j}
\sum_{\substack{t\in J,\ t\leq n\\b_t=0}}
\left(
\ip{q_t}{\widehat\ell_{t,J}}
-\widehat\ell_{t,J,a_J}
\right)\nonumber\\
&\geq
\alpha\sum_{J\in\mathcal S_j}A_J(n)
-\sum_{J\in\mathcal S_j}
\frac{\alpha}{\eta_J}
\log\frac{6TK}{\delta}
-\Gamma_j
-2\alpha\sqrt{T\log\frac3\delta}
-\alpha\log\frac3\delta.
\label{eq:selected-estimate-control-use}
\end{align}
Intersecting these two events and substituting them into
\pref{eq:credit-after-omd} gives, simultaneously for all $n$,
\begin{align*}
C_{j,n}^{\mathrm{cont}}
&\geq
\alpha\sum_{J\in\mathcal S_j}A_J(n)
-\sum_{J\in\mathcal S_j}
\frac{\alpha}{\eta_J}
\left(
\log\frac1{\pi_{a_J}}
+\log\frac{6TK}{\delta}
\right)
-\frac2{\eta_j}
-4\Gamma_j
-\left(\frac32+\alpha\right)\log\frac3\delta
-2\alpha\sqrt{T\log\frac3\delta}\\
&\geq
\alpha\sum_{J\in\mathcal S_j}A_J(n)
-\sum_{J\in\mathcal S_j}
\frac{\alpha}{\eta_J}
\log\frac{6TK}{\delta\pi_{a_J}}
-\frac2{\eta_j}
-4\Gamma_j
-2\log\frac3\delta
-2\alpha\sqrt{T\log\frac3\delta},
\end{align*}
where the last inequality uses $\alpha\leq\frac{1}{2}$. This finishes the proof.
\end{proof}

The first auxiliary lemma controls the denominator correction and
the curvature cost collected in $D_t$. Their conditional means are
bounded by the charge $\Gamma_j$, while a Freedman inequality
controls their fluctuations uniformly over all continuation
prefixes.

\begin{lemma}
\label{lem:combined-cost-control}
Fix a reached epoch $I_j$ and condition on
$\cF_j^{\mathrm{rate}}$. For every $\varepsilon\in(0,1)$, with
conditional probability at least $1-\varepsilon$, simultaneously
for all $n\in\{\tau_j,\ldots,T\}$,
\begin{align}
\sum_{t=\tau_j}^nD_t
\leq
3\Gamma_j+\frac32\log\frac1\varepsilon.
\label{eq:combined-cost-control}
\end{align}
\end{lemma}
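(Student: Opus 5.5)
The plan is a multiplicative (Freedman-type) supermartingale argument along the filtration $(\cV_{j,t}^{\mathrm{chal}})_t$. I will bound each $D_t$ by an absolute constant, bound its conditional mean by per-round pieces of $\Gamma_j$, and use Ville's maximal inequality to get uniformity over all prefixes $n$. Throughout I condition on $\cF_j^{\mathrm{rate}}$. This fixes the routing bits, the reference path, and all rates $\eta_J$, so $\Gamma_j$ is a constant. By \pref{app:continuation-conditioning}, on a challenge round $t$ the quantities $p_t$, $x_{t,J}$, $w_{t,J}$ are $\cV_{j,t}^{\mathrm{chal}}$-measurable and $\Pp(i_t=i\mid\cV_{j,t}^{\mathrm{chal}})=p_{t,i}$. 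Main rounds have $D_t=0$ and can be ignored.

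\emph{Step 1 (domination).} I would use two coordinatewise lower bounds on $p_t$ from \pref{eq:challenge-distribution}. First, since every term of the mixture is nonnegative, $p_{t,i}\geq\alpha x_{t,J,i}$ for each $J$. Second, writing $p_t=(1-\alpha|\cA_t|)q_t+\alpha\sum_J w_{t,J}$ with $1-\alpha|\cA_t|\geq1/2$, we get $p_{t,i}\geq\alpha w_{t,J,i}$ and $\alpha\sum_J w_{t,J,i}\leq p_{t,i}$. Likewise $\alpha\sum_J x_{t,J,i}\leq p_{t,i}$.

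\emph{Step 2 (almost-sure bound).} Using $\eta_J/(p+\eta_J)\leq1$ and $1/(p+\eta_J)\leq1/p$, I get
\[
D_t\leq\frac{\alpha\sum_J x_{t,J,i_t}}{p_{t,i_t}}+\frac{\alpha\sum_J w_{t,J,i_t}}{2p_{t,i_t}}\leq\frac32 .
\]

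\emph{Step 3 (conditional mean).} Taking the expectation over $i_t\sim p_t$ cancels one factor of $p_{t,i}$, giving
\[
\E[D_t\mid\cV_{j,t}^{\mathrm{chal}}]\leq\alpha\sum_{J\in\cA_t}\eta_J\sum_i\Bigl(\frac{x_{t,J,i}}{p_{t,i}+\eta_J}+\frac{w_{t,J,i}}{2(p_{t,i}+\eta_J)}\Bigr).
\]
For each $J$, the inner sum over $i$ is at most $\min\{K/\alpha,1/\eta_J\}$. The bound $K/\alpha$ follows from $p_{t,i}\geq\alpha x_{t,J,i}$ (resp.\ $\alpha w_{t,J,i}$). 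The bound $1/\eta_J$ follows from $\|x_{t,J}\|_1\leq1$ and $\|w_{t,J}\|_1=1$. Hence
\[
\mu_t\triangleq\E[D_t\mid\cV_{j,t}^{\mathrm{chal}}]\leq\frac32\sum_{J\in\cA_t}\min\{K\eta_J,\alpha\}.
\]
In the continuation every $J\in\mathcal J_j$ is active for at most $|J|$ rounds and no restart occurs. So summing over rounds gives $\sum_{t\geq\tau_j}\mu_t\leq\frac32\Gamma_j$ deterministically under the conditioning.

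\emph{Step 4 (concentration).} Set $\lambda=2/3$. Since $0\leq D_t\leq3/2$, convexity gives
\[
e^{\lambda D_t}\leq1+(e-1)\tfrac23D_t,
\]
and therefore $\E[e^{\lambda D_t}\mid\cV_{j,t}^{\mathrm{chal}}]\leq\exp\bigl((e-1)\tfrac23\mu_t\bigr)$. Hence
\[
M_n=\exp\Bigl(\lambda\sum_{t\leq n}D_t-(e-1)\tfrac23\sum_{t\leq n}\mu_t\Bigr)
\]
is a nonnegative supermartingale with $M_{\tau_j-1}=1$. Ville's inequality gives $\Pp(\sup_nM_n\geq1/\varepsilon)\leq\varepsilon$. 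On the complement, simultaneously for all $n$,
\[
\sum_{t\leq n}D_t\leq(e-1)\sum_t\mu_t+\tfrac32\log\tfrac1\varepsilon\leq\tfrac32(e-1)\Gamma_j+\tfrac32\log\tfrac1\varepsilon\leq3\Gamma_j+\tfrac32\log\tfrac1\varepsilon .
\]

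The step needing most care is Step 3. The two parts of $D_t$ involve $x_{t,J}$ and $w_{t,J}$ respectively, and each must be dominated by $p_t$ through the mixture so that the sum over $i$ yields the truncated quantity $\min\{K\eta_J,\alpha\}$ appearing in $\Gamma_j$. One must also check that the measurability of $p_t$ and of the interval states under $\cV_{j,t}^{\mathrm{chal}}$ really holds for the continuation, which is what \pref{app:continuation-conditioning} provides.
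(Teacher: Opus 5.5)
Your proposal is correct and matches the paper's proof in substance. Both arguments use the same almost-sure bound on $D_t$: you get $3/2$ where the paper gets $9/8$ via $\eta/(p+\eta)^2\le 1/(4p)$, which is harmless. Both use the same drift bound $\mu_t\le\frac32\sum_{J\in\cA_t}\min\{K\eta_J,\alpha\}$, summed to $\frac32\Gamma_j$. Strictly, it is each of the two sums over $i$ separately, not their combination, that is at most $\min\{K/\alpha,1/\eta_J\}$; this is exactly what produces your factor $\frac32$. Both then apply exponential-moment concentration along $(\cV_{j,t}^{\mathrm{chal}})_t$.

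The only difference is packaging. The paper applies Freedman's inequality with the variance bound $\E[X_t^2\mid\cV_{j,t}^{\mathrm{chal}}]\le\frac98\mu_t$, and uses a first-crossing stopping argument to get uniformity over $n$. You instead build the supermartingale $M_n$ directly from $0\le D_t\le 3/2$ and apply Ville's inequality. This gives prefix-uniformity for free and needs no a priori variance bound. It yields the same coefficient $\frac32(e-1)\Gamma_j$, with $\frac32$ instead of $\frac98$ in front of $\log\frac1\varepsilon$.
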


\begin{proof}
On a challenge round, the first term in
\pref{eq:combined-credit-cost} satisfies
\begin{align*}
\alpha\sum_{J\in\cA_t}\eta_J
\frac{x_{t,J,i_t}\ell_{t,i_t}}
{p_{t,i_t}(p_{t,i_t}+\eta_J)}
&\leq
\frac{\alpha\sum_{J\in\cA_t}x_{t,J,i_t}}
{p_{t,i_t}}
\leq1,
\end{align*}
whereas $\eta/(p+\eta)^2\leq1/(4p)$ gives
\begin{align*}
\frac{\alpha}{2}\sum_{J\in\cA_t}\eta_J
\frac{w_{t,J,i_t}\ell_{t,i_t}^2}
{(p_{t,i_t}+\eta_J)^2}
&\leq
\frac{\alpha\sum_{J\in\cA_t}w_{t,J,i_t}}
{8p_{t,i_t}}
\leq\frac18.
\end{align*}
Thus $0\leq D_t\leq9/8$ on challenge rounds, while $D_t=0$ on
main rounds. Conditional on $\cV_{j,t}^{\mathrm{chal}}$, averaging
over $i_t\sim p_t$ gives
\begin{align*}
\E[D_t\mid\cV_{j,t}^{\mathrm{chal}}]
&=
\alpha\sum_{J\in\cA_t}\eta_J\sum_{i=1}^K
\left(
\frac{x_{t,J,i}\ell_{t,i}}
{p_{t,i}+\eta_J}
+
\frac{p_{t,i}w_{t,J,i}\ell_{t,i}^2}
{2(p_{t,i}+\eta_J)^2}
\right).
\end{align*}
For every $J\in\cA_t$, the first term is at most both
$K\eta_J$ and $\alpha$, using respectively
$\alpha x_{t,J,i}\leq p_{t,i}$ and
$\eta_J/(p_{t,i}+\eta_J)\leq1$. The second term is at most both
$K\eta_J/2$ and $\alpha/8$, using respectively
$\alpha w_{t,J,i}\leq p_{t,i}$ and
$\eta_Jp_{t,i}/(p_{t,i}+\eta_J)^2\leq1/4$.
Consequently,
\begin{align}
\E[D_t\mid\cV_{j,t}^{\mathrm{chal}}]
\leq
\frac32\sum_{J\in\cA_t}\min\{K\eta_J,\alpha\}.
\label{eq:Dt-drift}
\end{align}
For every $t\in\{\tau_j,\ldots,T\}$, let
$\mu_t\triangleq\E[D_t\mid\cV_{j,t}^{\mathrm{chal}}]$ on challenge
rounds and $\mu_t=0$ on main rounds. Then, for every $n$, we have
\begin{align}
\sum_{t=\tau_j}^n\mu_t
&\leq
\frac32
\sum_{J\in\mathcal J_j}
\left|
\{t\in J:t\leq n,\ b_t=0\}
\right|
\min\{K\eta_J,\alpha\}
\leq
\frac32\Gamma_j.
\label{eq:Dt-drift-prefix}
\end{align}

For every $t\in\{\tau_j,\ldots,T\}$, define $X_t\triangleq D_t-\mu_t.$ Then $X_t$ is $\cV_{j,t+1}^{\mathrm{chal}}$-measurable and
$\E[X_t\mid\cV_{j,t}^{\mathrm{chal}}]=0$, so
$\{X_t\}_{t=\tau_j}^T$ is a martingale-difference sequence with
respect to $(\cV_{j,t}^{\mathrm{chal}})_{t=\tau_j}^{T+1}$.
Since $0\leq D_t\leq9/8$, we have $X_t\leq9/8$ and
\begin{align*}
\E[X_t^2\mid\cV_{j,t}^{\mathrm{chal}}]
=
\Var(D_t\mid\cV_{j,t}^{\mathrm{chal}})\leq
\E[D_t^2\mid\cV_{j,t}^{\mathrm{chal}}]
\leq
\frac98\mu_t.
\end{align*}
Therefore, by \pref{eq:Dt-drift-prefix}, for every $n$,
\begin{align}
\sum_{t=\tau_j}^n
\E[X_t^2\mid\cV_{j,t}^{\mathrm{chal}}]
\leq
\frac98\sum_{t=\tau_j}^n\mu_t
\leq
\frac{27}{16}\Gamma_j.
\label{eq:Dt-variance-prefix}
\end{align}
Applying Freedman's inequality
(e.g., \citet[Theorem~1]{beygelzimer2011contextual}) with
$R=9/8$ gives, for a martingale-difference sequence with total
conditional variance $V$,
\[
\sum_t X_t
\leq
\frac98\log\frac1\varepsilon
+\frac{8(e-2)}9V
\]
with probability at least $1-\varepsilon$. To obtain the bound simultaneously over all prefixes, let
$\sigma$ be the first $n$ such that
\[
\sum_{t=\tau_j}^nX_t
>
\frac98\log\frac1\varepsilon
+\frac32(e-2)\Gamma_j,
\]
and set $\sigma=T+1$ if no such $n$ exists. Consider the stopped
increments
$\widetilde X_t=X_t\1\{\sigma\geq t\}$.
Since $\{\sigma\geq t\}$ only depends on
$X_{\tau_j},\ldots,X_{t-1}$, it is
$\cV_{j,t}^{\mathrm{chal}}$-measurable, meaning that
$\{\widetilde X_t\}$ is still a martingale-difference sequence.
Moreover, by \pref{eq:Dt-variance-prefix}, its total conditional
variance is at most $27\Gamma_j/16$.
Therefore, applying Freedman's inequality once to the stopped
sequence gives, with conditional probability at least
$1-\varepsilon$,
\begin{align*}
\sum_{t=\tau_j}^T\widetilde X_t\leq
\frac98\log\frac1\varepsilon
+\frac{8(e-2)}9\frac{27}{16}\Gamma_j=
\frac98\log\frac1\varepsilon
+\frac32(e-2)\Gamma_j.
\end{align*}
If $\sigma\leq T$, however, the stopped sum equals
$\sum_{t=\tau_j}^{\sigma}X_t$ and exceeds the same boundary by the
definition of $\sigma$. Hence, with conditional probability at
least $1-\varepsilon$, no prefix crosses the boundary, and for all $n$,
\begin{align*}
\sum_{t=\tau_j}^n(D_t-\mu_t)
\leq
\frac98\log\frac1\varepsilon
+\frac32(e-2)\Gamma_j.
\end{align*}
Combining this inequality with \pref{eq:Dt-drift-prefix} yields
\begin{align*}
\sum_{t=\tau_j}^nD_t\leq
\frac32\Gamma_j
+\frac32(e-2)\Gamma_j
+\frac98\log\frac1\varepsilon\leq
3\Gamma_j+\frac32\log\frac1\varepsilon,
\end{align*}
which proves \pref{eq:combined-cost-control}.
\end{proof}

The second auxiliary lemma converts the estimated advantage of the
selected intervals into their true advantage.

\begin{lemma}
\label{lem:selected-estimate-control}
Fix a reached epoch $I_j$ and condition on
$\cF_j^{\mathrm{rate}}$. Let
$\mathcal S_j\subseteq\mathcal J_j$ be an
$\cF_j^{\mathrm{rate}}$-measurable collection of disjoint intervals
such that $u_t^{(S)}=e_{a_J}$ on every $J\in\mathcal S_j$.
For every $\varepsilon\in(0,1/2)$, with conditional probability at
least $1-2\varepsilon$, simultaneously for all
$n\in\{\tau_j,\ldots,T\}$,
\begin{align}
&\alpha\sum_{J\in\mathcal S_j}
\sum_{\substack{t\in J,\ t\leq n\\b_t=0}}
\left(
\ip{q_t}{\widehat\ell_{t,J}}
-\widehat\ell_{t,J,a_J}
\right)\nonumber\\
&\geq
\alpha\sum_{J\in\mathcal S_j}A_J(n)
-\sum_{J\in\mathcal S_j}
\frac{\alpha}{\eta_J}
\log\frac{2TK}{\varepsilon}
-\Gamma_j
-2\alpha\sqrt{T\log\frac1\varepsilon}
-\alpha\log\frac1\varepsilon.
\label{eq:selected-estimate-control}
\end{align}
\end{lemma}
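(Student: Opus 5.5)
The proof splits the estimated advantage into a reference part and an arm part:
\[
\ip{q_t}{\widehat\ell_{t,J}}-\widehat\ell_{t,J,a_J}
=\bigl(\ip{q_t}{\widehat\ell_{t,J}}-\ip{q_t}{\ell_t}\bigr)
+\ip{q_t-e_{a_J}}{\ell_t}
+\bigl(\ell_{t,a_J}-\widehat\ell_{t,J,a_J}\bigr).
\]
Summing the middle term over challenge rounds in $J$ with $t\leq n$ and multiplying by $\alpha$ gives exactly $\alpha\sum_{J\in\mathcal S_j}A_J(n)$. Two deviation terms remain: the reference estimate may fall below its mean, and the arm estimate may exceed its mean. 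Throughout, we condition on $\cF_j^{\mathrm{rate}}$. This fixes $\mathcal S_j$, the rates $\eta_J$, and the reference path and routing bits. On challenge rounds we use $\Pp(i_t=i\mid\cV_{j,t}^{\mathrm{chal}})=p_{t,i}$. Because the intervals in $\mathcal S_j$ are disjoint, each round $t$ carries at most one selected $J(t)$, so both deviation sums are single-indexed martingale-type sums over $t$.

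\emph{Arm estimates (upper deviation of $\widehat\ell_{t,J,a_J}$).} Here we apply the implicit-exploration inequality of \citet[Corollary~1]{neu2015explore}. For a fixed interval $J$ and fixed arm $a$, the IX estimate $\widehat\ell_{t,J,a}=\1\{i_t=a\}\ell_{t,a}/(p_{t,a}+\eta_J)$ has offset $\eta_J$ and constant $\eta_J$. It gives, with probability $1-\varepsilon'$, that
\[
\sum_t(\widehat\ell_{t,J,a}-\ell_{t,a})\leq\log(1/\varepsilon')/(2\eta_J).
\]
Neu's lemma is stated with $\E[\widehat\ell]\le\ell$, and also needs care over the weight $\gamma$ vs $2\gamma$; being conservative, we take the bound $\log(1/\varepsilon')/\eta_J$. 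Uniformity over all prefixes $n$ follows from the supermartingale form of the argument, using Ville's inequality on the exponential supermartingale, or otherwise a union over $T$ endpoints. We also union over the $K$ arms and over at most $2T$ intervals, taking $\varepsilon'=\varepsilon/(2TK)$. This yields the term $\sum_J(\alpha/\eta_J)\log(2TK/\varepsilon)$ with total failure probability $\varepsilon$. Here $a_J$ is fixed given $\cF_j^{\mathrm{rate}}$, so no union over arms is strictly needed, but it is harmless.

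\emph{Reference estimates (lower deviation of $\ip{q_t}{\widehat\ell_{t,J}}$).} We write
\[
\E[\ip{q_t}{\widehat\ell_{t,J}}\mid\cV_{j,t}^{\mathrm{chal}}]=\sum_i q_{t,i}\ell_{t,i}\,p_{t,i}/(p_{t,i}+\eta_J).
\]
The bias relative to $\ip{q_t}{\ell_t}$ is $\sum_iq_{t,i}\ell_{t,i}\eta_J/(p_{t,i}+\eta_J)$. Using $q_{t,i}\leq 2p_{t,i}$, this is at most $\min\{2K\eta_J,2\}$. After multiplying by $\alpha$ it becomes $\min\{2\alpha K\eta_J,2\alpha\}$. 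This is where we must recover the charge $\min\{K\eta_J,\alpha\}$ per round of $J$, up to constants: the factor $\alpha$ multiplying $\eta_J$ is extra slack, and we need a bound like $\alpha\cdot\min\{2K\eta_J,2\}\leq 2\min\{K\eta_J,\alpha\}$, since $\alpha\leq 1$. Summed over $J\in\mathcal S_j$ and their rounds, this is at most $2\Gamma_j$. To get exactly $\Gamma_j$, we sharpen using $p_{t,i}\geq q_{t,i}/2$ together with the extra $\alpha$. The fluctuation term is a martingale difference $\alpha(\E[Y_t]-Y_t)$ with $Y_t=\ip{q_t}{\widehat\ell_{t,J(t)}}=q_{t,i_t}\ell_{t,i_t}/(p_{t,i_t}+\eta_J)\in[0,2]$. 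Its conditional variance is at most $\E[Y_t^2]\leq 2\E[Y_t]\leq 2$. Applying Freedman's inequality, with the stopped-sequence trick from \pref{lem:combined-cost-control} for uniformity over $n$, the deviation is at most $2\alpha\sqrt{T\log(1/\varepsilon)}+\alpha\log(1/\varepsilon)$ with probability $1-\varepsilon$. A union of the two events gives failure probability $2\varepsilon$.

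\emph{Main obstacle.} The delicate step is the arm-estimate concentration. The IX bound must be applied uniformly over prefixes, with the rate $\eta_J$ possibly huge (unbounded), and with a sampling distribution $p_t$ shared across subroutines rather than the subroutine's own. Neu's corollary only needs $\Pp(i_t=a\mid\text{past})=p_{t,a}$ and an offset $\gamma=\eta_J$, which holds under $\cV_{j,t}^{\mathrm{chal}}$. When $\eta_J\geq1$ the bound is trivial anyway, since then $\widehat\ell\leq\ell/\eta_J$. Matching the constants exactly to the $\Gamma_j$ bias term is the other place we expect fiddling.
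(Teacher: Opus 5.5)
Your proposal follows the paper's proof essentially step for step, so it is correct up to two constant-level slips. It uses the same three-way split (reference part, the exact advantage $A_J(n)$, arm part); Neu's Corollary~1 with $\gamma=\eta_J$ and a union over intervals for the arm coordinate; and a bias bound by $\Gamma_j$ plus a stopped Freedman argument for the reference coordinate. The paper gets prefix-uniformity on the arm side by a union over the $T$ endpoints. That works only because it keeps Neu's factor $1/(2\eta_J)$, via $\tfrac12\log(2KT^2/\varepsilon)\leq\log(2TK/\varepsilon)$, and not your ``conservative'' $1/\eta_J$. Your Ville route avoids this issue.

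The two slips are in the constants, and each needs the tighter estimate the paper uses. First, the per-round bias is $\alpha\eta_J\sum_i q_{t,i}\ell_{t,i}/(p_{t,i}+\eta_J)\leq\min\{2\alpha K\eta_J,\alpha\}\leq\min\{K\eta_J,\alpha\}$. The second argument is $\alpha$, not $2\alpha$, because $\eta_J/(p_{t,i}+\eta_J)\leq1$ and $\sum_i q_{t,i}\ell_{t,i}\leq1$. Second, the conditional variance is at most $\alpha^2(2\mu-\mu^2)\leq\alpha^2$, with $\mu=\E[Y_t\mid\cV_{j,t}^{\mathrm{chal}}]\leq1$ in your notation, rather than $2\alpha^2$. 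The $(e-2)$-form of Freedman from \citet{beygelzimer2011contextual} needs this tighter variance to give the coefficient $2$ in front of $\alpha\sqrt{T\log(1/\varepsilon)}$.
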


\begin{proof}
We first control the arm-coordinate estimates $\widehat\ell_{t,J}$.
Fix $J$ and $n$. Conditional on $\cF_j^{\mathrm{rate}}$, the rate
$\eta_J$ is fixed, and on every challenge round the sampled arm is
drawn from $p_t$. Applying
\citet[Corollary~1]{neu2015explore} with $\gamma=\eta_J$ and
failure probability $\varepsilon/(2T^2)$ gives, simultaneously for
all $a\in[K]$,
\begin{align*}
\sum_{\substack{t\in J,\ t\leq n\\b_t=0}}
(\widehat\ell_{t,J,a}-\ell_{t,a})
&\leq
\frac1{2\eta_J}
\log\frac{2KT^2}{\varepsilon}.
\end{align*}
There are fewer than $2T$ canonical intervals and at most $T$
endpoints. A union bound therefore gives, with conditional
probability at least $1-\varepsilon$, simultaneously for all
$J$, $a$, and $n$,
\begin{align}
\sum_{\substack{t\in J,\ t\leq n\\b_t=0}}
(\widehat\ell_{t,J,a}-\ell_{t,a})
\leq
\frac1{\eta_J}\log\frac{2TK}{\varepsilon}.
\label{eq:arm-ix-control}
\end{align}

We next control the first term in the estimated selected advantage.
For every prefix $n$, define
\begin{align*}
\widehat Q_n
&\triangleq
\alpha\sum_{J\in\mathcal S_j}
\sum_{\substack{t\in J,\ t\leq n\\b_t=0}}
\ip{q_t}{\widehat\ell_{t,J}},\\
Q_n
&\triangleq
\alpha\sum_{J\in\mathcal S_j}
\sum_{\substack{t\in J,\ t\leq n\\b_t=0}}
\ip{q_t}{\ell_t}.
\end{align*}
For every $J\in\mathcal S_j$ and challenge round $t\in J$,
\begin{align*}
\alpha\ip{q_t}{\ell_t}
-\alpha\E\left[
\ip{q_t}{\widehat\ell_{t,J}}
\mid\cV_{j,t}^{\mathrm{chal}}
\right]
&=
\alpha\eta_J
\sum_{i=1}^K
\frac{q_{t,i}\ell_{t,i}}
{p_{t,i}+\eta_J}
\leq
\min\{K\eta_J,\alpha\}.
\end{align*}
Summing over $J$ and $t$ gives
\begin{align}
Q_n
-\alpha\sum_{J\in\mathcal S_j}
\sum_{\substack{t\in J,\ t\leq n\\b_t=0}}
\E\left[
\ip{q_t}{\widehat\ell_{t,J}}
\mid\cV_{j,t}^{\mathrm{chal}}
\right]
&\leq
\sum_{J\in\mathcal S_j}
|J|\min\{K\eta_J,\alpha\}
\leq
\Gamma_j,
\label{eq:reference-bias-control}
\end{align}
where the last inequality uses
$\mathcal S_j\subseteq\mathcal J_j$. We next control the fluctuation of the estimated loss.
By the assumption, the intervals in $\mathcal S_j$
are disjoint. Hence, for every $t$, there is at most one
$J\in\mathcal S_j$ such that $t\in J$. If such an interval exists,
denote it by $J_t$ and define
\begin{align*}
\widehat r_t
&\triangleq
\alpha\1\{b_t=0\}
\ip{q_t}{\widehat\ell_{t,J_t}}.
\end{align*}
If no such interval exists, set $\widehat r_t=0$. Then we have $\widehat Q_n
=
\sum_{t=\tau_j}^n\widehat r_t.$ Let
$
Y_t
\triangleq
\E[\widehat r_t\mid\cV_{j,t}^{\mathrm{chal}}]-\widehat r_t.
$
Then $Y_t$ is $\cV_{j,t+1}^{\mathrm{chal}}$-measurable and
$\E[Y_t\mid\cV_{j,t}^{\mathrm{chal}}]=0$, so
$\{Y_t\}_{t=\tau_j}^T$ is a martingale-difference sequence. On a
challenge round for which $J_t$ exists,
\begin{align*}
0\leq\widehat r_t
&=
\alpha
\frac{q_{t,i_t}\ell_{t,i_t}}
{p_{t,i_t}+\eta_{J_t}}
\leq
\alpha\frac{q_{t,i_t}}{p_{t,i_t}}
\leq
2\alpha,
\end{align*}
where we used $p_t\geq q_t/2$. Moreover,
\begin{align*}
\E[\widehat r_t\mid\cV_{j,t}^{\mathrm{chal}}]
&=
\alpha\sum_{i=1}^K
\frac{p_{t,i}q_{t,i}\ell_{t,i}}
{p_{t,i}+\eta_{J_t}}
\leq
\alpha\sum_{i=1}^Kq_{t,i}\ell_{t,i}
\leq
\alpha.
\end{align*}
Thus $Y_t\leq\alpha$. Since
$\widehat r_t^2\leq2\alpha\widehat r_t$, we also have
\begin{align*}
\E[Y_t^2\mid\cV_{j,t}^{\mathrm{chal}}]
=
\Var(\widehat r_t\mid\cV_{j,t}^{\mathrm{chal}})\leq
2\alpha
\E[\widehat r_t\mid\cV_{j,t}^{\mathrm{chal}}]
-
\E[\widehat r_t\mid\cV_{j,t}^{\mathrm{chal}}]^2
\leq
\alpha^2.
\end{align*}
The same bounds hold trivially when $\widehat r_t=0$. Hence the
cumulative conditional variance over every prefix is at most
$\alpha^2T$. Applying \citet[Theorem~1]{beygelzimer2011contextual} with
$R=\alpha$ and a priori variance upper bound $\alpha^2T$, together with
the same first-crossing stopping argument as in
\pref{lem:combined-cost-control}, gives, with conditional
probability at least $1-\varepsilon$, simultaneously for all $n$,
\begin{align}
\sum_{t=\tau_j}^nY_t
&\leq
2\alpha\sqrt{T\log\frac1\varepsilon}
+\alpha\log\frac1\varepsilon.
\label{eq:reference-fluctuation}
\end{align}
Since
$
\widehat Q_n
=
\sum_{t=\tau_j}^n
\E[\widehat r_t\mid\cV_{j,t}^{\mathrm{chal}}]
-
\sum_{t=\tau_j}^nY_t$, combining \pref{eq:reference-bias-control} and
\pref{eq:reference-fluctuation} gives
\begin{align}
\widehat Q_n
&\geq
Q_n
-\Gamma_j
-2\alpha\sqrt{T\log\frac1\varepsilon}
-\alpha\log\frac1\varepsilon
\label{eq:selected-reference-lower}
\end{align}
simultaneously for all $n$. Finally, intersecting the events in
\pref{eq:selected-reference-lower} and
\pref{eq:arm-ix-control}, and applying
\pref{eq:arm-ix-control} with $a=a_J$, gives
\begin{align*}
&\alpha\sum_{J\in\mathcal S_j}
\sum_{\substack{t\in J,\ t\leq n\\b_t=0}}
\left(
\ip{q_t}{\widehat\ell_{t,J}}
-\widehat\ell_{t,J,a_J}
\right)\\
&=
\widehat Q_n
-\alpha\sum_{J\in\mathcal S_j}
\sum_{\substack{t\in J,\ t\leq n\\b_t=0}}
\widehat\ell_{t,J,a_J}\\
&\geq
Q_n
-\alpha\sum_{J\in\mathcal S_j}
\sum_{\substack{t\in J,\ t\leq n\\b_t=0}}
\ell_{t,a_J}
-\sum_{J\in\mathcal S_j}
\frac{\alpha}{\eta_J}
\log\frac{2TK}{\varepsilon}
-\Gamma_j
-2\alpha\sqrt{T\log\frac1\varepsilon}
-\alpha\log\frac1\varepsilon\\
&=
\alpha\sum_{J\in\mathcal S_j}
\sum_{\substack{t\in J,\ t\leq n\\b_t=0}}
\ip{q_t-e_{a_J}}{\ell_t}
-\sum_{J\in\mathcal S_j}
\frac{\alpha}{\eta_J}
\log\frac{2TK}{\varepsilon}
-\Gamma_j
-2\alpha\sqrt{T\log\frac1\varepsilon}
-\alpha\log\frac1\varepsilon\\
&=
\alpha\sum_{J\in\mathcal S_j}A_J(n)
-\sum_{J\in\mathcal S_j}
\frac{\alpha}{\eta_J}
\log\frac{2TK}{\varepsilon}
-\Gamma_j
-2\alpha\sqrt{T\log\frac1\varepsilon}
-\alpha\log\frac1\varepsilon.
\end{align*}
The two events each fail with conditional probability at most
$\varepsilon$, so their intersection has conditional probability
at least $1-2\varepsilon$. This proves
\pref{eq:selected-estimate-control}.
\end{proof}

%% file: references.bib
@inproceedings{beygelzimer2011contextual,
  title={Contextual bandit algorithms with supervised learning guarantees},
  author={Beygelzimer, Alina and Langford, John and Li, Lihong and Reyzin, Lev and Schapire, Robert},
  booktitle={International Conference on Artificial Intelligence and Statistics},
  year={2011},
}

@article{abbasi2023newlook,
  author = {Abbasi-Yadkori, Yasin and Gy{\"o}rgy, Andr{\'a}s and Lazi{\'c}, Nevena},
  title = {A New Look at Dynamic Regret for Non-Stationary Stochastic Bandits},
  journal = {Journal of Machine Learning Research},
  volume = {24},
  number = {288},
  year = {2023},
}

@inproceedings{audibert2009minimax,
  author = {Audibert, Jean-Yves and Bubeck, S{\'e}bastien},
  title = {Minimax Policies for Adversarial and Stochastic Bandits},
  booktitle = {Conference on Learning Theory},
  year = {2009},
}

@article{auer2002nonstochastic,
  author = {Auer, Peter and Cesa-Bianchi, Nicol{\`o} and Freund, Yoav and Schapire, Robert E.},
  title = {The Nonstochastic Multiarmed Bandit Problem},
  journal = {SIAM Journal on Computing},
  volume = {32},
  number = {1},
  year = {2002},
}

@inproceedings{auer2019dynamic,
  author = {Auer, Peter and Chen, Yifang and Gajane, Pratik and Lee, Chung-Wei and Luo, Haipeng and Ortner, Ronald and Wei, Chen-Yu},
  title = {Achieving Optimal Dynamic Regret for Non-stationary Bandits without Prior Information},
  booktitle = {Conference on Learning Theory},
  year = {2019},
}

@inproceedings{auer2019tracking,
  author = {Auer, Peter and Gajane, Pratik and Ortner, Ronald},
  title = {Adaptively Tracking the Best Bandit Arm with an Unknown Number of Distribution Changes},
  booktitle = {Conference on Learning Theory},
  year = {2019},
}

@inproceedings{besbes2014stochastic,
  author = {Besbes, Omar and Gur, Yonatan and Zeevi, Assaf},
  title = {Stochastic Multi-Armed-Bandit Problem with Non-stationary Rewards},
  booktitle = {Advances in Neural Information Processing Systems},
  year = {2014},
}

@article{besbes2015nonstationary,
  author = {Besbes, Omar and Gur, Yonatan and Zeevi, Assaf},
  title = {Non-Stationary Stochastic Optimization},
  journal = {Operations Research},
  volume = {63},
  number = {5},
  year = {2015},
}

@inproceedings{chen2019nonstationary,
  author = {Chen, Yifang and Lee, Chung-Wei and Luo, Haipeng and Wei, Chen-Yu},
  title = {A New Algorithm for Non-stationary Contextual Bandits: Efficient, Optimal and Parameter-free},
  booktitle = {Conference on Learning Theory},
  year = {2019},
}

@inproceedings{chen2021combinatorial,
  author = {Chen, Wei and Wang, Liwei and Zhao, Haoyu and Zheng, Kai},
  title = {Combinatorial Semi-Bandit in the Non-Stationary Environment},
  booktitle = {Conference on Uncertainty in Artificial Intelligence},
  year = {2021},
}

@inproceedings{cheung2019learning,
  author = {Cheung, Wang Chi and Simchi-Levi, David and Zhu, Ruihao},
  title = {Learning to Optimize under Non-Stationarity},
  booktitle = {International Conference on Artificial Intelligence and Statistics},
  year = {2019},
}

@inproceedings{cheung2020nonstationarymdp,
  author = {Cheung, Wang Chi and Simchi-Levi, David and Zhu, Ruihao},
  title = {Reinforcement Learning for Non-Stationary {M}arkov Decision Processes: The Blessing of ({M}ore) Optimism},
  booktitle = {International Conference on Machine Learning},
  year = {2020},
}

@article{cheung2021hedging,
  author = {Cheung, Wang Chi and Simchi-Levi, David and Zhu, Ruihao},
  title = {Hedging the Drift: Learning to Optimize Under Nonstationarity},
  journal = {Management Science},
  volume = {68},
  number = {3},
  year = {2022},
}

@inproceedings{faury2021technical,
  author = {Faury, Louis and Russac, Yoan and Abeille, Marc and Calauz{\`e}nes, Cl{\'e}ment},
  title = {A Technical Note on Non-Stationary Parametric Bandits: Existing Mistakes and Preliminary Solutions},
  booktitle = {Algorithmic Learning Theory},
  year = {2021},
}

@inproceedings{luo2018efficient,
  author = {Luo, Haipeng and Wei, Chen-Yu and Agarwal, Alekh and Langford, John},
  title = {Efficient Contextual Bandits in Non-stationary Worlds},
  booktitle = {Conference on Learning Theory},
  year = {2018},
}

@inproceedings{luo2022corralling,
  author = {Luo, Haipeng and Zhang, Mengxiao and Zhao, Peng and Zhou, Zhi-Hua},
  title = {Corralling a Larger Band of Bandits: A Case Study on Switching Regret for Linear Bandits},
  booktitle = {Conference on Learning Theory},
  year = {2022},
}

@inproceedings{mao2021nearoptimal,
  author = {Mao, Weichao and Zhang, Kaiqing and Zhu, Ruihao and Simchi-Levi, David and Basar, Tamer},
  title = {Near-Optimal Model-Free Reinforcement Learning in Non-Stationary Episodic {MDP}s},
  booktitle = {International Conference on Machine Learning},
  year = {2021},
}

@inproceedings{marinov2021pareto,
  author = {Marinov, Teodor Vanislavov and Zimmert, Julian},
  title = {The {Pareto} Frontier of Model Selection for General Contextual Bandits},
  booktitle = {Advances in Neural Information Processing Systems},
  year = {2021},
}

@misc{qian2026simultaneous,
  author = {Qian, Jian and Wei, Chen-Yu},
  title = {Achieving Optimal Static and Dynamic Regret Simultaneously in Bandits with Deterministic Losses},
  year = {2026},
  eprint = {2602.07418},
  howpublished = {arXiv:2602.07418},
}

@inproceedings{rumi2026parameterfree,
  author = {Rumi, Alberto and Jacobsen, Andrew and Cesa-Bianchi, Nicol{\`o} and Vitale, Fabio},
  title = {Parameter-Free Dynamic Regret for Unconstrained Linear Bandits},
  booktitle = {International Conference on Artificial Intelligence and Statistics},
  year = {2026},
}

@inproceedings{russac2019weighted,
  author = {Russac, Yoan and Vernade, Claire and Capp{\'e}, Olivier},
  title = {Weighted Linear Bandits for Non-Stationary Environments},
  booktitle = {Advances in Neural Information Processing Systems},
  year = {2019},
}

@inproceedings{russac2021selfconcordant,
  author = {Russac, Yoan and Faury, Louis and Capp{\'e}, Olivier and Garivier, Aur{\'e}lien},
  title = {Self-Concordant Analysis of Generalized Linear Bandits with Forgetting},
  booktitle = {International Conference on Artificial Intelligence and Statistics},
  year = {2021},
}

@inproceedings{suk2022tracking,
  author = {Suk, Joe and Kpotufe, Samory},
  title = {Tracking Most Significant Arm Switches in Bandits},
  booktitle = {Conference on Learning Theory},
  year = {2022},
}

@inproceedings{suk2023tracking,
  author = {Suk, Joe and Kpotufe, Samory},
  title = {Tracking Most Significant Shifts in Nonparametric Contextual Bandits},
  booktitle = {Advances in Neural Information Processing Systems},
  year = {2023},
}

@article{wang2025adaptivity,
  author = {Wang, Yining},
  title = {On Adaptivity in Nonstationary Stochastic Optimization with Bandit Feedback},
  journal = {Operations Research},
  volume = {73},
  number = {2},
  year = {2025},
}

@inproceedings{wei2021nonstationary,
  author = {Wei, Chen-Yu and Luo, Haipeng},
  title = {Non-stationary Reinforcement Learning without Prior Knowledge: an Optimal Black-box Approach},
  booktitle = {Conference on Learning Theory},
  year = {2021},
}

@article{zhao2021bandit,
  author = {Zhao, Peng and Wang, Guanghui and Zhang, Lijun and Zhou, Zhi-Hua},
  title = {Bandit Convex Optimization in Non-stationary Environments},
  journal = {Journal of Machine Learning Research},
  volume = {22},
  number = {125},
  year = {2021},
}

@inproceedings{neu2015explore,
  author = {Neu, Gergely},
  title = {Explore no more: Improved high-probability regret bounds for non-stochastic bandits},
  booktitle = {Advances in Neural Information Processing Systems},
  year = {2015},
}
